\documentclass[a4paper,fleqn]{cas-sc}

\usepackage{amsmath}
\definecolor{cite_color}{rgb}{0.0, 0.58, 0.71}
\definecolor{db}{rgb}{0.0, 0.2, 0.7}
\usepackage{natbib}
\setcitestyle{authoryear,open={},close={}} 
\usepackage{hyperref}
\hypersetup{colorlinks=true,citecolor=cite_color,linkcolor=cite_color}
\usepackage{amsthm}
\usepackage{subfigure}
\usepackage{float}
\usepackage{pifont}
\newcommand{\cmark}{\ding{51}} 
\newcommand{\xmark}{\ding{55}} 
\usepackage[labelfont=it,textfont=it]{caption}

\newtheorem{thm}{Theorem}
\newtheorem{lem}[thm]{Lemma}
\newtheorem{prop}[thm]{Proposition}

\newtheorem{defn}{Definition}

\newtheorem{rem}{Remark}

\def\tsc#1{\csdef{#1}{\textsc{\lowercase{#1}}\xspace}}
\tsc{WGM}
\tsc{QE}
\tsc{EP}
\tsc{PMS}
\tsc{BEC}
\tsc{DE}

\usepackage{graphicx}
\usepackage{subcaption}
\renewcommand{\figurename}{Fig.}

\makeatletter
\renewcommand*{\fnum@figure}[1]{\figurename~\thefigure.}
\makeatother

\begin{document}
\let\WriteBookmarks\relax
\let\printorcid\relax 
\setlength{\skip\footins}{24pt}
\def\floatpagepagefraction{1}
\def\textpagefraction{.001}
\shorttitle{}
\shortauthors{Sixu Li et~al.}

\title [mode = title]{Lateral String Stability for Vehicle Platoons: Formulation, Definition, and Analysis}   
\begin{keywords}
String stability \sep Lateral control \sep Automated vehicles \sep Vehicle platoons   \sep Disturbance propagation
\end{keywords}

\author[1]{\textcolor{black}{Sixu Li}}

\credit{ Conceptualization, Methodology, Writing – original draft, Data curation, Software, Writing – review \& editing}

\author[2]{\textcolor{black}{Swaroop Darbha}}
\cormark[1]

\ead{dswaroop@tamu.edu}

\credit{Conceptualization, Methodology, Writing – review \& editing, Supervision}

\author[1]{\textcolor{black}{Yang Zhou}}
\cormark[1]

\ead{yangzhou295@tamu.edu}

\credit{Conceptualization, Writing – review \& editing, Supervision}

\address[1]{Zachry Department of Civil $\&$ Environmental Engineering, Texas A$\&$M University, College Station, TX 77843, USA}

\address[2]{J. Mike Walker '66 Department of Mechanical Engineering, Texas A$\&$M University, College Station, TX 77843, USA}

\cortext[cor1]{Corresponding authors}

\maketitle
\begin{abstract}
Platooning of connected and automated vehicles (CAVs) provides significant benefits in terms of energy efficiency, traffic throughput, and, most critically, safety. These safety benefits depend on string stability, which dictates how disturbances propagate along a vehicle string. Although longitudinal string stability has been extensively examined, lateral string stability, which governs the propagation of path-tracking errors that can lead to unsafe deviations from the desired path, remains underexplored. Its importance is growing as autonomous vehicles increasingly depend on onboard sensing and map‑free navigation, where sensor occlusions and tight formations amplify safety risks.
This paper presents a framework for lateral string stability that focuses directly on safety‑critical, path‑relative tracking errors and enables consistent comparison across vehicles that follow the same planned path. The key element of the framework is an arc‑length (Eulerian) viewpoint, a departure from traditional analyses, that clarifies how tracking errors at a given point on the path propagate from one vehicle to the next. Building on this foundation, we propose the definition of $\mathcal{L}_2$ lateral string stability along with two control strategies: a feedback-feedforward strategy that relies solely on onboard sensing, and a novel learn‑from‑predecessor strategy that makes use of vehicle‑to‑vehicle (V2V) communication. Both strategies are analyzed for lateral string stability with respect to two error measures: tracking error vector and lateral (cross-track) error. Our results show that onboard sensing alone cannot guarantee attenuation of path-tracking errors, imposing a fundamental safety limitation, while V2V communication enables true error attenuation. The analysis further identifies structural controller requirements, showing that nonzero feedback on specific measurements is essential for guaranteeing stability. The theoretical results are validated via simulation.
\end{abstract}

\section{Introduction}
Disturbance propagation in traffic flows plays a crucial role in stability, safety, and efficiency. This behavior is characterized by \textit{string stability}, which indicates whether disturbances attenuate as they travel along a vehicle platoon. The idea was first introduced in early work on decentralized control by \citet{chu1974decentralized}, and was subsequently formalized by \citet{swaroop1994string,swaroop2002string} to study how spacing errors propagate in automated vehicle (AV) platoons within the California PATH program. Since then, longitudinal string stability has been widely investigated under various control architectures and information flow topologies (\citep{swaroop1999constant,seiler2004disturbance,zheng2015stability}), and further generalized to capture communication imperfections such as packet loss (\citep{vegamoor2021string,yang2025robust}), noise (\citep{ma2024selection}), and time delays (\citep{ma2025robust}). The scope of applications has broadened to include mixed traffic settings (\citep{jin2014dynamics,jin2016optimal}) and human-driven vehicles (\citep{tian2025physically}), while theoretical generalizations have proposed $\mathcal{L}_p$-based definitions (\citep{ploeg2013lp}) and accounted for external disturbances acting on each vehicle in the platoon (\citep{besselink2017string}). In addition, the interplay between string stability, traffic flow, and the emergence of oscillations has been examined (\citep{mattas2023relationship, montanino2021string,sun2020relationship}).

In contrast, \textit{lateral string stability}, which determines how steering-induced path-relative tracking errors propagate, has received far less attention, even though it is critical for the safe operation of AVs. As AVs move away from infrastructure-dependent solutions (e.g., using magnet-guided lanes; \citealp{fenton1976steering}) toward onboard perception and \textit{map-free navigation} (\citep{ort2018autonomous,renz2025simlingo}), the lateral control aspect of platooning becomes both more significant and more demanding.

A key motivation stems from \textit{sensor occlusion} in platoons: a vehicle in front can block the field of view of cameras, LiDARs, and other sensors on the vehicle behind (\citep{zhang2021safe,yu2019occlusion}). Such occlusion degrades perception and makes it difficult for following vehicles to detect obstacles, lane markings, and other hazards in a timely manner. An illustrative example is shown in Fig. \ref{fig: scenario}, where a platoon encounters an obstacle and must maneuver around it safely. When the lead vehicle detects the obstacle, it can plan a safe avoidance path, as illustrated in Fig. \ref{fig: scenario}(a). By contrast, due to sensor occlusion, the second vehicle may not detect the obstacle until the lead vehicle has moved sufficiently far ahead, at which point it may be too late to plan and track a similarly safe path, as illustrated in Fig. \ref{fig: scenario}(b). This delayed perception effect can further accumulate downstream as the platoon size increases.
\begin{figure}[h]
    \centering
    \subfigure[safe avoidance by the lead vehicle]{\includegraphics[width=0.3\textwidth]{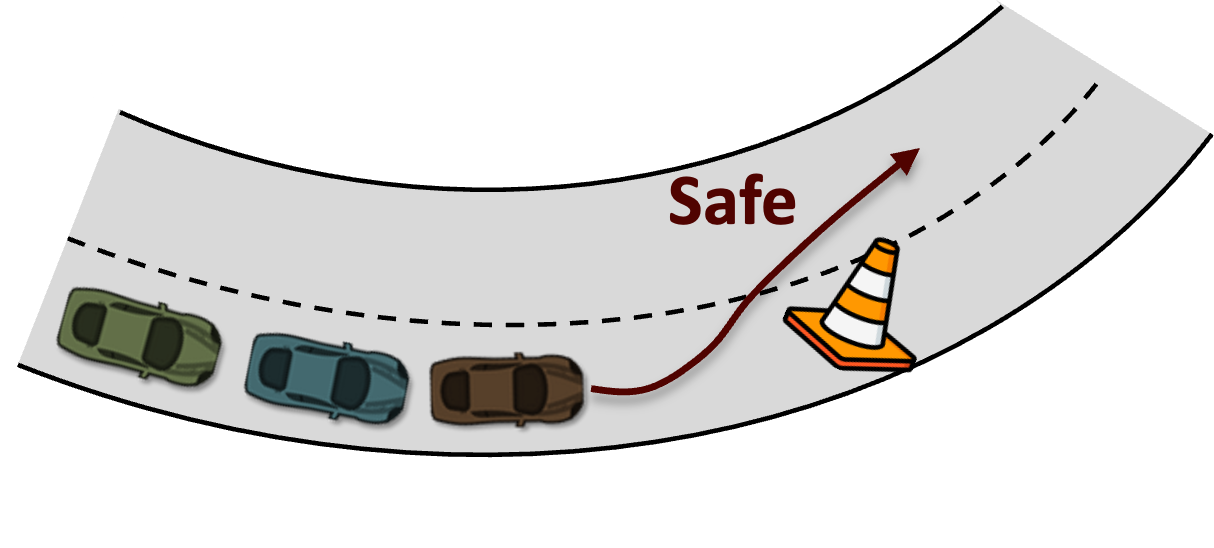}}
    \hspace{0.05\textwidth}
    \subfigure[unsafe response by the following vehicle]{\includegraphics[width=0.3\textwidth]{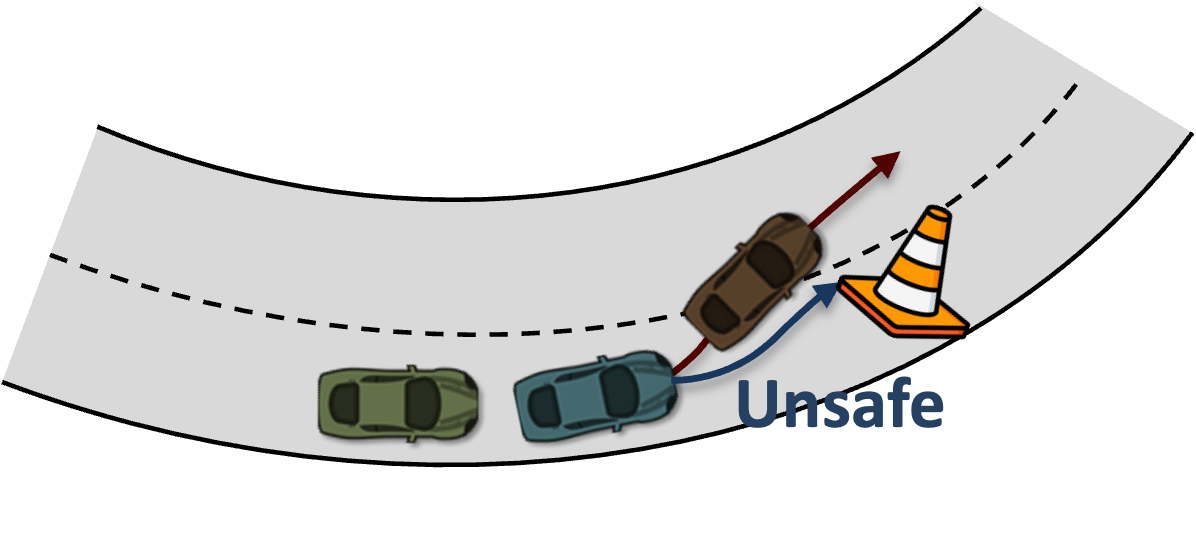}}
    \caption{Illustration of a sensor-occlusion scenario in a vehicle platoon}
    \label{fig: scenario}
\end{figure}

A practical solution to this issue is to let the lead vehicle plan a local path for the entire platoon (\citep{hassanain2020string,liu2020robust}), with following vehicles tracking sensor-recorded predecessor paths or receiving shared data via vehicle-to-vehicle (V2V) communication. However, under such schemes, the motion of one vehicle directly influences the next, thereby coupling the lateral motions of vehicles across the platoon. As a result, the relevant question is no longer only whether each vehicle can track its assigned path accurately in isolation, but also whether path-tracking errors attenuate or amplify as they propagate downstream. This motivates the study of lateral string stability in vehicle platoons. As a first step, a rigorous definition of lateral string stability is needed. Recent work has started to tackle this issue. \citet{alleleijn2014lateral} initially introduced the concept but did not supply a formal definition or detailed analysis. \citet{hassanain2020string} characterized lateral string stability using $\mathcal{H}_\infty$ norms of transfer functions, whereas \citet{liu2020robust} used $\mathcal{L}_\infty$ attenuation of tracking error vectors and showed that relying solely on predecessor information is inadequate. \citet{somisetty2024lateral} defined lateral string stability in terms of bounded tracking errors and illustrated the advantages of incorporating leader information. Nonetheless, these studies share key limitations:
\begin{itemize}
   \item A general definition of lateral string stability that is both rigorous and easy to apply in practice is still lacking.
    \item Existing studies typically examine particular signals (such as yaw rate or predecessor-relative errors). Yet, these quantities do not directly bound path-relative tracking errors, which are clearly critical for safety.
    \item Many previous works restrict attention to straight-line paths and do not systematically investigate the influence of different control designs or information topologies.
\end{itemize}

A central challenge is that lateral safety is fundamentally spatial: Since the road geometry and physical boundaries (e.g., path curvature, lane boundaries, and curbs) vary along the path, what matters is whether tracking errors attenuate as vehicles pass through the same location on the path. This corresponds to a position-based (Eulerian) viewpoint. However, vehicles move along the path asynchronously, so comparing errors at the same time instant typically compares different locations and curvatures, thereby conflating controller performance with geometric complexity. This motivates reparameterizing the dynamics with respect to path arc-length, so that tracking errors (and control actions) are represented as functions of distance along the path. Fig.~\ref{fig: reparameterization} illustrates the difference between comparing lateral (cross-track) errors $e_{lat}$ at a common time instant $t_0$ versus a common arc-length $l_0$, using a two-vehicle platoon as an example.
\begin{figure}[h]
    \centering
    \subfigure[time-based]{\includegraphics[width=0.55\textwidth]{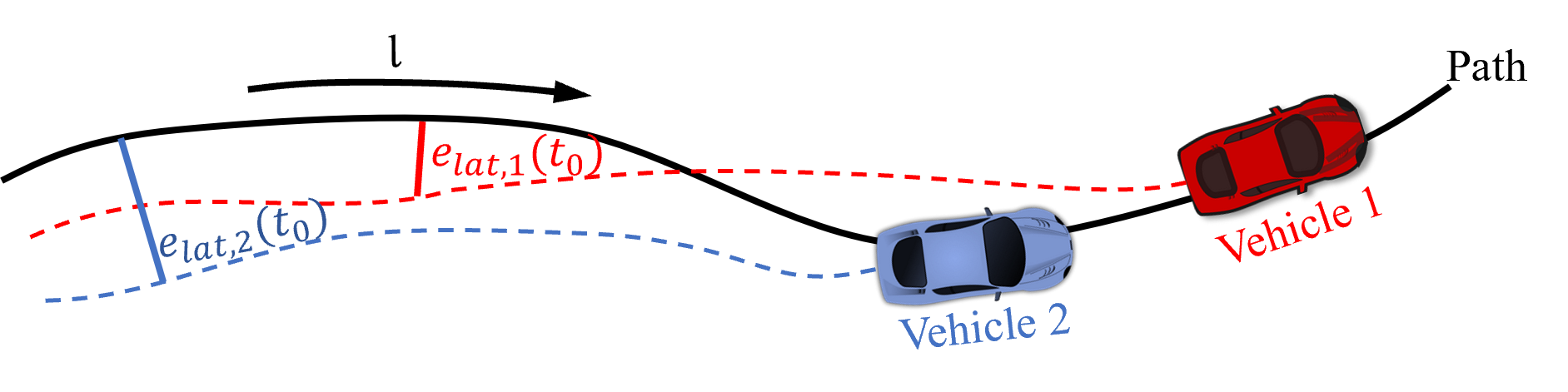}}
    \subfigure[arc-length-based]{\includegraphics[width=0.6\textwidth]{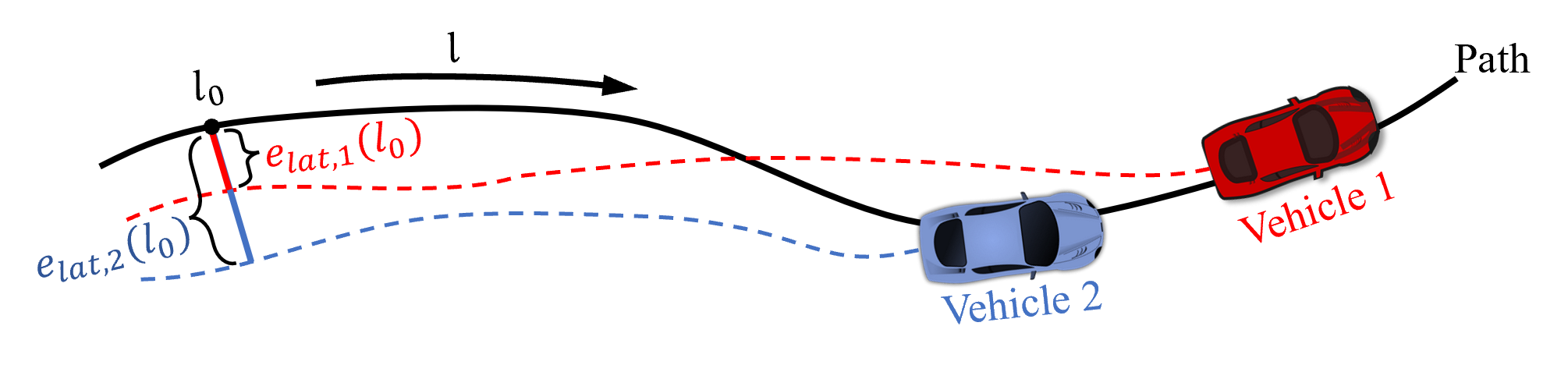}}
    \caption{Lateral error representations}
    \label{fig: reparameterization}
\end{figure}

This paper addresses the identified gaps through the following key contributions:
\begin{enumerate}
    \item A systematic formulation of the lateral control problem for vehicle platoons using arc-length reparameterization.
    \item A novel \textit{learn-from-predecessor} control strategy that improves tracking error attenuation.
    \item A definition of $\mathcal{L}_2$ lateral string stability.
    \item A general necessary and sufficient condition for $\mathcal{L}_2$ lateral string stability.
    \item A comprehensive analysis of different combinations of tracking schemes, control strategies, and error measures, including impossibility results and insights derived for controller design.
\end{enumerate}

The remainder of this paper is organized as follows. Section~\ref{sec3} formulates the lateral control problem for vehicle platoons. Section~\ref{sec4} proposes the formal definition of $\mathcal{L}_2$ lateral string stability and outlines a necessary and sufficient condition for it. Section~\ref{sec5} provides in-depth analyses of $\mathcal{L}_2$ lateral string stability. Section~\ref{sec6} validates the theoretical results through simulation. Section~\ref{sec7} concludes the paper.

\section{Formulation of the lateral control problem for vehicle platoons}
\label{sec3}

In this section, we formulate the lateral control problem for vehicle platoons. A schematic illustration is shown in Fig.~\ref{fig: platoon_setting}. We consider a finite platoon of $m$ vehicles, each tasked with tracking the same path by controlling its steering angle. The availability of path information may vary across vehicles.

\begin{figure}[htb!]
    \centering
    \includegraphics[width=0.6\linewidth]{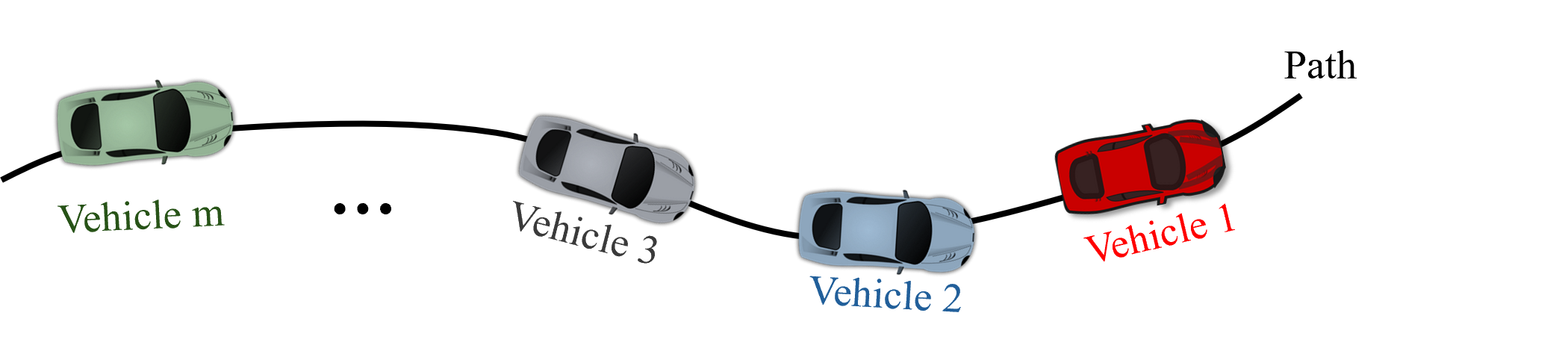}
    \caption{Schematic of the vehicle platoon lateral control problem}
    \label{fig: platoon_setting}
\end{figure}

To provide a systematic formulation, we introduce several key components:
\begin{itemize}
\item Subsection~\ref{sec2.2} introduces the tracking error dynamics of a vehicle.
    \item Subsection~\ref{sec3.1} derives an arc-length-based reparameterization of the tracking error model, enabling consistent comparison of errors across vehicles.
    \item Subsection~\ref{sec3.2} clarifies the distinction between the desired path and the reference path.
    \item Subsections~\ref{sec3.3} and~\ref{sec3.4} formulate two platoon lateral control modes based on information acquisition. The onboard-sensing mode employs feedback-feedforward control, whereas the V2V mode uses a proposed novel \textit{learn-from-predecessor} control strategy.
    \item Subsection~\ref{sec3.5} discusses the inherent robustness of platoon lateral control against time delays.
    \item Subsection~\ref{sec3.7} summarizes the control strategies in a comparative table.
\end{itemize}

\subsection{Tracking error dynamics model}
\label{sec2.2}

The dynamics of each vehicle in the platoon are described by the bicycle dynamics model, which is a widely used abstraction for planar vehicle dynamics that combines each axle into a single wheel (see, e.g., \citep{rajamani2011vehicle,liu2020lateral}). It represents the key aspects of lateral motion and yaw with adequate accuracy for control purposes. For path tracking controller design, it is useful to express the bicycle dynamics in terms of position and heading errors relative to the path. As shown in Fig.~\ref{fig: representation_heading_position_errors}, let $(X_v, Y_v)$ be the position of the vehicle's center of gravity (C.G.), and $(X_0, Y_0)$ its orthogonal projection onto the path. Define the lateral (cross-track) error $e_{lat}$ and heading error $\tilde{\theta} := \theta - \theta_R$, where $\theta_R$ is the desired heading angle of the path at $(X_0, Y_0)$. Noting that for simplicity, we use dot notation to denote time derivatives (e.g., $\dot{x}(t) = \frac{dx(t)}{dt}$), from kinematics (\citep{rajamani2011vehicle}):
\begin{equation}
\label{eq: d_theta_R}
\dot{\theta}_R = \frac{v_x}{R} = v_x \kappa,
\end{equation}
where $R$ is the instantaneous turning radius and $\kappa = \frac{1}{R}$ is the curvature of the path.

\begin{figure}[htb!]
    \centering
    \includegraphics[width=0.25\linewidth]{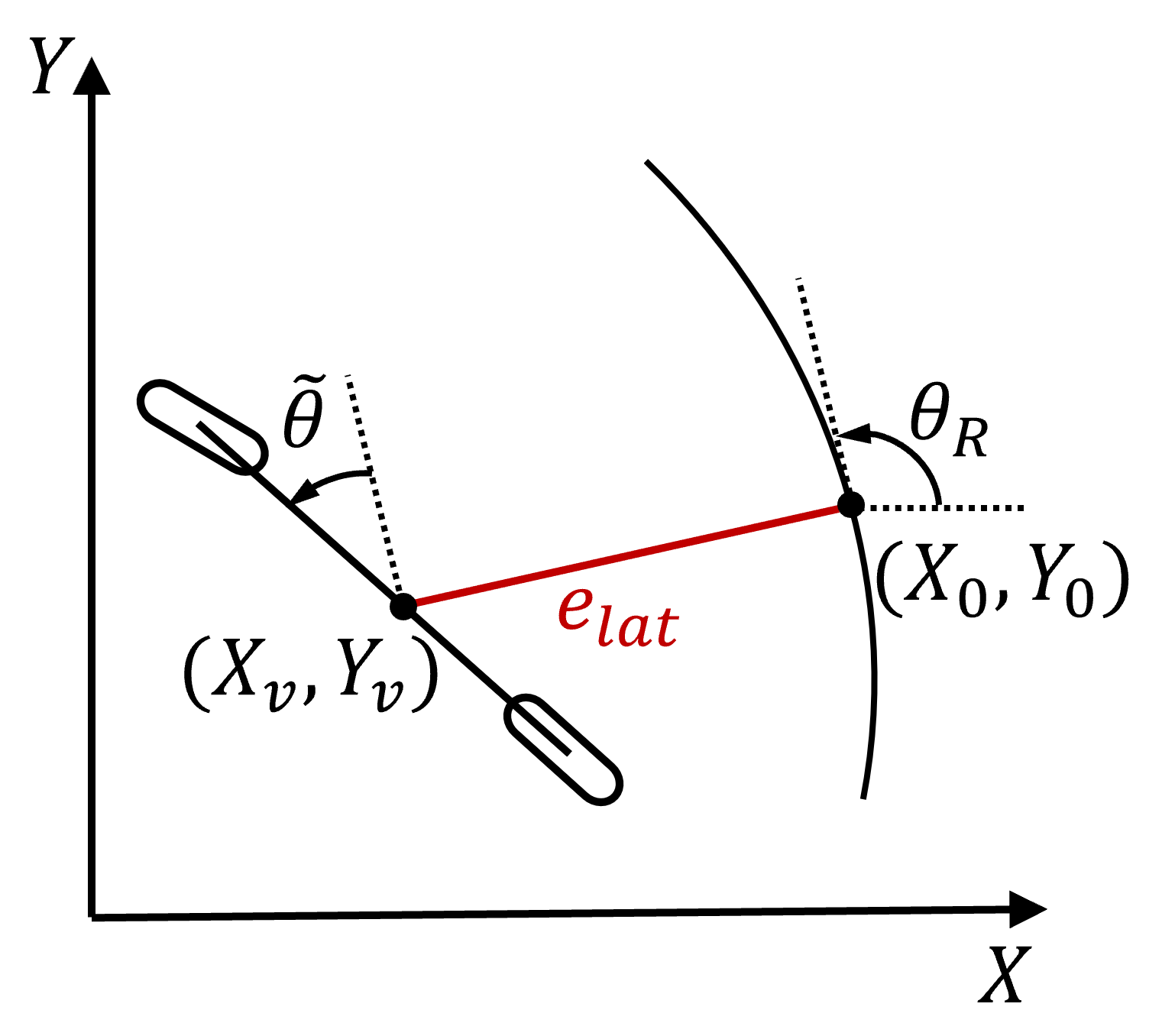}
    \caption{Heading and position error representation}
    \label{fig: representation_heading_position_errors}
\end{figure}

Under standard assumptions (\citep{liu2020lateral})—including constant longitudinal speed, small heading errors, negligible $\dot{R}$, large turning radius relative to $e_{lat}$, and omission of higher-order terms—the error dynamics derived are (\citep{rajamani2011vehicle,liu2020lateral}):

\begin{equation}
\label{eq: governing}
\mathbf{M} \ddot{\mathbf{e}}(t) + \mathbf{C} \dot{\mathbf{e}}(t) + \mathbf{L} \mathbf{e}(t) = \mathbf{B} u(t) - \mathbf{F} \kappa(t),
\end{equation}
where
\[
\mathbf{e} := \begin{bmatrix} e_{lat} \\ \tilde{ \theta} \end{bmatrix}, \quad
u := \delta_f, \quad
\mathbf{M} := \begin{bmatrix} m & 0 \\ 0 & I_z \end{bmatrix}, \quad
\mathbf{C} := \begin{bmatrix} \frac{C_f + C_r}{v_x} & \frac{a C_f - b C_r}{v_x} \\ \frac{a C_f - b C_r}{v_x} & \frac{a^2 C_f + b^2 C_r}{v_x} \end{bmatrix},   \quad
\mathbf{L} := \begin{bmatrix} 0 & -(C_f + C_r) \\ 0 & -(a C_f - b C_r) \end{bmatrix},
\]
\begin{align*}
\mathbf{B} := \begin{bmatrix} C_f \\ a C_f \end{bmatrix}, \quad
\mathbf{F} := \begin{bmatrix} mv_x^2 + a C_f - b C_r \\[1ex] a^2 C_f + b^2 C_r. \end{bmatrix}.
\end{align*}
Here, $\delta_{f}$ denotes the steering angle of the front wheels, while $a$ and $b$ are the distances from the C.G. to the front and rear axles, respectively. The coefficients $C_f$ and $C_r$ denote the cornering stiffness of the front and rear tires. The symbol $m$ denotes the vehicle mass, and $I_z$ is the moment of inertia about the vertical axis.

\subsection{Arc-length-based reparameterization of the tracking error model}
\label{sec3.1}

To analyze lateral string stability, we must compare tracking errors across vehicles. Since the road geometry and physical boundaries (e.g., path curvature $\kappa$, lane boundaries, and curbs) are inherently spatial properties, meaningful comparisons require evaluating errors at the same spatial location along the path. Thus, it is natural to reparameterize the tracking error model using the accumulated arc-length of the path, denoted by $l$.

This approach parallels reparameterization techniques used in \citep{besselink2017string} for longitudinal control. Fig.~\ref{fig: reparameterization} illustrates the difference between comparing lateral errors at a common time versus a common arc length, using a two-vehicle platoon as an example. The black solid line represents a curved path, with the black arrow indicating the direction in which the arc length $l$ increases along the path. The red and blue dashed lines represent the paths traveled by vehicle 1 and vehicle 2, respectively. In Fig.~\ref{fig: reparameterization}(a), the lateral errors of the two vehicles, $e_{lat,1}$ and $e_{lat,2}$, are shown at the same time instant $t_0$. Clearly, $e_{lat,1}(t_0)$ and $e_{lat,2}(t_0)$ are measured at different spatial locations along the path, due to the follower maintaining a longitudinal distance from its predecessor. In contrast, Fig.~\ref{fig: reparameterization}(b) shows $e_{lat,1}(l_0)$ and $e_{lat,2}(l_0)$, which are both evaluated with respect to the same spatial location $l_0$ on the path, making them suitable for direct comparison.

To derive the reparameterization, we project the vehicle's velocity onto the path tangent:
\begin{equation}
\label{eq: velocity proj}
    \frac{dl}{dt} = v_x \cos(\tilde{\theta}) - v_y \sin(\tilde{\theta}).
\end{equation}
Fig.~\ref{fig: Vel_projection} illustrates a schematic of this projection. Under the commonly adopted small heading error assumption in the literature (\citep{rajamani2011vehicle,liu2020lateral}), Eq. \eqref{eq: velocity proj} reduces to:
\begin{equation}
    \frac{dl}{dt} \approx v_x.
\end{equation}

\begin{figure}[htb!]
    \centering
    \includegraphics[width=0.27\linewidth]{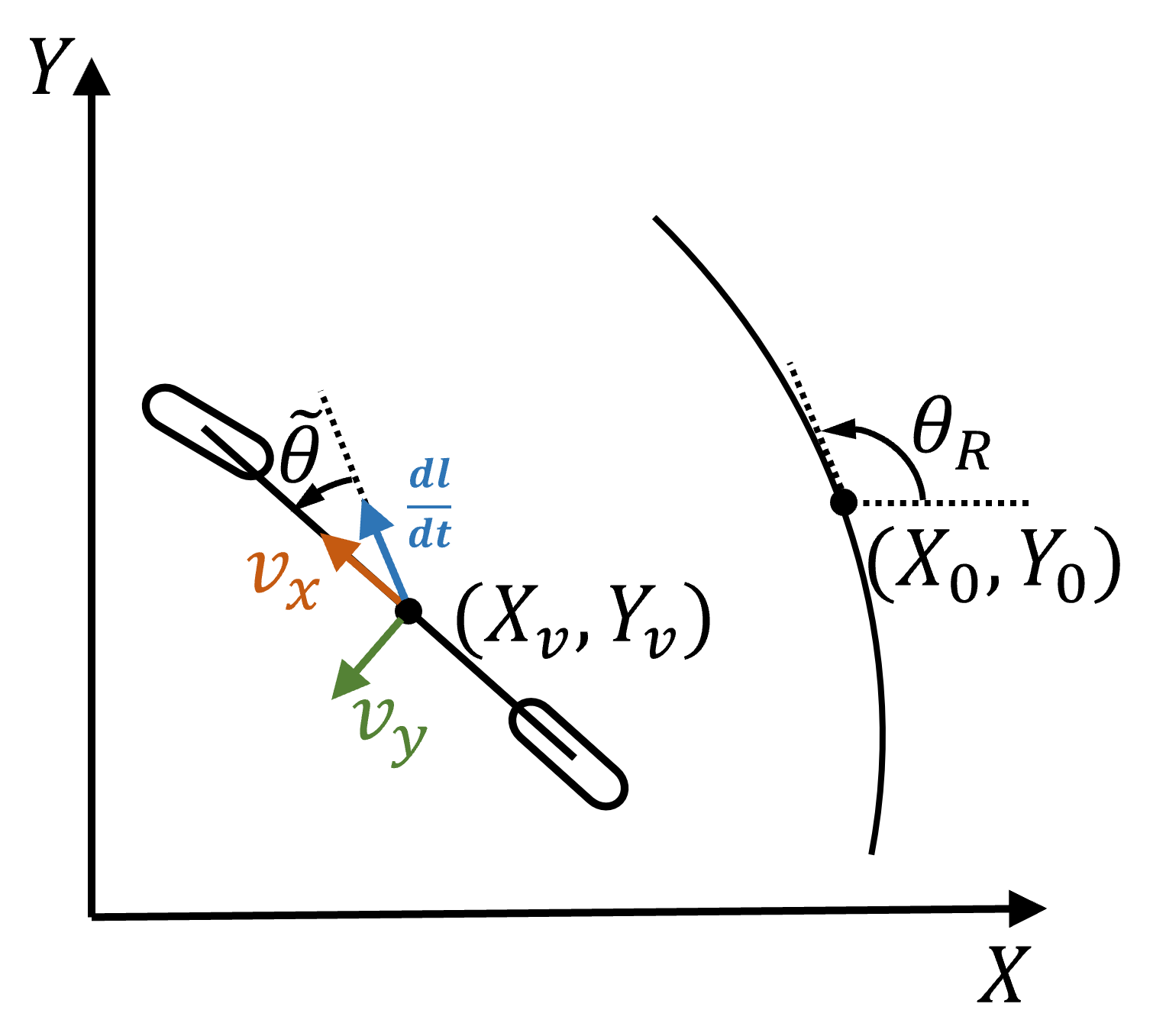}
    \caption{Projection of $v_x$, $v_y$ onto the path tangent}
    \label{fig: Vel_projection}
\end{figure}

Using the above equation and letting the prime notation denote arc-length derivatives ($x' = \tfrac{dx}{dl}$), we obtain:
\begin{align}
\label{eq: dot_e_reparam}
\dot{\mathbf{e}} &= \frac{d\mathbf{e}}{dl} \cdot \frac{dl}{dt} = \mathbf{e}' v_x,  \\
\label{eq: ddot_e_reparam}
\ddot{\mathbf{e}} &= \frac{d}{dt} \left( \frac{d\mathbf{e}}{dl} \cdot \frac{dl}{dt} \right)= \mathbf{e}'' v_x^2 + \mathbf{e}' \dot{v}_x \approx \mathbf{e}'' v_x^2,
\end{align}
where the last equality relies on the commonly adopted assumption of constant $v_x$ in the literature (\citep{rajamani2011vehicle,liu2020lateral}).

Substituting Eqs.~\eqref{eq: dot_e_reparam}–\eqref{eq: ddot_e_reparam} into Eq.~\eqref{eq: governing} yields the arc-length–based governing equation:
\begin{equation}
\label{arc governing}
v_x^2 \mathbf{M} \mathbf{e}''(l) + v_x \mathbf{C} \mathbf{e}'(l) + \mathbf{L} \mathbf{e}(l)
= \mathbf{B} u(l) - \mathbf{F} \kappa(l).
\end{equation}

While the reparameterization process in this subsection applies to any path, the notion of “the path” becomes ambiguous in a platoon context. The next subsection clarifies this by distinguishing between the concepts of the desired and reference path.

\subsection{Desired path versus reference path}
\label{sec3.2}
In the context of lateral control for vehicle platoons, it is important to distinguish between the concepts of the \textbf{\textit{desired path}} and the \textbf{\textit{reference path}}. While these two terms coincide in the single-vehicle setting, they may differ in a platoon scenario due to variations in information availability.

We define the \textbf{\textit{desired path}} as the ideal path that all vehicles in the platoon are expected to track. It is the same for the entire platoon and typically reflects the output of a path planning algorithm of the lead vehicle. The tracking error of the $i^\text{th}$ vehicle relative to the desired path is denoted by $\mathbf{e}_i^{des}$, and the curvature of the desired path is denoted by $\kappa^{des}$. Importantly, $\mathbf{e}_i^{des}$ represents the error we ultimately care about, and it is the error we seek to regulate or analyze in lateral string stability and performance assessment. The time-domain evolution of $\mathbf{e}_i^{des}$ can be described analogously to Eq.~\eqref{eq: governing} as:
\begin{align}
\label{eq: i_th_vechile_governing}
\mathbf{M} \ddot{\mathbf{e}}_i^{des}(t) + \mathbf{C} \dot{\mathbf{e}}_i^{des}(t) + \mathbf{L} \mathbf{e}_i^{des}(t) &= \mathbf{B} u_i(t) - \mathbf{F} \kappa^{des}(t)=\mathbf{B} u_i(t) - \tfrac{1}{v_x}\mathbf{F} \dot\theta^{des}(t),~~\forall i=1,2,\ldots,m,
\end{align}
where the second equality follows from Eq. \eqref{eq: d_theta_R}, and $\dot{\theta}^{des}$ denotes the desired yaw rate from the desired path. Assuming the small heading error condition holds for each vehicle relative to the desired path, we apply the reparameterization procedure from Subsection \ref{sec3.1} to Eqs. \eqref{eq: i_th_vechile_governing}–\eqref{eq: ith_veh_control}. This allows us to represent the error dynamics in terms of spatial arc length of the desired path, denoted $l_d$. Noting that $\dot\theta^{des}=\tfrac{d\theta^{des}}{dt}=\tfrac{d\theta^{des}}{dl_d}\tfrac{dl_d}{dt}=\left(\theta^{des}\right)'v_x$, we have
\begin{align}
\label{eq: i_th_governing_reparam}
v_x^2 \mathbf{M} \left( \mathbf{e}_i^{des}(l_d) \right)'' + v_x \mathbf{C} \left( \mathbf{e}_i^{des}(l_d) \right)' + \mathbf{L} \mathbf{e}_i^{des}(l_d)
= \mathbf{B} u_i(l_d) - \mathbf{F} \left(\theta^{des}(l_d)\right)',~~\forall i=1,2,\ldots,m,
\end{align} 

In contrast, the \textbf{\textit{reference path}} is the path used by each vehicle's controller for computing $u_i(l_d)$. This path may vary between vehicles depending on what information is locally available. For example, if the desired path is accessible to all vehicles through V2V communication, then each vehicle may use it as its reference path. However, if only the lead vehicle has access to the desired path, each following vehicle may instead record its predecessor's path using onboard sensors and use it as the reference. We will use $\mathbf{e}_i^{ref}$ to denote the tracking error of the $i^\text{th}$ vehicle relative to its reference path, and let $\kappa^{ref}_i$ and $\theta^{ref}_i$ denote the curvature and reference heading angle of that reference path, respectively. 

In the following two subsections, we differentiate between two information acquisition modes for the vehicles: onboard sensing and V2V communication. Each mode entails a different control strategy, as the available information—and therefore the reference path—differs between them.

\subsection{Onboard sensing mode using feedback-feedforward control}
\label{sec3.3}
In the onboard sensing mode, the lead vehicle directly tracks the desired path, while each following vehicle relies on its own sensors to record its predecessor's traveled path, expressed as X, Y coordinates and heading angle, as illustrated in Fig. \ref{fig: information flow_predecessor}. Specifically, a following vehicle can use onboard sensing modalities (such as stereo vision, LiDAR-based tracking, or camera–LiDAR fusion) to estimate the predecessor’s relative position and orientation, and then convert these into global X, Y, and heading by using its own GNSS-based pose estimate. 
\begin{figure}[h]
    \centering
    {\includegraphics[width=0.6\textwidth]{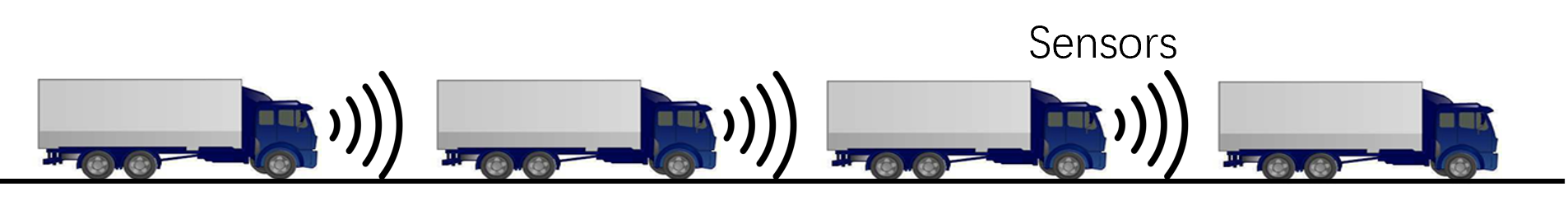}}
    \caption{Information flow under onboard sensing mode}
    \label{fig: information flow_predecessor}
\end{figure}

The lead vehicle uses the desired path as its reference path, while each following vehicle uses the path traveled by its immediate predecessor (recorded using onboard sensors) as its reference. We call this tracking scheme predecessor-tracking (PT). Fig.\ref{fig: ref_error_pred} illustrates the computation of reference lateral errors under PT. Heading errors are omitted for clarity. The black solid line represents the desired path, and the black arrow indicates the direction of increasing arc length $l_d$. The red and blue dashed lines represent the paths traveled by vehicles 1 and 2, respectively. Specifically, the reference error of the lead vehicle (vehicle 1) is measured relative to the desired path, whereas the reference error of the following vehicle (vehicle 2) is measured relative to the path traveled by its predecessor (vehicle 1). 
\begin{figure}[h]
    \centering
    {\includegraphics[width=0.7\textwidth]{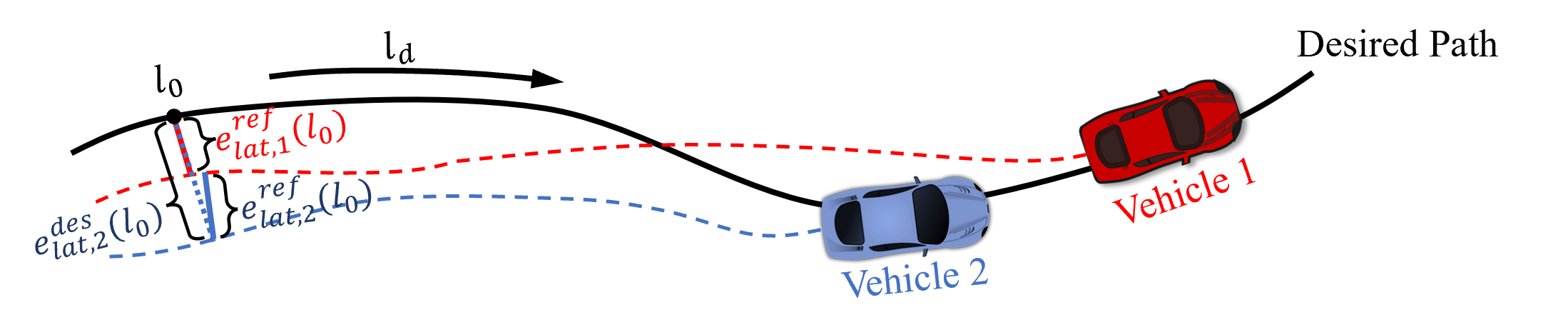}}
    \caption{Reference error computation under predecessor tracking}
    \label{fig: ref_error_pred}
\end{figure}

A \textbf{\textit{feedback–feedforward (FF)}} strategy, which is commonly adopted for single-vehicle lateral control and appreciated for its simplicity and practical effectiveness (\citep{peng1990vehicle,liu2020lateral,darbha2025robust}), is utilized for this mode. This strategy traces back to foundational work of \citep{fenton1976steering} and was subsequently advanced by \cite{guldner1999robust} in the area of robust lateral control. The FF strategy is chosen in this mode because each vehicle has access only to the reference path information.

The general form of the $i^\text{th}$ vehicle's FF control is given by:
\begin{align}
\label{eq: ith_veh_control}
u_i(t) &= -\mathbf{K_P} \mathbf{e}_i^{ref}(t) - \mathbf{K_D} \dot{\mathbf{e}}_i^{ref}(t) + \tfrac{k_{ff}}{v_x} \dot\theta^{ref}_i(t),~~\forall i=1,2,\ldots,m,
\end{align}
where the first two terms represent proportional–derivative (PD) feedback, and the last term is a feedforward component compensating for path curvature. The feedback gain vectors are defined as $\mathbf{K_P} = [k_{e_{lat}} \quad k_{\tilde{\theta}}]$ and $\mathbf{K_D} = [k_{\dot{e}_{lat}} \quad k_{\dot{\tilde{\theta}}}]$, while $k_{ff}$ is a scalar feedforward gain.

The feedback gains are chosen to ensure the stability of the closed-loop system when $\kappa \equiv 0$, i.e., for straight-line paths. Specifically, the characteristic polynomial of the closed-loop system derived from Eq.~\eqref{eq: governing} must be Hurwitz, ensuring that both $\mathbf{e}$ and $\dot{\mathbf{e}}$ converge to zero.
The feedforward term compensates for non-zero curvature. The choice of $k_{ff}$ depends on the design objective: \citet{peng1990vehicle} and \citet{liu2020lateral} selected $k_{ff}$ to match the vehicle’s steady-state yaw rate with the desired yaw rate $\dot{\theta}_R$, while \citet{rajamani2011vehicle} proposed tuning $k_{ff}$ to eliminate steady-state lateral error.

Similar to Subsection \ref{sec3.1}, we can reparameterize Eq. \eqref{eq: ith_veh_control} using $l_d$:
\begin{equation}
\label{eq: i_th_u_reparam}
u_i(l_d) = -\mathbf{K_P} \mathbf{e}_i^{ref}(l_d) - v_x \mathbf{K_D} \left( \mathbf{e}_i^{ref}(l_d) \right)'
+ k_{ff} \left(\theta^{ref}_i(l_d)\right)', ~~\forall i=1,2,\ldots,m.
\end{equation}

Note that in the controller implementation, $\mathbf{e}_i^{ref}$ and $(\theta_i^{ref})'$ are used for feedback and feedforward computation. However, for lateral safety, we are interested in the attenuation of $\mathbf{e}_i^{des}$. Hence, for analysis purposes, we need to relate $\mathbf{e}_i^{ref}$ and $\theta_i^{ref}$ to $\mathbf{e}^{des}_i$ and $\theta^{des}$. From Fig. \ref{fig: ref_error_pred}, observe that the blue solid line representing vehicle 2's lateral error relative to its reference path (the traveled path of vehicle 1) is perpendicular to the reference path, whereas the blue dashed line for vehicle 2's lateral error relative to the desired path is perpendicular to the desired path. Hence, under the assumption that vehicle 1 has a small heading error relative to its reference path (the desired path), which is consistent with the standard assumption outlined in Subsection \ref{sec2.2}, the path traveled by vehicle 1 is approximately locally parallel to the desired path. Consequently, the solid blue line and dashed blue line in Fig. \ref{fig: ref_error_pred} are approximately aligned in the same direction. Therefore, $e_{lat,2}^{des}(l_d)\approx e_{lat,1}^{ref}(l_d)+e_{lat,2}^{ref}(l_d)=e_{lat,1}^{des}(l_d)+e_{lat,2}^{ref}(l_d)$ and ${\tilde{\theta}}_2^{des}(l_d)\approx {\tilde{\theta}}_1^{ref}(l_d) + {\tilde{\theta}}_2^{ref}(l_d)={\tilde{\theta}}_1^{des}(l_d) + {\tilde{\theta}}_2^{ref}(l_d)$. Generalizing this relation to a multi-vehicle platoon, we have 
\begin{equation}
    \mathbf{e}_i^{ref}(l_d)=\begin{cases}
    \mathbf{e}_i^{des}(l_d)&\text{for }i=1,
    \\
    \\
    \mathbf{e}_i^{des}(l_d)- \mathbf{e}_{i-1}^{des}(l_d)&\text{for } i=2,3,\ldots,m.
    \end{cases}
\end{equation}
Moreover, by definition, the lead vehicle's reference heading angle corresponds to the heading angle specified by the desired path; whereas the reference heading angle of every following vehicle is the heading angle traveled by its predecessor, that is,
\begin{equation}
\theta^{ref}_i(l_d)=\begin{cases}
\theta^{des}(l_d)&\text{for }i=1,
\\
\\
\theta_{i-1}(l_d)=\theta^{des}(l_d)+\tilde{\theta}^{des}_{i-1}(l_d)=\theta^{des}(l_d)+[0~ 1]\mathbf{e}_{i-1}^{des}(l_d)&\text{for }i=2,3,\ldots,m. 
\end{cases}
\end{equation}
Hence, by Eq. \eqref{eq: i_th_u_reparam}, under PT, the formulation of the FF strategy for analysis purposes is given by
\begin{equation}
\label{eq: i_th_u_pred_track}
\small
u_i(l_d) =\begin{cases}
    -\mathbf{K_P} \mathbf{e}_i^{des}(l_d) - v_x \mathbf{K_D} \left( \mathbf{e}_i^{des}(l_d) \right)'
+ k_{ff} \left(\theta^{des}(l_d)\right)'&\text{for }i=1, \\
\\
-\mathbf{K_P} \left(\mathbf{e}_i^{des}(l_d)- \mathbf{e}_{i-1}^{des}(l_d)\right) - v_x \mathbf{K_D} \left(\mathbf{e}_i^{des}(l_d)- \mathbf{e}_{i-1}^{des}(l_d) \right)'
+ k_{ff}  \left(\theta^{des}(l_d)+[0~ 1]\mathbf{e}_{i-1}^{des}(l_d)\right)'&\text{for }i=2,\ldots,m.
\end{cases}
\end{equation}

\subsection{V2V mode using learn-from-predecessor control}
\label{sec3.4}
For the V2V mode, each following vehicle communicates with its predecessor, as shown in Fig. \ref{fig: information flow_desired}.
\begin{figure}[h]
    \centering
    {\includegraphics[width=0.6\textwidth]{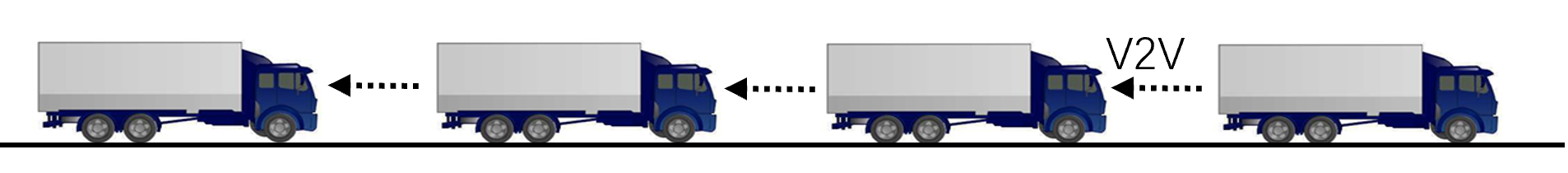}}
    \caption{Information flow under V2V mode}
    \label{fig: information flow_desired}
\end{figure}

V2V communication is utilized to transmit the desired path to all vehicles in the platoon, and every vehicle uses the desired path directly as its reference path. We call this tracking scheme desired-path tracking (DT). Here, for every $i$, we have $\mathbf{e}_i^{ref}(l_d)=\mathbf{e}_i^{des}(l_d)$, $\left( \mathbf{e}_i^{ref}(l_d) \right)'=\left( \mathbf{e}_i^{des}(l_d) \right)'$, and $\left(\theta^{ref}_i(l_d)\right)'=\left(\theta^{des}(l_d)\right)'$. Fig. \ref{fig: ref_error_desire} illustrates the reference lateral error computation under DT for a two-vehicle platoon. Since both vehicles use the desired path as their reference, their reference errors are both computed relative to the desired path.
\begin{figure}[h]
    \centering
    {\includegraphics[width=0.7\textwidth]{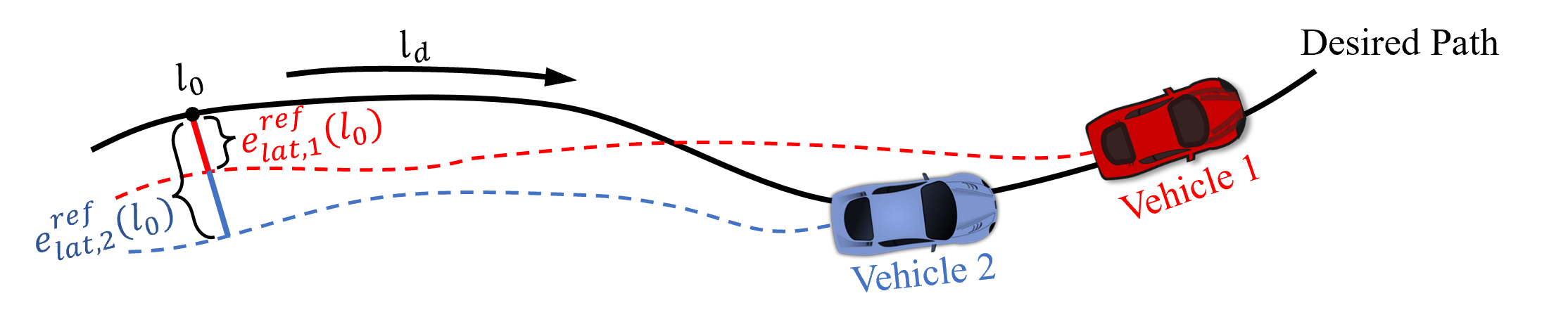}}
    \caption{Reference error computation under desired-path tracking}
    \label{fig: ref_error_desire}
\end{figure}

With V2V communication available, we can share additional valuable information beyond the desired path itself, enabling more advanced control strategy designs for better performance. In the V2V mode, we propose a novel \textbf{\textit{learn-from-predecessor (LFP)}} strategy to improve tracking error attenuation. This strategy is formulated directly in the spatial domain, parameterized by the desired path arc length, $l_d$. Its implementation is given by:
\begin{align}
\label{eq: ith_veh_control_learn}
&u_i(l_d)= 
-\mathbf{K_P} \mathbf{e}_i^{des}(l_d) - v_x \mathbf{K_D} \left( \mathbf{e}_i^{des}(l_d) \right)'
+ u_{l,i}(l_d), \notag \\[1.8ex]
&\text{where }~u_{l,i}(l_d)=\begin{cases}
     k_{ff} \left(\theta^{des}(l_d)\right)'&\text{for }i=1, \\ 
     \\
     u_{l,i-1}(l_d)+\mathbf{K_{LP}}\mathbf{y}_{i-1}(l_d) +\mathbf{K_{LD}} {\mathbf{y}}'_{i-1}(l_d)&\text{for }i=2,3,\ldots,m.
\end{cases}
\end{align}
Here, $u_{l,i}(l_d)$ denotes the learned control term of the $i^\text{th}$ vehicle. The lead vehicle's learned control term is identical to the feedforward term in the FF strategy. Each following vehicle ($i= 2,3,\ldots,m$) updates its predecessor's learned control term, $u_{l,i-1}(l_d)$, with two additional terms that depend on its predecessor’s output, $\mathbf{y}_{i-1}(l_d)$, and its derivative, ${\mathbf{y}}'_{i-1}(l_d)$, both recorded by the predecessor. The output is defined as $\mathbf{y}_{i-1}=\mathbf{C_{out}}\mathbf{e}_{i-1}^{des}$ where $\mathbf{C_{out}}$ is a selection matrix that extracts relevant components from the tracking error vector; this will be discussed in more detail later. The gains $\mathbf{K_{LP}}$ and $\mathbf{K_{LD}}$ represent the proportional and derivative learning gains, respectively. When the output is scalar, these learning gains are scalars; and when the output is a column vector, the gains are row vectors. The learned terms can be viewed as learning for feedforward control using predecessor error information. This structure enhances tracking performance while preserving the stability provided by the feedback terms.

Note that in the V2V mode, each following vehicle receives the desired path directly from its predecessor via V2V, along with the predecessor’s recorded output $\mathbf{y}_{i-1}(l_d)$, its derivative $\mathbf{y}'_{i-1}(l_d)$, and the learned control term $u_{l,i-1}(l_d)$. 
\begin{rem}
  While the tracking-error model is derived using the orthogonal projection of the C.G. onto a path, in controller implementation we use the closest point on the path instead to compute $e^{ref}_i$ and $\theta^{ref}_i$. For locally circular paths, this coincides with the orthogonal projection, and it is standard practice for general paths.
\end{rem}

\subsection{Inherent robustness against time delays}
\label{sec3.5}

      In contrast to the longitudinal case, lateral control of vehicle platoons is much less sensitive to communication or sensing delays. This is due to the inherent time gap introduced by the spatial separation between vehicles. As an illustrative case, consider the two-vehicle platoon shown in Fig.~\ref{fig: V2V delay}. At time $t = t_0$, let the arc-length positions of the predecessor and follower be $l_1$ and $l_2$, respectively, such that $l_1 - l_2 = \Delta l$. Suppose for illustration, the platoon is using the LFP strategy under V2V mode, and the predecessor transmits both the desired path information (i.e., $X^{des}(l_1), Y^{des}(l_1), $ and $\theta^{des}(l_1)$) and the learning information (i.e., $\mathbf{y}_{i-1}(l_1)$,  $\mathbf{y}'_{i-1}(l_1)$, and $u_{l,i-1}(l_1)$) at time $t_0$.

This information, however, is not needed by the follower immediately. Instead, it is only used when the follower reaches the spatial location \( l_1 \). Assuming the follower travels at longitudinal speed \( v_f \), it will arrive at \( l_1 \) approximately at time \( t_0 + \frac{\Delta l}{v_f} \). Therefore, as long as the communication delay is less than this spatially induced time gap $\frac{\Delta l}{v_f}$, the control strategy remains unaffected. The same idea holds for sensing delays when using the FF strategy under onboard sensing mode. This highlights a key advantage of the lateral platoon control problem formulated in this paper: it is naturally robust to moderate communication delays.
    \begin{figure}[h]
    \centering
    {\includegraphics[width=0.65\textwidth]{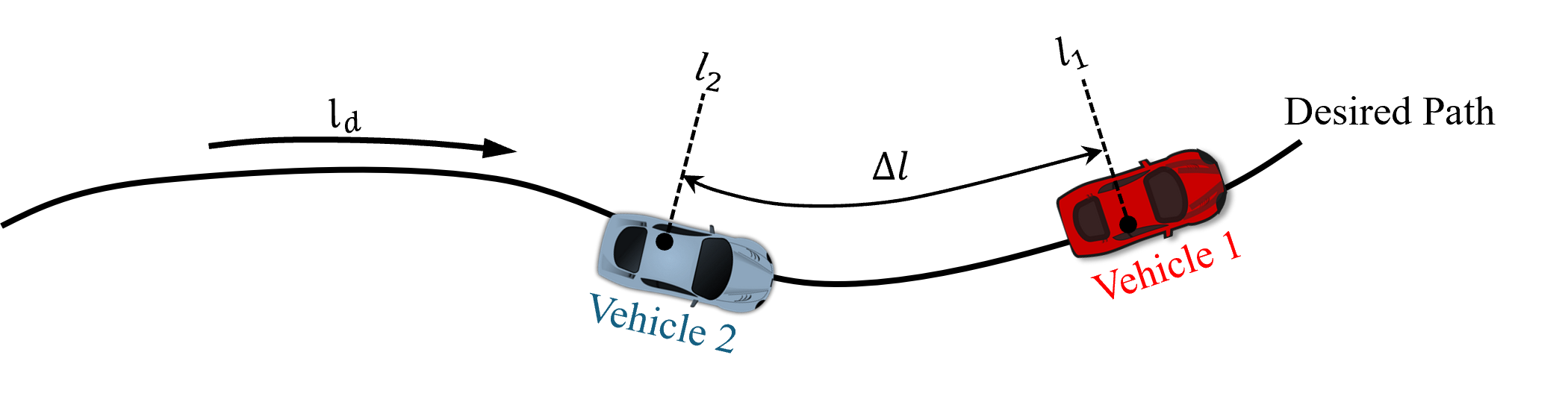}}
    \caption{Illustration of spatially induced time gap}
    \label{fig: V2V delay}
\end{figure}

\subsection{Problem formulation summary}
\label{sec3.7}
In the previous parts of this section, we introduced two control strategies of interest: the FF strategy using only onboard sensors and the LFP strategy utilizing V2V. For each strategy, the governing equations for analysis and V2V communication requirements were discussed. These are summarized in Table \ref{tab: control config}.




\begin{table}[t!]
\footnotesize
\centering
\caption{Summary of control strategies}
\label{tab: control config}
\renewcommand{\arraystretch}{1.6}

\begin{tabular}{>{\centering\arraybackslash}m{3.3cm} 
                >{\centering\arraybackslash}m{4cm} 
                >{\centering\arraybackslash}m{4cm} 
                >{\centering\arraybackslash}m{2.5cm}}
\hline \hline
Control Strategy & Governing Equation for Analysis  & V2V required \\
\hline

FF (Eq. \eqref{eq: i_th_u_reparam})  & Eqs. \eqref{eq: i_th_governing_reparam}, \eqref{eq: i_th_u_pred_track}  & \xmark \\

LFP (Eq. \eqref{eq: ith_veh_control_learn}) & Eqs. \eqref{eq: i_th_governing_reparam}, \eqref{eq: ith_veh_control_learn}  & \cmark \\
\hline \hline
\end{tabular}
\end{table}

\section{Lateral string stability}
\label{sec4}
In this section, we present the definition and conditions of lateral string stability. Specifically, Subsection \ref{sec 4.1} reviews string stability in the longitudinal context to motivate our definition of lateral string stability. In Subsection \ref{sec4.2}, we formally define $\mathcal{L}_2$ lateral string stability. In addition, we establish a necessary and sufficient condition for it. 
\subsection{Brief review of string stability}
\label{sec 4.1}
Intuitively speaking, a platoon is said to be string stable if errors or disturbances are not amplified along the
string. Various definitions of string stability have been proposed in the longitudinal context. To motivate the definition of lateral string stability proposed in this work, we first provide a brief review of existing string stability definitions.

Historically, the notion of string stability in the longitudinal control context can be traced back to \citep{chu1974decentralized}, and was later formalized and generalized in \citep{swaroop1994string} and \citep{swaroop2002string} using a Lyapunov-like $\epsilon$–$\delta$ definition. In this definition, the states of all vehicles in the platoon are uniformly bounded in vehicle index using the $\mathcal{L}_\infty$ norm, defined for a vector-valued function $z(t)$ as $\|z\|_{\mathcal{L}_\infty} = \sup_{t \geq 0} (\sup_i|z_i(t)|)$. Therefore, this definition is also referred to as $\mathcal{L}_\infty$ string stability (\citep{monteil2019string}) and was extended in \citep{besselink2017string} to incorporate external disturbances. Despite its practical relevance to safety (\citep{vegamoor2019review}), the $\mathcal{L}_\infty$ criterion poses challenges for controller synthesis. Even in the simple single-vehicle look-ahead case, ensuring $\mathcal{L}_\infty$ string stability requires the induced $\mathcal{L}_\infty$ gain of the error propagation map to be less than $1$ (\citep{swaroop1994string,besselink2017string}). For linear time-invariant (LTI), single-input-single-output (SISO) systems, this gain equals $\|h(t)\|_{\mathcal{L}_1}$, where $h(t)$ is the impulse response (\citep{desoer2009feedback}). Even in the LTI, SISO case, designing a controller to ensure  the weaker condition $\|h(t)\|_{\mathcal{L}_1} \le 1$ is nontrivial. \citep{swaroop1994string} (see also, \citep{darbha2018benefits}) showed that a sufficient condition is $h(t) \geq 0$ and $\sup_{\omega} |H(j\omega)| \leq 1$, where $H(j\omega)$ denotes the frequency response corresponding to $h(t)$. Ensuring $h(t)\geq0$ relates closely to the transient control problem in linear systems (\citep{lin1997nonovershooting,darbha2002controller}), which remains challenging in general.

Another important concept is $\mathcal{L}_2$ string stability, which requires the boundedness and attenuation of the $\mathcal{L}_2$ norm of errors (\citep{ploeg2013lp,vegamoor2019review}). For a vector-valued signal $z(t)$, this norm is given by $\|z\|_{\mathcal{L}_2} = \left(\int_0^\infty \scriptstyle{\sum\limits_i}|z_i(t)|^2dt\right)^{1/2}$. Although this requirement is weaker than $\mathcal{L}_\infty$ string stability (\citep{vegamoor2019review}), it greatly facilitates controller synthesis because for LTI systems, it is equivalent to a frequency-domain condition on the error propagation map, namely $\sup_{\omega} |H(j\omega)| \leq 1$. The $\mathcal{L}_2$ string stability has been widely adopted in the literature as a practical indicator of error attenuation (\citep{seiler2004disturbance,naus2010string,li2025nonlinear}), and has been further extended to accommodate diverse traffic scenarios (\citep{jin2016optimal,li2024sequencing,tian2025physically}) and various V2V communication conditions (\citep{darbha2018benefits,jin2014dynamics,ma2024selection}). Moreover, it has been shown in \citep{vegamoor2021string} that, under mild assumptions on the lead vehicle’s acceleration, $\mathcal{L}_2$ string stability implies guaranteed boundedness of the $\mathcal{L}_\infty$ norm of errors.

The preceding brief review highlights that the $\mathcal{L}_2$ string stability notion offers a favorable balance between theoretical rigor and practical applicability. It facilitates control synthesis while effectively capturing error attenuation along the platoon. Accordingly, we build upon the $\mathcal{L}_2$-based formulation for lateral string stability. 

\subsection{Definition of $\mathcal{L}_2$ lateral string stability}
\label{sec4.2}
To enable time-domain characterization, we consider the following definition of lateral string stability.


\begin{defn}[$\mathcal{L}_2$ Lateral String Stability]
\label{def:l2_lateral_string_stability}
Consider the platoon system in Eq.~\eqref{eq: i_th_governing_reparam} with zero initial states. For a given output $\mathbf{y}_i=\mathbf{C_{out}}\mathbf{e}_i^{des}$, the platoon is said to be \(\mathcal{L}_2\) lateral string stable with respect to the output \(\mathbf y_i\) if there exists a constant \(\gamma\in[0,1)\) such that
\begin{align}
\label{eq:strict_L2_string_stability}
\|\mathbf y_i\|_{\mathcal L_2}\le \gamma \|\mathbf y_{i-1}\|_{\mathcal L_2},\quad  \forall i=2,\dots,m.
\end{align}
\end{defn}

\begin{rem}
\label{rem: output choices}
We consider two choices of $\mathbf{y}_i$ in this paper. The first is
\[
\mathbf{y}_i = \mathbf{e}_i^{des} =
\begin{bmatrix}
e_{lat,i}^{des} \\[2pt]
\tilde{\theta}_i^{des}
\end{bmatrix},
\qquad
\mathbf{C}_{out} = \mathbf{I}_2,
\]
which captures both lateral and heading errors.

The second uses only the lateral error:
\[
\mathbf{y}_i = e_{lat,i}^{des},
\qquad
\mathbf{C}_{out} = [\,1\;\;0\,],
\]
a choice motivated by its direct safety relevance as it quantifies lateral deviation from the desired path.
\end{rem}






In the case of LTI systems, we can relate lateral string stability to frequency-domain conditions. In particular, the following theorem shows a necessary and sufficient condition for $\mathcal{L}_2$ lateral string stability when the vehicle interconnections can be expressed in the Laplace domain as input–output maps between $\mathbf{y}_{i-1}$ and $\mathbf{y}_i$. Note that for notational simplicity, we adopt lowercase symbols to denote Laplace-transformed variables; for instance, the Laplace transform of $z(l_d)$ is denoted by $z(s)$, where $s$ is the Laplace variable.

\begin{thm}
\label{prop: LFP DT frequency-domain condition}
Consider the platoon system in \eqref{eq: i_th_governing_reparam}, and suppose the interconnection between vehicles can be expressed as a causal, stable LTI system with transfer function matrix $\mathbf{H}_i(s)$, i.e.,
\begin{equation}
\mathbf{y}_i(s) = \mathbf{H}_i(s)\,\mathbf{y}_{i-1}(s), \quad \forall i=2,3,\ldots,m.
\end{equation}
The platoon is $\mathcal{L}_2$ lateral string stable with respect to the output $\mathbf{y}_i$ if and only if
\begin{equation}
\label{eq: freq_condition}
\|\mathbf{H}_i(s)\|_{\mathcal{H}_\infty} := \sup_{\omega\in\mathbb{R}} \sigma_1\!\left(\mathbf{H}_i(j\omega)\right) < 1, \quad \forall i=2,3,\ldots,m.
\end{equation}
\end{thm}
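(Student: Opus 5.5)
The plan is to reduce the statement to the classical fact that, for a causal, stable LTI operator on $\mathcal{L}_2[0,\infty)$, the induced $\mathcal{L}_2$ gain equals the $\mathcal{H}_\infty$ norm of its transfer function matrix. The reduction goes through the Paley--Wiener/Plancherel isometry in the arc-length variable $l_d$.

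\emph{Sufficiency.} Suppose $\gamma_i:=\|\mathbf{H}_i\|_{\mathcal{H}_\infty}<1$ for each $i$, and set $\gamma:=\max_{i}\gamma_i<1$. This is a finite maximum, since the platoon has $m$ vehicles. Take any $\mathbf{y}_{i-1}\in\mathcal{L}_2$. Because $\mathbf{H}_i$ is stable, $\mathbf{y}_i$ is the causal convolution of $\mathbf{y}_{i-1}$ with the impulse response of $\mathbf{H}_i$. Its Fourier transform is therefore $\mathbf{H}_i(j\omega)\mathbf{y}_{i-1}(j\omega)$, and the zero initial states in Definition~\ref{def:l2_lateral_string_stability} are what make this hold exactly. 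Plancherel then gives
\[
\|\mathbf{y}_i\|_{\mathcal{L}_2}^2=\frac{1}{2\pi}\int_{-\infty}^{\infty}\|\mathbf{H}_i(j\omega)\mathbf{y}_{i-1}(j\omega)\|^2d\omega\le \frac{\gamma_i^2}{2\pi}\int_{-\infty}^{\infty}\|\mathbf{y}_{i-1}(j\omega)\|^2d\omega=\gamma_i^2\|\mathbf{y}_{i-1}\|_{\mathcal{L}_2}^2 .
\]
The middle inequality uses the pointwise bound $\|\mathbf{H}_i(j\omega)v\|\le\sigma_1(\mathbf{H}_i(j\omega))\|v\|$. This yields \eqref{eq:strict_L2_string_stability} with constant $\gamma$.

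\emph{Necessity.} I will argue by contrapositive. Assume that for some $i$ we have $\sup_\omega\sigma_1(\mathbf{H}_i(j\omega))\ge1$. Fix any candidate $\gamma<1$; the aim is to exhibit an input $\mathbf{y}_{i-1}\in\mathcal{L}_2$ with $\|\mathbf{y}_i\|_{\mathcal{L}_2}>\gamma\|\mathbf{y}_{i-1}\|_{\mathcal{L}_2}$. Since $\mathbf{H}_i$ is stable, $\sigma_1(\mathbf{H}_i(j\omega))$ is continuous in $\omega$, so we can choose a frequency $\omega_0$, possibly very large, with $\sigma_1(\mathbf{H}_i(j\omega_0))>\gamma'$ for some fixed $\gamma'\in(\gamma,1)$. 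Let $v_0$ be the corresponding right singular vector.

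The test input is a narrowband signal concentrated near $\omega_0$, for example
\[
\mathbf{y}_{i-1}(l_d)=\mathrm{Re}\bigl(v_0e^{j\omega_0 l_d}\bigr)\,\phi_T(l_d),
\]
where $\phi_T$ is a smooth window of length $T$. Its normalized spectrum is concentrated on $\pm\omega_0$, and the conjugate symmetry of real-rational $\mathbf{H}_i$ lets the frequency $-\omega_0$ be handled the same way. By continuity of $\mathbf{H}_i(j\omega)$, the ratio $\|\mathbf{y}_i\|/\|\mathbf{y}_{i-1}\|$ tends to $\sigma_1(\mathbf{H}_i(j\omega_0))>\gamma$ as $T\to\infty$. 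Equivalently, and more abstractly, one can simply cite the standard theorem that the induced $\mathcal{L}_2\to\mathcal{L}_2$ norm of a stable LTI system equals $\|\mathbf{H}_i\|_{\mathcal{H}_\infty}$ (e.g.\ \citep{desoer2009feedback}).

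The main obstacle is this necessity step, and in particular how one reads the definition. The definition quantifies over the signals that actually arise in the platoon; those are generated by $\kappa^{des}$ through the lead vehicle, not by arbitrary $\mathbf{y}_{i-1}$. I would therefore state explicitly that the definition is understood to require \eqref{eq:strict_L2_string_stability} for all admissible desired-path curvatures, and that these can be chosen so that $\mathbf{y}_{i-1}$ approximates the narrowband test signal. This needs the leader-to-$\mathbf{y}_{i-1}$ map to be nonzero near $\omega_0$; if that causes trouble, I would treat the interconnection as an operator on all of $\mathcal{L}_2$, as the phrasing of the theorem suggests.

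A second subtlety is uniformity. The definition asks for a single $\gamma<1$, and since the number of vehicles is finite, the strict supremum condition on each $\mathbf{H}_i$ gives exactly this.
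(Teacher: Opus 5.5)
Your proof is correct and follows the paper's route: both directions rest on the fact that the induced $\mathcal{L}_2$ gain of a causal, stable LTI map equals $\|\mathbf{H}_i\|_{\mathcal{H}_\infty}$. The paper simply cites this fact, getting necessity by taking the supremum over all nonzero $\mathbf{y}_{i-1}\in\mathcal{L}_2$, whereas you verify the upper bound via Plancherel and the lower bound via a narrowband test input. One caveat on your interpretive remark: the paper silently adopts your fallback reading (the interconnection as an operator on all of $\mathcal{L}_2$), and you should commit to it, because your primary reading fails for the vector output $\mathbf{y}_i=\mathbf{e}_i^{des}$ — the platoon-generated $\mathbf{y}_{i-1}$ are driven by the scalar $\theta^{des}$, so at each frequency they lie along one fixed direction that need not align with $v_0$, and a merely nonzero leader-to-$\mathbf{y}_{i-1}$ map does not let you approximate your test signal.
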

\begin{proof}
Since the mapping from $\mathbf y_{i-1}$ to $\mathbf y_i$ is a causal, stable LTI system, its induced $\mathcal L_2$ norm equals $\|\mathbf H_i\|_{\mathcal H_\infty}$ (\citep{bhattacharyya2018linear}). Thus,
$\|\mathbf y_i\|_{\mathcal L_2}\le \|\mathbf H_i\|_{\mathcal H_\infty}\|\mathbf y_{i-1}\|_{\mathcal L_2}.$
If $\|\mathbf H_i\|_{\mathcal H_\infty}<1$ for all $i=2,\dots,m$, then by choosing
$\gamma=\max_{i=2,\dots,m}\|\mathbf H_i\|_{\mathcal H_\infty}<1,$
Definition~\ref{def:l2_lateral_string_stability} holds.

Conversely, if Definition~\ref{def:l2_lateral_string_stability} holds, then for some $\gamma<1$,
$\|\mathbf y_i\|_{\mathcal L_2}\le \gamma \|\mathbf y_{i-1}\|_{\mathcal L_2}$ for all $i=2,\dots,m.$
Taking the supremum over all nonzero $\mathbf y_{i-1}\in\mathcal L_2$ yields
$\|\mathbf H_i\|_{\mathcal H_\infty}
=\sup_{\mathbf y_{i-1}\neq 0}\frac{\|\mathbf y_i\|_{\mathcal L_2}}{\|\mathbf y_{i-1}\|_{\mathcal L_2}}
\le \gamma<1,$
for all $i=2,\dots,m.$
\end{proof}
\begin{rem}
When the output $\mathbf{y}_i$ is scalar-valued, the transfer function $\mathbf{H}_i(s)$ reduces to a scalar transfer function $H_i(s)$ rather than a matrix. In this case, the $\mathcal{H}_\infty$ norm definition simplifies to the standard SISO form
\begin{equation}
\|H_i(s)\|_{\mathcal{H}_\infty} := \sup_{\omega \in \mathbb{R}} \left| H_i(j\omega) \right|,
\end{equation}
where the largest singular value $\sigma_1(\cdot)$ is replaced by the absolute value.
\end{rem}

\section{$\mathcal{L}_2$ lateral string stability analysis}
\label{sec5}
In this section, we analyze the $\mathcal{L}_2$ lateral string stability of the two control strategies of interest: the FF strategy under onboard sensing mode and the LFP strategy under V2V mode. The two strategies are analyzed in Subsections \ref{sec5.1} and \ref{sec5.2}, respectively. As discussed in Remark~\ref{rem: output choices}, we consider two output definitions: $\mathbf{y}_i = \mathbf{e}_i^{des}$ and $\mathbf{y}_i = e_{lat,i}^{des}$. The main results are summarized in Table~\ref{tab: string stabilty summary}. In the table, a cross mark (\xmark) indicates that $\mathcal{L}_2$ lateral string stability cannot be achieved for the corresponding control strategy and output definition, whereas a check mark (\cmark) indicates that it can be achieved. Notably, among the combinations considered, $\mathcal{L}_2$ lateral string stability can be achieved only for the LFP strategy when the output is defined as $e_{lat,i}^{des}$.
\begin{table}[h]
\centering
\footnotesize
\caption{$\mathcal{L}_2$ lateral string stability results under different control strategies and output definitions}
\begin{tabular}{ccc}
\hline \hline
\textbf{Control strategy} & $\mathbf{y}_i = \mathbf{e}_i^{des}$ & $\mathbf{y}_i = e_{lat,i}^{des}$ \\
\hline
FF  & \xmark & \xmark \\
LFP  & \xmark & \cmark \\
\hline \hline
\end{tabular}
\label{tab: string stabilty summary}
\end{table}

As shown in Theorem \ref{prop: LFP DT frequency-domain condition}, the Laplace domain representation facilitates analysis for LTI systems. Hence, we apply the Laplace transform to the governing equations. As noted in Table~\ref{tab: control config}, Eq.~\eqref{eq: i_th_governing_reparam} captures the plant dynamics in response to the control input $u_i$, and its structure remains the same across both control strategies. The expression of $u_i$ differs between the strategies. Assuming zero initial conditions, the Laplace-domain representation of Eq.~\eqref{eq: i_th_governing_reparam} is given by:
\begin{equation}
\label{model Laplace}
    \underbrace{(s^2v_x^2\mathbf{M} + sv_x\mathbf{C}  + \mathbf{L})}_{\mathbf{\hat{M}}(s)}\mathbf{e}^{des}_i(s) = \mathbf{B} {u_i}(s) - s\mathbf{F}\theta^{des}(s),
\end{equation}
the Laplace-domain expressions for $u_i(s)$ under each strategy will be provided in the corresponding subsections.

Before delving into the main analysis, we establish some results that will simplify and unify several arguments to come.
We begin by a useful lemma that provides a perturbation bound on singular values, obtained from Weyl’s inequalities and the Hoffman–Wielandt theorem (\cite{horn2012matrix}). Note that $\mathbb{M}_{n,m}$ denotes the set of all complex $n\times m$ matrices.
\begin{lem}[Corollary 7.3.5 in \citep{horn2012matrix}]
\label{lem: singular value bound}
Let $\mathbf{P}, \mathbf{Q} \in \mathbb{M}_{n,m}$ and let $q = \min\{m, n\}$. Let 
$\sigma_1(\mathbf{P}) \geq \cdots \geq \sigma_q(\mathbf{P})$ and 
$\sigma_1(\mathbf{Q}) \geq \cdots \geq \sigma_q(\mathbf{Q})$ be the nonincreasingly ordered singular values of $\mathbf{P}$ and $\mathbf{Q}$, respectively. Then
\begin{equation}
|\sigma_j(\mathbf{P}) - \sigma_j(\mathbf{Q})| \leq \sigma_1\left(\mathbf{P} - \mathbf{Q}\right) \quad \text{for each }j = 1, \ldots, q.
\end{equation}
\end{lem}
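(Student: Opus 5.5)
The lemma is the standard Weyl-type perturbation bound for singular values. I would prove it via the Courant–Fischer (min–max) characterization applied to the Hermitian dilation. Without loss of generality, work with the variational formula
\[
\sigma_j(\mathbf{P}) = \max_{\substack{S\subseteq\mathbb{C}^m\\ \dim S = j}}\ \min_{\substack{x\in S\\ \|x\|_2=1}} \|\mathbf{P}x\|_2, \qquad j=1,\ldots,q,
\]
which follows from Courant–Fischer applied to the Hermitian matrix $\mathbf{P}^*\mathbf{P}$, whose $j$-th largest eigenvalue is $\sigma_j(\mathbf{P})^2$, together with monotonicity of the square root.

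The key step is a pointwise triangle inequality. For any unit vector $x$,
\[
\|\mathbf{P}x\|_2 \le \|\mathbf{Q}x\|_2 + \|(\mathbf{P}-\mathbf{Q})x\|_2 \le \|\mathbf{Q}x\|_2 + \sigma_1(\mathbf{P}-\mathbf{Q}),
\]
since $\sigma_1$ is the spectral norm. Fix a subspace $S$ of dimension $j$ and take the minimum over unit $x\in S$ on both sides. This gives
\[
\min_{x\in S}\|\mathbf{P}x\|_2 \le \min_{x\in S}\|\mathbf{Q}x\|_2 + \sigma_1(\mathbf{P}-\mathbf{Q}).
\]
Taking the maximum over all such $S$ then yields $\sigma_j(\mathbf{P}) \le \sigma_j(\mathbf{Q}) + \sigma_1(\mathbf{P}-\mathbf{Q})$. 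Exchanging the roles of $\mathbf{P}$ and $\mathbf{Q}$, and using $\sigma_1(\mathbf{Q}-\mathbf{P})=\sigma_1(\mathbf{P}-\mathbf{Q})$, gives the reverse inequality, and hence the absolute-value bound.

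The only point needing care is justifying the variational formula for $j\le q$ when $m\neq n$. I would handle it by noting that $\mathbf{P}^*\mathbf{P}$ is $m\times m$ with eigenvalues $\sigma_1^2,\ldots,\sigma_q^2$ followed by zeros, so Courant–Fischer gives exactly the stated max–min for indices $j\le q$.

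Alternatively, one could cite Weyl's eigenvalue inequality directly for the Hermitian dilations
\[
\begin{bmatrix}\mathbf{0}&\mathbf{P}\\ \mathbf{P}^*&\mathbf{0}\end{bmatrix},
\qquad
\begin{bmatrix}\mathbf{0}&\mathbf{Q}\\ \mathbf{Q}^*&\mathbf{0}\end{bmatrix}.
\]
The $j$ largest eigenvalues of these dilations are the $\sigma_j$, and the spectral norm of their difference is $\sigma_1(\mathbf{P}-\mathbf{Q})$. Since this is a textbook result (Corollary 7.3.5 in \citep{horn2012matrix}), a citation suffices in the paper itself.
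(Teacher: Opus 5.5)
Your proof is correct. The paper gives no proof of its own: it cites Horn and Johnson and attributes the bound to Weyl's inequalities. The standard textbook derivation applies Weyl's eigenvalue inequality to the Hermitian dilations $\begin{bmatrix}\mathbf{0}&\mathbf{P}\\ \mathbf{P}^*&\mathbf{0}\end{bmatrix}$ and $\begin{bmatrix}\mathbf{0}&\mathbf{Q}\\ \mathbf{Q}^*&\mathbf{0}\end{bmatrix}$, which is exactly the alternative you sketch at the end.

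Your main argument takes a slightly different, more self-contained route. You work directly with the max--min characterization $\sigma_j(\mathbf{P})=\max_{\dim S=j}\min_{x\in S,\,\|x\|_2=1}\|\mathbf{P}x\|_2$. You justify it correctly via Courant--Fischer on $\mathbf{P}^*\mathbf{P}$, including the case $m\neq n$ where trailing zero eigenvalues appear. Combined with the pointwise bound $\|\mathbf{P}x\|_2\le\|\mathbf{Q}x\|_2+\sigma_1(\mathbf{P}-\mathbf{Q})$, this reproves the needed special case of Weyl's inequality. You never have to identify the spectrum of the dilation ($\pm\sigma_1,\dots,\pm\sigma_q$ plus zeros).

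The dilation route, in exchange, is a one-line reduction once Weyl's inequality is taken as known. The same device also delivers the Frobenius-norm (Hoffman--Wielandt) companion bound that the paper mentions. For the paper's actual use, only $j=2$ with $\mathbf{P}=\mathbf{I}_2$ and $\mathbf{Q}=-\mathbf{R}(j\omega)$ is needed, so either argument is more than enough.
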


Using Lemma~\ref{lem: singular value bound}, we establish the following theorem, which will be used repeatedly to show that $\mathcal{L}_2$ lateral string stability cannot be achieved with respect to the output $\mathbf{y}_i = \mathbf{e}_i^{\text{des}}$.
\begin{thm}
\label{prop: multi-error not stable general}
    Suppose there exists an $i \in \{2, \dots, m\}$ such that we have the input-output relationship
    \begin{equation}
\mathbf{e}_{i}^{des}(s) = \mathbf{H}_i(s)\, \mathbf{e}_{i-1}^{des}(s),
\end{equation}
where $\mathbf{H}_i(s) = \mathbf{I}_2 + \mathbf{R}(s)$, with $\mathbf{I}_2$ denoting the $2 \times 2$ identity matrix and $\mathbf{R}(s)$ a rank-deficient matrix for all $s$. Then the platoon cannot be $\mathcal{L}_2$ lateral string stable with respect to the output $\mathbf{y}_i = \mathbf{e}_i^{des}$.
\end{thm}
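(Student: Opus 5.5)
The plan is to show that $\|\mathbf{H}_i\|_{\mathcal{H}_\infty}\ge 1$ and then invoke the necessity direction of Theorem~\ref{prop: LFP DT frequency-domain condition}. If $\mathbf{H}_i$ is not a causal, stable LTI map at all, then $\mathcal{L}_2$ attenuation cannot hold in any case. So we may assume that $\mathbf{H}_i$ is stable. In that setting the $\mathcal{L}_2$-induced gain equals $\sup_\omega \sigma_1(\mathbf{H}_i(j\omega))$, and it suffices to show $\sigma_1(\mathbf{H}_i(j\omega))\ge 1$ for every (or even a single) $\omega\in\mathbb{R}$.

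Fix $\omega$ and set $\mathbf{P}=\mathbf{H}_i(j\omega)$, $\mathbf{Q}=\mathbf{R}(j\omega)$, so that $\mathbf{P}-\mathbf{Q}=\mathbf{I}_2$ and $\sigma_1(\mathbf{P}-\mathbf{Q})=1$. Because $\mathbf{R}(j\omega)$ is a $2\times 2$ matrix that is rank deficient, its smallest singular value vanishes: $\sigma_2(\mathbf{Q})=0$. Lemma~\ref{lem: singular value bound} with $j=2$ then gives $|\sigma_2(\mathbf{P})-0|\le 1$, which is only an upper bound and does not help. I will therefore use a direct argument instead.

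Since $\mathbf{R}(j\omega)$ is singular, there is a unit vector $v$ with $\mathbf{R}(j\omega)v=0$. Hence $\mathbf{H}_i(j\omega)v=v$, and so $\sigma_1(\mathbf{H}_i(j\omega))\ge \|\mathbf{H}_i(j\omega)v\|=1$. This works for every $\omega$, so $\|\mathbf{H}_i\|_{\mathcal{H}_\infty}\ge 1$. The Lemma can be used in the same spirit by exchanging the roles of the matrices. Write $\mathbf{I}_2=\mathbf{H}_i-\mathbf{R}$. Then for $j=2$, $\sigma_2(\mathbf{H}_i)\le \sigma_2(\mathbf{R})+\sigma_1(\mathbf{I})$, which is trivial. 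A more useful variant is the interlacing bound $\sigma_2(\mathbf{I}-(-\mathbf{R}))$: every singular value of $\mathbf{I}_2$ is $1$, $-\mathbf{R}$ has rank at most one, and rank-one perturbation interlacing gives $\sigma_1(\mathbf{I}+\mathbf{R})\ge\sigma_2(\mathbf{I})=1$. I will present the kernel-vector argument as the main step and mention this interlacing bound as its Lemma-based counterpart.

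The conclusion follows directly. The definition requires $\|\mathbf{H}_i\|_{\mathcal{H}_\infty}\le\gamma<1$ (this is the converse direction in the proof of Theorem~\ref{prop: LFP DT frequency-domain condition}), but we have shown $\|\mathbf{H}_i\|_{\mathcal{H}_\infty}\ge1$. Hence no $\gamma<1$ exists.

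For rigor, one may also construct explicit near-worst-case inputs. Take $\mathbf{e}_{i-1}^{des}$ to be a narrowband $\mathcal{L}_2$ signal concentrated near $\omega_0$ with direction $v(\omega)$ lying in $\ker\mathbf{R}(j\omega)$. Then $\|\mathbf{y}_i\|_{\mathcal{L}_2}/\|\mathbf{y}_{i-1}\|_{\mathcal{L}_2}$ is close to (indeed at least) $1$ by Parseval's theorem. The main obstacle is that the kernel direction $v(\omega)$ may vary with $\omega$, so it must be chosen measurably. A simpler route avoids this: the pointwise bound $\mathbf{e}_{i-1}(j\omega)^*\mathbf{H}_i^*\mathbf{H}_i\,\mathbf{e}_{i-1}(j\omega)$ can be handled by choosing the input in frequency as $\hat{\mathbf{e}}_{i-1}(j\omega)=\phi(\omega)v(\omega)$, in which case Parseval's theorem gives equality of norms exactly. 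The input then has gain exactly $1$, which contradicts $\gamma<1$.
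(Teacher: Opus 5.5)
Your proof is correct. Its key step differs from the paper's, although your ``interlacing'' remark is in fact the paper's inequality.

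The paper obtains $\sigma_1(\mathbf{H}_i(j\omega))\ge 1$ directly from Lemma~\ref{lem: singular value bound}, using the pairing $\mathbf{P}=\mathbf{I}_2$, $\mathbf{Q}=-\mathbf{R}(j\omega)$, $j=2$. With that choice, $\mathbf{P}-\mathbf{Q}=\mathbf{H}_i(j\omega)$ sits on the right-hand side, and $|\sigma_2(\mathbf{I}_2)-\sigma_2(-\mathbf{R}(j\omega))|=1\le\sigma_1(\mathbf{H}_i(j\omega))$. So the Lemma does help. It looked useless only because your pairings ($\mathbf{P}=\mathbf{H}_i$, $\mathbf{Q}=\mathbf{R}$) put $\mathbf{I}_2$ on the right instead of $\mathbf{H}_i$.

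Your main argument ($\mathbf{R}(j\omega)v=0\Rightarrow\mathbf{H}_i(j\omega)v=v$) is more elementary and self-contained. It is also more informative. It shows that $1$ is an eigenvalue of $\mathbf{H}_i(j\omega)$ at every frequency, and it identifies the offending direction. In Propositions~\ref{prop: FF PT y=E notstable} and~\ref{prop: LFP DT y=E not stable}, that direction is the null space of the row vector $\mathbf{K_{fb}}(s)+sk_{ff}[0~~1]$ or $\mathbf{K_L}(s)$. Error combinations invisible to the follower's correction pass through unchanged. The paper's route is a one-line citation that fits its singular-value framing but gives no such insight.

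Your opening aside about unstable $\mathbf{H}_i$ is unproven but unnecessary, since the statement lives within the hypotheses of Theorem~\ref{prop: LFP DT frequency-domain condition}.

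Your supplementary input construction is also not needed. As written, its real obstacle is not measurability. An arbitrary $\phi(\omega)v(\omega)$ need not be the transform of a real signal supported on $l_d\ge 0$, which the zero-initial-state setting requires. If you want that argument, use a rational kernel vector. For example, take $v(s)=[-r_{12}(s)~~r_{11}(s)]^{T}$ with $\mathbf{R}(s)=[r_{jk}(s)]$, or the analogue from the second row if the first vanishes. Then feed the input $\mathbf{e}_{i-1}^{des}(s)=v(s)/(s+1)$, which is admissible because $\mathbf{R}=\mathbf{H}_i-\mathbf{I}_2$ is proper and stable. Since $\det\mathbf{R}\equiv 0$, you get $\mathbf{R}(s)v(s)\equiv 0$, so $\mathbf{e}_i^{des}=\mathbf{e}_{i-1}^{des}$ exactly. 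This contradicts $\gamma<1$ directly, without the necessity half of Theorem~\ref{prop: LFP DT frequency-domain condition}.
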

\begin{proof}
    Substituting $s=j\omega$, we obtain the frequency-domain representation of the input-output relationship
\begin{align}
\mathbf{e}^{des}_{i}(j\omega) = \mathbf{H}_i(j\omega)\mathbf{e}^{des}_{i-1}(j\omega).
\end{align}
Since $\mathbf{R}(s)$ is rank-deficient for all $s$, 
\begin{equation}
    \sigma_2\left(\mathbf{R}(j\omega)\right)=0,~~~~\forall\omega\in\mathbb{R}.
\end{equation}
From Lemma \ref{lem: singular value bound}, we have
\begin{align}
    \sigma_1\left(\mathbf{H}_i(j\omega)\right)=\sigma_1\left(\mathbf{I}_2+\mathbf{R}(j\omega)\right)\geq\left|\sigma_2\left(\mathbf{I}_2\right)-\sigma_2\left(-\mathbf{R}(j\omega)\right)\right|=1, ~~~~\forall\omega\in\mathbb{R}.
\end{align}
Therefore, the $\mathcal{H}_\infty$ norm of $\mathbf{H}_i(s)$ satisfies 
\begin{equation}
    \|\mathbf{H}_i(s)\|_{\mathcal{H}_\infty}:=\sup_{\omega\in\mathbb{R}}\sigma_1\left(\mathbf{H}_i(j\omega)\right)\geq1,
\end{equation}
which violates the condition in Theorem \ref{prop: LFP DT frequency-domain condition}; hence, the platoon cannot be $\mathcal{L}_2$ lateral string stable with respect to the output $\mathbf{y}_i = \mathbf{e}_i^{des}$.
\end{proof}

\subsection{Feedback-feedforward strategy under onboard sensing mode}
\label{sec5.1}
In this subsection, we conduct lateral string stability analysis for the FF strategy.

Recall from Table \ref{tab: control config} that for the FF strategy, the control law formulation that is convenient for analysis is given in Eq. \eqref{eq: i_th_u_pred_track}. Applying the Laplace transform to both sides of it, we obtain 
\begin{align}
\label{eq: u Laplace FF PT}
u_i(s) =
\left\{
\begin{array}{ll}
-(\mathbf{K_P} + s v_x \mathbf{K_D})\, \mathbf{e}_i^{des}(s) + s k_{ff} \theta^{des}(s),  &\text{for } i = 1, \\[3.5ex] 
-\smash{(\underbrace{\mathbf{K_P} + s v_x \mathbf{K_D}}_{\mathbf{K_{fb}}(s)}}) 
\left( \mathbf{e}_i^{des}(s) - \mathbf{e}_{i-1}^{des}(s) \right)
+ s k_{ff} \left( \theta^{des}(s) + [0~~1] \mathbf{e}_{i-1}^{des}(s) \right),  &\text{for } i = 2,3,\ldots,m.
\end{array}
\right.  \\ \notag
\end{align}
For the FF strategy, it suffices to analyze the error propagation between the first and second vehicle in the platoon to show instability. Substituting the case of $i=1$ above into Eq. \eqref{model Laplace} yields
\begin{equation}
\label{eq: Laplace FF PT i=1}
    \left(\mathbf{\hat{M}}(s)+\mathbf{BK_{fb}}(s)\right)\mathbf{e}^{des}_1(s) =  \left(s k_{ff}\mathbf{B} - s\mathbf{F}\right)\theta^{des}(s),
\end{equation}
Similarly, substituting the case of $i=2$ into Eq. \eqref{model Laplace} yields
\begin{equation}
\label{eq: Laplace FF PT i=2}
    \left(\mathbf{\hat{M}}(s)+\mathbf{BK_{fb}}(s)\right)\mathbf{e}^{des}_2(s) =  \mathbf{B}\left(\mathbf{K_{fb}}(s)+s k_{ff}[0~~1]\right)\mathbf{e}^{des}_{1}(s) +\left(s k_{ff}\mathbf{B} - s\mathbf{F}\right)\theta^{des}(s),
\end{equation}
 Subtracting the two sides of Eq. \eqref{eq: Laplace FF PT i=1} from the two sides of Eq. \eqref{eq: Laplace FF PT i=2}, we obtain
   \begin{equation}
    \left(\mathbf{\hat{M}}(s)+\mathbf{BK_{fb}}(s)\right)\left(\mathbf{e}^{des}_2(s)-\mathbf{e}^{des}_1(s)\right) =  \mathbf{B}\left(\mathbf{K_{fb}}(s)+s k_{ff}[0~~1]\right)\mathbf{e}^{des}_{1}(s),
\end{equation}
Rearranging the above equation:
 \begin{align}
 \label{eq: FF PT transfer function}
\mathbf{e}^{des}_{2}(s) = \underbrace{\left(\mathbf{I}_2+\left({\mathbf{\hat{M}}(s)}+\mathbf{B}\mathbf{K_{fb}}(s)\right)^{-1}\mathbf{B}\left(\mathbf{K_{fb}}(s)+s k_{ff}[0~~1]\right)\right)}_{\mathbf{H}_2(s)}\mathbf{e}^{des}_{1}(s).
\end{align}

\subsubsection{Lateral string stability analysis with respect to the output $\mathbf{y}_i = \mathbf{e}_i^{des}$}
For the output definition $\mathbf{y}_i=\mathbf{e}_i^{des}$, the following proposition shows that $\mathcal{L}_2$ lateral string stability cannot be achieved by the FF strategy.
\begin{prop}
\label{prop: FF PT y=E notstable}
    The FF strategy under onboard sensing mode cannot be $\mathcal{L}_2$ lateral string stable with respect to the output $\mathbf{y}_i = \mathbf{e}_i^{des}$.
\end{prop}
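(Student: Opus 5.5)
The plan is to apply Theorem~\ref{prop: multi-error not stable general} to the error propagation map between the first and second vehicles, which was derived in Eq.~\eqref{eq: FF PT transfer function}. That map has the form $\mathbf{H}_2(s)=\mathbf{I}_2+\mathbf{R}(s)$ with
\begin{equation*}
\mathbf{R}(s):=\left(\mathbf{\hat{M}}(s)+\mathbf{B}\mathbf{K_{fb}}(s)\right)^{-1}\mathbf{B}\left(\mathbf{K_{fb}}(s)+s k_{ff}[0~~1]\right).
\end{equation*}
So it suffices to show that $\mathbf{R}(s)$ is rank-deficient for all $s$ at which it is defined. Theorem~\ref{prop: multi-error not stable general} then gives $\|\mathbf{H}_2\|_{\mathcal{H}_\infty}\ge 1$. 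Because the definition requires the attenuation inequality for every $i=2,\dots,m$, failure at $i=2$ is enough to conclude that the platoon is not $\mathcal{L}_2$ lateral string stable with respect to $\mathbf{y}_i=\mathbf{e}_i^{des}$.

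The key step is a rank observation. The factor $\mathbf{B}\left(\mathbf{K_{fb}}(s)+s k_{ff}[0~~1]\right)$ is the product of the $2\times 1$ column $\mathbf{B}$ and a $1\times 2$ row vector. It is therefore an outer product, of rank at most one, for every $s$. Left-multiplying by the $2\times2$ matrix $\left(\mathbf{\hat{M}}(s)+\mathbf{B}\mathbf{K_{fb}}(s)\right)^{-1}$ cannot increase the rank. Hence $\operatorname{rank}\mathbf{R}(s)\le 1<2$, so $\sigma_2(\mathbf{R}(j\omega))=0$ for all $\omega$. This is exactly the hypothesis of Theorem~\ref{prop: multi-error not stable general}.

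The only point needing care is that $\mathbf{R}(j\omega)$ must be well defined on the whole imaginary axis, so that the $\mathcal{H}_\infty$ argument applies. Here $\det\left(\mathbf{\hat{M}}(s)+\mathbf{B}\mathbf{K_{fb}}(s)\right)$ is the closed-loop characteristic polynomial. The feedback gains were chosen in Subsection~\ref{sec3.3} so that this polynomial is Hurwitz, so it has no roots on $j\mathbb{R}$. The inverse, and hence $\mathbf{R}(j\omega)$, therefore exists for every real $\omega$.

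If one prefers not to invoke stability of $\mathbf{H}_2$, the argument still goes through. If $\mathbf{H}_2$ were not a stable map, the induced $\mathcal{L}_2$ gain would be unbounded, and string stability would fail all the more. Either way the conclusion holds for arbitrary choices of $\mathbf{K_P}$, $\mathbf{K_D}$ and $k_{ff}$, which is what makes the result an impossibility statement. I expect no serious obstacle; the main thing to state cleanly is the rank-one structure coming from the single steering input.
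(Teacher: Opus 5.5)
Your proposal is correct and matches the paper's proof: you take $\mathbf{H}_2(s)=\mathbf{I}_2+\mathbf{R}(s)$ from Eq.~\eqref{eq: FF PT transfer function}, note that $\mathbf{R}(s)$ has rank at most one because $\mathbf{B}\left(\mathbf{K_{fb}}(s)+s k_{ff}[0~~1]\right)$ is an outer product, and invoke Theorem~\ref{prop: multi-error not stable general}. Your extra remark, that the Hurwitz closed-loop polynomial makes $\mathbf{R}(j\omega)$ well defined on the whole imaginary axis, is a point the paper leaves implicit, but it does not change the argument.
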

\begin{proof}
In Eq. \eqref{eq: FF PT transfer function}, note that $\mathbf{B}\left(\mathbf{K_{fb}}(s)+s k_{ff}[0~~1]\right)$ is the product of a column vector and a row vector. Hence, 
\begin{equation}
\text{rank}\left(\mathbf{B}\left(\mathbf{K_{fb}}(s)+s k_{ff}[0~~1]\right)\right)\leq1,~~~~\forall s\in\mathbb{C}.
\end{equation}
 Using the identity $\text{rank}(\mathbf{PQ})\leq\min\left(\text{rank}(\mathbf{P}),\text{rank}(\mathbf{Q})\right)$, we have
 \begin{equation}
\text{rank}\Big(\underbrace{\left({\mathbf{\hat{M}}(s)}+\mathbf{B}\mathbf{K_{fb}}(s)\right)^{-1}\mathbf{B}\left(\mathbf{K_{fb}}(s)+sk_{ff}[0~~1]\right)}_{\mathbf{R}(s)}\Big)\leq1,~~~~\forall s\in\mathbb{C}.
\end{equation}
Therefore, in Eq. \eqref{eq: FF PT transfer function}, $\mathbf{H}_2(s)=\mathbf{I}_2+\mathbf{R}(s)$, and $\mathbf{R}(s)$ is rank-deficient. Using Theorem \ref{prop: multi-error not stable general}, the proof is complete.
\end{proof}

\subsubsection{Lateral string stability analysis with respect to the output $\mathbf{y}_i = e_{lat,i}^{des}$}
For the output definition $\mathbf{y}_i = e_{lat,i}^{des}$, by pre-multiplying both sides of Eq. \eqref{eq: FF PT transfer function} with $[1~~0]$, we can extract the lateral errors $e^{des}_{lat,1}(s)$ from the error vector $\mathbf{e}^{des}_{1}(s)$ on the left-hand side: 
\begin{align}
\label{eq: FF PT SISO transfer}
[1~~0]\mathbf{e}^{des}_{2}(s) =& [1~~0]\left(\mathbf{I}_2+\left({\mathbf{\hat{M}}(s)}+\mathbf{B}\mathbf{K_{fb}}(s)\right)^{-1}\mathbf{B}\left(\mathbf{K_{fb}}(s)+s k_{ff}[0~~1]\right)\right)\mathbf{e}^{des}_{1}(s) \notag\\[1.5ex]
\implies{e}^{des}_{lat,2}(s) =& e^{des}_{lat,1}(s){+[1~~0]\left({\mathbf{\hat{M}}(s)}+\mathbf{B}\mathbf{K_{fb}}(s)\right)^{-1}\mathbf{B}\left(\mathbf{K_{fb}}(s)+s k_{ff}[0~~1]\right)}\mathbf{e}^{des}_{1}(s) \notag \\
=&\left(1{+[1~~0]\left({\mathbf{\hat{M}}(s)}+\mathbf{B}\mathbf{K_{fb}}(s)\right)^{-1}\mathbf{B}\left(\mathbf{K_{fb}}(s)+s k_{ff}[0~~1]\right)}\begin{bmatrix}
    1 \\ 0
\end{bmatrix}\right)e^{des}_{lat,1}(s)  \notag \\
&+\underbrace{\left({[1~~0]\left({\mathbf{\hat{M}}(s)}+\mathbf{B}\mathbf{K_{fb}}(s)\right)^{-1}\mathbf{B}\left(\mathbf{K_{fb}}(s)+s k_{ff}[0~~1]\right)}\begin{bmatrix}
    0 \\ 1
\end{bmatrix}\right)}_{H_2(s)}\tilde{\theta}_1^{des}(s).
\end{align}

The following proposition shows that $\mathcal{L}_2$ lateral string stability with respect to $\mathbf{y}_i = e_{lat,i}^{des}$ cannot be achieved by the FF strategy either, using a counterexample.
\begin{prop}
\label{prop: FF PT y=e notstable}
    The FF strategy under onboard sensing mode cannot be $\mathcal{L}_2$ lateral string stable with respect to the output $\mathbf{y}_i = {e}_{lat,i}^{des}$.
\end{prop}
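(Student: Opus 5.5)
The plan is to exhibit a concrete desired path whose induced errors make the ratio $\|e^{des}_{lat,2}\|_{\mathcal L_2}/\|e^{des}_{lat,1}\|_{\mathcal L_2}$ arbitrarily close to (or at least) $1$. Then no $\gamma\in[0,1)$ in Definition~\ref{def:l2_lateral_string_stability} can exist. Theorem~\ref{prop: LFP DT frequency-domain condition} cannot be invoked directly, because Eq.~\eqref{eq: FF PT SISO transfer} is not a SISO map from $e^{des}_{lat,1}$ to $e^{des}_{lat,2}$: it also carries the term $H_2(s)\tilde\theta^{des}_1(s)$. I would therefore close the loop through the common exogenous input $\theta^{des}$.

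First, from Eq.~\eqref{eq: Laplace FF PT i=1}, write $\mathbf e^{des}_1(s)=\mathbf G(s)\theta^{des}(s)$ with
\begin{equation*}
\mathbf G(s):=\left(\mathbf{\hat M}(s)+\mathbf B\mathbf{K_{fb}}(s)\right)^{-1}\left(sk_{ff}\mathbf B-s\mathbf F\right).
\end{equation*}
This is stable by the Hurwitz choice of feedback gains. Combining with Eq.~\eqref{eq: FF PT transfer function}, both lateral errors are driven by $\theta^{des}$:
\begin{equation*}
e^{des}_{lat,1}=g_1(s)\theta^{des},\qquad e^{des}_{lat,2}=g_2(s)\theta^{des},
\end{equation*}
where $g_1=[1~0]\mathbf G$ and $g_2=[1~0]\left(\mathbf I_2+\mathbf R(s)\right)\mathbf G$. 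Writing $T(s):=g_2/g_1$, it suffices to find one frequency $\omega_0$, or a limit $\omega\to\infty$, at which $|T(j\omega)|\ge 1$. One then takes $\theta^{des}$ with spectrum concentrated in a narrow band around that frequency (a smooth windowed sinusoid in $l_d$, or a sequence of such paths). By Parseval, this forces $\|e^{des}_{lat,2}\|_{\mathcal L_2}/\|e^{des}_{lat,1}\|_{\mathcal L_2}$ to approach $|T(j\omega_0)|$, or $1$ in the limiting case, violating any $\gamma<1$.

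The natural candidate is the high-frequency limit. Since $\mathbf{\hat M}(s)\sim s^2v_x^2\mathbf M$ while $\mathbf B\mathbf{K_{fb}}(s)$ and $\mathbf B(\mathbf{K_{fb}}(s)+sk_{ff}[0~1])$ are $O(s)$, we get $\mathbf R(j\omega)=O(1/\omega)$. Also $\mathbf G(j\omega)\sim (j\omega v_x^2)^{-1}\mathbf M^{-1}(k_{ff}\mathbf B-\mathbf F)$, so
\begin{equation*}
g_1(j\omega)\sim \frac{k_{ff}C_f-(mv_x^2+aC_f-bC_r)}{j\omega\, m v_x^2}.
\end{equation*}
Meanwhile $[1~0]\mathbf R\mathbf G=O(1/\omega^2)$. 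Hence $T(j\omega)=1+O(1/\omega)\to 1$, and so $\sup_\omega|T(j\omega)|\ge 1$. The counterexample path is then a high-frequency, finite-energy curvature oscillation, i.e.\ a rapid weave.

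The main obstacle is the degenerate case where the leading coefficient $k_{ff}C_f-(mv_x^2+aC_f-bC_r)$ vanishes. In that case $g_1$ decays faster and the ratio is no longer obviously tending to $1$. There I would expand both $g_1$ and $[1~0]\mathbf R\mathbf G$ to the next order in $1/s$ and compare degrees. If that still fails, I would fall back on $\omega\to 0$, or directly exhibit specific vehicle and gain parameters as the counterexample. The latter suffices, since the proposition asserts only that the strategy cannot guarantee stability. A secondary technical point is making the band-limiting argument rigorous: I would choose $\theta^{des}$ with $\|\theta^{des}\|$ normalized and spectrum supported in $[\omega_0-\epsilon,\omega_0+\epsilon]$, and use continuity of $g_1,g_2$ on $j\mathbb R$.
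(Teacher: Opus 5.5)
Your route differs from the paper's. The paper argues at zero frequency. On a circular arc it \emph{posits} $e^{des}_{lat,1}\equiv 0$ together with a nonzero steady-state $\tilde\theta^{des}_1$, and uses $H_2(0)=k_{\tilde\theta}/k_{e_{lat}}$ to get a nonzero limit of $e^{des}_{lat,2}$. You instead write both lateral errors as images of the single exogenous input $\theta^{des}$ and study $T=g_2/g_1$ as $\omega\to\infty$. This reads Definition~\ref{def:l2_lateral_string_stability} more faithfully, for three reasons. First, $e^{des}_{lat,1}$ and $\tilde\theta^{des}_1$ cannot be prescribed independently: unless $g_1\equiv 0$, no nonzero $\theta^{des}$ gives $e^{des}_{lat,1}\equiv 0$. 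Second, your test paths have finite energy. Third, you never need $k_{\tilde\theta}\neq 0$. Restated in your framework, the paper's argument says $|g_2/g_1|\to\infty$ as $\omega\to 0$ when $k_{ff}$ zeroes the leader's steady-state lateral error and $k_{\tilde\theta}\neq 0$. Your expansion is correct whenever $\alpha:=k_{ff}C_f-(mv_x^2+aC_f-bC_r)\neq 0$, which includes the paper's design $k_{ff}=1.59$, and then the proof closes for every platoon length. One repair: a nonzero signal supported on $l_d\ge 0$ cannot have compactly supported spectrum. So use windowed sinusoids with widening windows and pass to the limit, rather than exact band-limiting.

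The case $\alpha=0$ is a genuine gap, and none of your fallbacks closes it.
\begin{itemize}
\item \textbf{Next-order expansion.} Here $k_{ff}\mathbf B-\mathbf F=[0~~q]^{\top}$ with $q=amv_x^2-b(a+b)C_r$. One more order makes $g_1$ and $[1~0]\mathbf R\mathbf G$ both of order $\omega^{-2}$, and $T(j\omega)\to -mv_x^2/(aC_f-bC_r+C_fk_{\dot{\tilde\theta}}v_x)$. With the paper's vehicle data and feedback gains this is about $-0.85$, so the limit does not reach modulus $1$.
\item \textbf{Low frequency.} You get $T(0)=\bigl(2(C_f+C_r)-C_fk_{\tilde\theta}\bigr)/\bigl((C_f+C_r)-C_fk_{\tilde\theta}\bigr)$, whose modulus is at least $1$ only if $C_fk_{\tilde\theta}\le\tfrac32(C_f+C_r)$.
\item \textbf{Particular gains.} Exhibiting specific gains proves less than the paper targets; there the gains are arbitrary and only the path is chosen.
\end{itemize}

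For platoons of at least three vehicles, with $q\neq 0$, there is a fix. Write $e^{des}_{lat,i}=g_i\theta^{des}$ with $g_i=[1~0](\mathbf I_2+\mathbf R+\dots+\mathbf R^{i-1})\mathbf G$. Since $\mathbf R$ and $\mathbf G$ are both $O(\omega^{-1})$, you have $\mathbf R^k\mathbf G=O(\omega^{-(k+1)})$. Hence $g_i-g_2=O(\omega^{-3})$ for all $i\ge 2$, while $g_2(s)\sim q/(I_zv_x^2s^2)$. So $g_3/g_2\to 1$, and your narrowband argument applies to vehicles $2$ and $3$.

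For two vehicles, no argument can work. Take the paper's vehicle data with $k_{ff}=(mv_x^2+aC_f-bC_r)/C_f\approx 0.232$, $k_{\tilde\theta}=2(C_f+C_r)/C_f\approx 3.91$, $k_{\dot{\tilde\theta}}=1$, $k_{e_{lat}}=0.06$ and $k_{\dot e_{lat}}=0$. The closed loop is Hurwitz, $T$ is stable, and direct evaluation gives $\sup_\omega|T(j\omega)|\approx 0.74$. If moreover $q=0$, then $k_{ff}\mathbf B=\mathbf F$ and all errors vanish identically.

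So state the hypothesis explicitly, either $\alpha\neq 0$ or at least three vehicles with $k_{ff}\mathbf B\neq\mathbf F$, rather than weakening the claim. The paper's own argument does not reach this case either. With $\alpha=0$ the leader generally has a nonzero steady-state lateral error on an arc, so its premise $e^{des}_{lat,1}\equiv 0$ fails.
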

\begin{proof}
See Appendix \ref{apped A}.
\end{proof}

\subsection{Learn-from-predecessor strategy under V2V mode}
\label{sec5.2}
In this subsection, we conduct lateral string stability analysis for the LFP strategy.

Recall from Table \ref{tab: control config} that for the LFP strategy, the control law is given in Eq. \eqref{eq: ith_veh_control_learn}. Applying the Laplace transform to both sides of it, we obtain  
\begin{align}
&u_i(s)= 
-(\underbrace{\mathbf{K_P} + sv_x \mathbf{K_D}}_{\mathbf{K_{fb}}(s)}) \mathbf{e}_i^{des}(s)
+ u_{l,i}(s), \notag \\[1ex]
&\text{where }~u_{l,i}(s)=\begin{cases}
     sk_{ff} \theta^{des}(s)&\text{for }i=1, \\ 
     \\
     u_{l,i-1}(s)+(\smash{\underbrace{\mathbf{K_{LP}}+s\mathbf{K_{LD}}}_{\mathbf{K_L}(s)}})\mathbf{y}_{i-1}(s) &\text{for }i=2,\ldots,m. 
\end{cases}
\\ \notag
\end{align}
Substituting the above into Eq. \eqref{model Laplace}, for $i=2,3,\ldots,m$, we have
\begin{align}
\label{eq: desired learn error i-1}
\left({\mathbf{\hat{M}}(s)}+\mathbf{B}\mathbf{K_{fb}}(s)\right)\mathbf{e}^{des}_{i-1}(s) = \mathbf{B}u_{l,i-1}(s) - s\mathbf{F}\theta^{des}(s),
\end{align}
\begin{align}
\label{eq: desired learn error i}
\left({\mathbf{\hat{M}}(s)}+\mathbf{B}\mathbf{K_{fb}}(s)\right)\mathbf{e}^{des}_{i}(s) = \mathbf{B}\left(u_{l,i-1}(s)+\mathbf{K_L}(s)\mathbf{y}_{i-1}(s)\right) - s\mathbf{F}\theta^{des}(s).
\end{align}
Subtracting the two sides of Eq. \eqref{eq: desired learn error i-1} from the two sides of Eq. \eqref{eq: desired learn error i}, it is obtained that
\begin{align}
\left({\mathbf{\hat{M}}(s)}+\mathbf{B}\mathbf{K_{fb}}(s)\right)\left(\mathbf{e}^{des}_{i}(s)-\mathbf{e}^{des}_{i-1}(s)\right) = \mathbf{B}\mathbf{K_L}(s)\mathbf{y}_{i-1}(s),~~\forall i=2,3,\ldots,m.
\end{align}
Further rearranging the above equation:
\begin{align}
\label{eq: transfer function LFP DT}
\mathbf{e}^{des}_{i}(s) = \mathbf{e}^{des}_{i-1}(s)+\left({\mathbf{\hat{M}}(s)}+\mathbf{B}\mathbf{K_{fb}}(s)\right)^{-1}\mathbf{B}\mathbf{K_L}(s)\mathbf{y}_{i-1}(s),~~\forall i=2,3,\ldots,m.
\end{align}
\subsubsection{Lateral string stability analysis with respect to the output $\mathbf{y}_i = \mathbf{e}_i^{des}$}
For $\mathbf{y}_i = \mathbf{e}_i^{des}$, Eq. \eqref{eq: transfer function LFP DT} becomes
\begin{equation}
\label{eq: LEP DT y=E transfer function}
    \mathbf{e}^{des}_{i}(s) = \Big(\underbrace{\mathbf{I}_2+\left({\mathbf{\hat{M}}(s)}+\mathbf{B}\mathbf{K_{fb}}(s)\right)^{-1}\mathbf{B}\mathbf{K_L}(s)}_{\mathbf{H}(s)}\Big)\mathbf{e}^{des}_{i-1}(s),~~\forall i=2,3,\ldots,m.
\end{equation}
Note that for $\mathbf{y}_i = \mathbf{e}_i^{des}$, the corresponding $\mathbf{K_L}$ is a $1\times2$ row vector. The following proposition shows that $\mathcal{L}_2$ lateral string stability cannot be achieved by the LFP strategy.
\begin{prop}
\label{prop: LFP DT y=E not stable}
    The LFP strategy under V2V mode cannot be $\mathcal{L}_2$ lateral string stable with respect to the output $\mathbf{y}_i = \mathbf{e}_i^{des}$.
\end{prop}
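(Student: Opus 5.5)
The plan is to reduce the claim directly to Theorem~\ref{prop: multi-error not stable general}, exactly as in the proof of Proposition~\ref{prop: FF PT y=E notstable}. By Eq.~\eqref{eq: LEP DT y=E transfer function}, for every $i=2,\ldots,m$ the interconnection has the form $\mathbf{e}^{des}_i(s)=\mathbf{H}(s)\mathbf{e}^{des}_{i-1}(s)$ with
\begin{equation*}
\mathbf{H}(s)=\mathbf{I}_2+\mathbf{R}(s),\qquad \mathbf{R}(s):=\left(\mathbf{\hat{M}}(s)+\mathbf{B}\mathbf{K_{fb}}(s)\right)^{-1}\mathbf{B}\mathbf{K_L}(s).
\end{equation*}
It therefore suffices to show that $\mathbf{R}(s)$ is rank-deficient for all $s$. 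Theorem~\ref{prop: multi-error not stable general} then yields $\|\mathbf{H}(s)\|_{\mathcal{H}_\infty}\ge 1$, and by Theorem~\ref{prop: LFP DT frequency-domain condition} this rules out $\mathcal{L}_2$ lateral string stability with respect to $\mathbf{y}_i=\mathbf{e}_i^{des}$.

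For the rank bound, note that $\mathbf{B}\in\mathbb{R}^{2\times 1}$ is a column vector. Since $\mathbf{y}_{i-1}=\mathbf{e}^{des}_{i-1}$ is two-dimensional, $\mathbf{K_L}(s)=\mathbf{K_{LP}}+s\mathbf{K_{LD}}$ is a $1\times 2$ row vector. Hence $\mathbf{B}\mathbf{K_L}(s)$ is an outer product and satisfies $\mathrm{rank}(\mathbf{B}\mathbf{K_L}(s))\le 1$ for every $s\in\mathbb{C}$. Applying $\mathrm{rank}(\mathbf{PQ})\le\min(\mathrm{rank}\,\mathbf{P},\mathrm{rank}\,\mathbf{Q})$ with $\mathbf{P}=(\mathbf{\hat{M}}(s)+\mathbf{B}\mathbf{K_{fb}}(s))^{-1}$ gives $\mathrm{rank}\,\mathbf{R}(s)\le 1<2$. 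So $\mathbf{R}(s)$ is rank-deficient wherever it is defined.

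The only point needing care is the phrase ``for all $s$'': the inverse must exist at the points where Theorem~\ref{prop: multi-error not stable general} evaluates it. That theorem only evaluates $\mathbf{R}$ on the imaginary axis $s=j\omega$. There, $\det(\mathbf{\hat{M}}(j\omega)+\mathbf{B}\mathbf{K_{fb}}(j\omega))$ is nonzero, because the feedback gains are chosen so that the closed-loop characteristic polynomial is Hurwitz (Subsection~\ref{sec3.3}). This also makes $\mathbf{H}(s)$ causal and stable, as Theorem~\ref{prop: LFP DT frequency-domain condition} requires.

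I do not expect a substantive obstacle; the argument is a direct structural observation. The conclusion holds for every choice of $\mathbf{K_{fb}}$, $\mathbf{K_{LP}}$, $\mathbf{K_{LD}}$, so the impossibility stems from the single-input (rank-one) actuation through $\mathbf{B}$ rather than from any particular gain design.
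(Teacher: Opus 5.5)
Your proposal is correct and matches the paper's proof: you write $\mathbf{H}(s)=\mathbf{I}_2+\mathbf{R}(s)$, bound $\mathrm{rank}\,\mathbf{R}(s)\le 1$ using the outer product $\mathbf{B}\mathbf{K_L}(s)$ and $\mathrm{rank}(\mathbf{PQ})\le\min(\mathrm{rank}\,\mathbf{P},\mathrm{rank}\,\mathbf{Q})$, and then invoke Theorem~\ref{prop: multi-error not stable general}. Your extra remark is a small gain in rigor that the paper leaves implicit: $\mathbf{\hat{M}}(j\omega)+\mathbf{B}\mathbf{K_{fb}}(j\omega)$ is invertible on the imaginary axis because the stabilizing feedback gains make its determinant Hurwitz.
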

\begin{proof}
 In Eq. \eqref{eq: LEP DT y=E transfer function}, $\mathbf{B}\mathbf{K_{L}}(s)$ is the product of a column vector and a row vector. Hence, 
\begin{equation}
\text{rank}\left(\mathbf{B}\mathbf{K_{L}}(s)\right)\leq1,~~~~\forall s\in\mathbb{C}.
\end{equation}
 Using the identity $\text{rank}(\mathbf{PQ})\leq\min\left(\text{rank}(\mathbf{P}),\text{rank}(\mathbf{Q})\right)$, we have
 \begin{equation}
\text{rank}\Big(\underbrace{\left({\mathbf{\hat{M}}(s)}+\mathbf{B}\mathbf{K_{fb}}(s)\right)^{-1}\mathbf{B}\mathbf{K_{L}}(s)}_{\mathbf{R}(s)}\Big)\leq1,~~~~\forall s\in\mathbb{C}.
\end{equation}
Therefore, in Eq. \eqref{eq: LEP DT y=E transfer function}, $\mathbf{H}(s)=\mathbf{I}_2+\mathbf{R}(s)$, and $\mathbf{R}(s)$ is rank-deficient. Using Theorem \ref{prop: multi-error not stable general}, the proof is complete.
\end{proof}

\subsubsection{Lateral string stability analysis with respect to the output $\mathbf{y}_i = e_{lat,i}^{des}$}
For $\mathbf{y}_i=e_{lat,i}^{des}$, Eq. \eqref{eq: transfer function LFP DT} becomes
\begin{equation}
    \mathbf{e}^{des}_{i}(s) = \mathbf{e}^{des}_{i-1}(s)+\left({\mathbf{\hat{M}}(s)}+\mathbf{B}\mathbf{K_{fb}}(s)\right)^{-1}\mathbf{B}\mathbf{K_L}(s)e_{lat,i-1}^{des}(s),~~\forall i=2,3,\ldots,m.
\end{equation}
Pre-multiplying $[1~~0]$ on both sides of the above equation extracts the lateral error from the error vector:
\begin{align}
\label{eq: LFP DT siso}
    [1~~0]\mathbf{e}^{des}_{i}(s) &= [1~~0]\mathbf{e}^{des}_{i-1}(s)+[1~~0]\left({\mathbf{\hat{M}}(s)}+\mathbf{B}\mathbf{K_{fb}}(s)\right)^{-1}\mathbf{B}\mathbf{K_L}(s)e_{lat,i-1}^{des}(s) \notag \\
    \implies e_{lat,i}^{des}(s)&=\underbrace{\left(1+[1~~0]\left({\mathbf{\hat{M}}(s)}+\mathbf{B}\mathbf{K_{fb}}(s)\right)^{-1}\mathbf{B}\mathbf{K_L}(s)\right)}_{H(s)}e_{lat,i-1}^{des}(s)
\end{align}
In this case, $\mathcal{L}_2$ lateral string stability can, in fact, be achieved. However, additional insight into the controller structure can be gained via further analysis. For this purpose, we recall a result that will be useful in the next proposition, namely Bode’s sensitivity integral constraint, stated as follows.
\begin{lem}[Bode's Sensitivity Integral, \cite{goodwin2001control,emami2019bode}]
    \label{lem: Bode's}
    For any SISO closed-loop stable LTI system, let $G(s)$ denote the open-loop transfer function and ${S}(s):=\frac{1}{1+G(s)}$ denote the sensitivity function. If $G(s)$ is stable and strictly proper, then
    \begin{equation}
        \int_{0}^{\infty} \ln |S(j\omega)| \, d\omega =  - \frac{\pi}{2} \lim_{s \to \infty} sG(s).
    \end{equation}
\end{lem}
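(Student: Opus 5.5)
This lemma is the classical Bode sensitivity integral, and the plan is the standard contour-integration argument applied to $\ln S(s)$ over a Nyquist-type D-contour in the closed right half-plane.

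\textbf{Step 1: a branch of $\ln S$ exists on the closed right half-plane.} Since $G$ is stable, it has no poles in $\mathrm{Re}\,s\ge 0$. Hence $S=1/(1+G)$ has no zeros there, because the zeros of $S$ are exactly the poles of $G$. Since the closed loop is stable, $1+G$ has no zeros in $\mathrm{Re}\,s\ge 0$, so $S$ has no poles there. Thus $S$ is analytic and nonvanishing on the closed right half-plane, which is simply connected. Moreover $S(s)\to 1$ as $|s|\to\infty$ because $G$ is strictly proper. So we may fix the branch of $\ln S$ that tends to $0$ at infinity, and it is analytic on the region. If $S$ had zeros or poles on the imaginary axis, I would indent the contour around them with small semicircles. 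Their contributions vanish as the radius shrinks, since logarithmic singularities are integrable. I expect this to be the only delicate technical point, and I would handle it only in a remark.

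\textbf{Step 2: Cauchy's theorem on the D-contour.} Take the contour consisting of the segment from $-jR$ to $jR$ on the imaginary axis, followed by the semicircle $s=Re^{j\phi}$ with $\phi$ running from $\pi/2$ down to $-\pi/2$. Cauchy's theorem gives
\begin{equation*}
j\int_{-R}^{R}\ln S(j\omega)\,d\omega+\int_{\text{arc}}\ln S(s)\,ds=0 .
\end{equation*}
Because $G$ has real coefficients, $S(-j\omega)=\overline{S(j\omega)}$. So the imaginary part of $\ln S$ is odd in $\omega$ and cancels, and the first term equals $2j\int_0^R\ln|S(j\omega)|\,d\omega$.

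\textbf{Step 3: the arc contribution.} Let $k:=\lim_{s\to\infty}sG(s)$, which is finite since $G$ is strictly proper. On the arc, $\ln S=-\ln(1+G)=-G+O(|G|^2)$ with $G(s)=k/s+O(R^{-2})$. The arc has length $\pi R$, so the remainder terms contribute $O(R^{-1})\to 0$. The main term is
\begin{equation*}
-k\int_{\text{arc}}\frac{ds}{s}=-k\int_{\pi/2}^{-\pi/2}j\,d\phi=j\pi k .
\end{equation*}

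\textbf{Step 4: conclusion.} Letting $R\to\infty$ gives $2j\int_0^\infty\ln|S(j\omega)|\,d\omega+j\pi k=0$, that is, $\int_0^\infty\ln|S(j\omega)|\,d\omega=-\tfrac{\pi}{2}\lim_{s\to\infty}sG(s)$.

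The integral converges. Near infinity $\ln|S(j\omega)|=O(1/\omega^2)$: the $O(1/\omega)$ term $-\mathrm{Re}\,(k/(j\omega))$ vanishes because $k$ is real. Near any indentation points the singularity is only logarithmic, so it is integrable.
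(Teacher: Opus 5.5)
Your argument is correct. It is the standard Cauchy-theorem proof applied to $\ln S$ on the right-half-plane D-contour, which is how the result is established in the references the paper cites; the paper itself states the lemma without proof. Two cosmetic points: the indentation remark is unnecessary, since Step 1 already rules out zeros and poles of $S$ on the imaginary axis, and the oddness of $\arg S(j\omega)$ used in Step 2 holds because the branch normalized to vanish at infinity is unique, so it satisfies $\ln S(\bar{s})=\overline{\ln S(s)}$.
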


In the next proposition, we demonstrate why it is crucial in the LFP strategy to learn from the predecessor’s lateral error derivative $\left(e_{lat,i-1}^{des}\right)'$, by employing the derivative learning gain $\mathbf{K_{LD}}$. Note that the proposition is based on a mild assumption that $H(s)$ does not have any right-half plane (RHP) zero, which is typically satisfied and can be verified numerically for practical parameters. Here, we recall that $\mathbf{\hat{M}}(s)=s^2v_x^2\mathbf{M} + sv_x\mathbf{C}  + \mathbf{L}$, $\mathbf{K_{fb}}(s)=\mathbf{K_P} + s v_x \mathbf{K_D}$, and $\mathbf{K_L}=\mathbf{K_{LP}}+s\mathbf{K_{LD}}$. Also, note that for $\mathbf{y}_i=e_{lat,i}^{des}$, the corresponding learning gains $\mathbf{K_{LP}}$ and $\mathbf{K_{LD}}$ are scalars. 
\begin{prop}
\label{prop: LFP DT K_ld=0 not stable}
    Suppose $H(s)$ defined in Eq. \eqref{eq: LFP DT siso} does not have any RHP zero. If the derivative learning gain, $\mathbf{K_{LD}}$, equals zero, then the LFP strategy under V2V mode cannot be $\mathcal{L}_2$ lateral string stable with respect to the output $\mathbf{y}_i=e_{lat,i}^{des}$.
\end{prop}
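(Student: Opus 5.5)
The plan is to reduce the claim to Bode's sensitivity integral (Lemma~\ref{lem: Bode's}). With $\mathbf{K_{LD}}=0$, write the transfer function in Eq.~\eqref{eq: LFP DT siso} as
\begin{equation*}
H(s)=1+G(s),\qquad G(s):=[1~~0]\left(\mathbf{\hat{M}}(s)+\mathbf{B}\mathbf{K_{fb}}(s)\right)^{-1}\mathbf{B}\,\mathbf{K_{LP}},
\end{equation*}
and regard $S(s):=1/H(s)=1/(1+G(s))$ as the sensitivity function of a fictitious unity-feedback loop with open-loop transfer function $G(s)$. String stability requires $\|H\|_{\mathcal{H}_\infty}<1$ by Theorem~\ref{prop: LFP DT frequency-domain condition}. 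I will show instead that $\int_0^\infty \ln|H(j\omega)|\,d\omega=0$, which is incompatible with $|H(j\omega)|<1$ for all $\omega$.

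First I would verify the hypotheses of Lemma~\ref{lem: Bode's}.

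\emph{Stability of $G$.} The poles of $G$ are among the roots of $\det(\mathbf{\hat{M}}(s)+\mathbf{B}\mathbf{K_{fb}}(s))$. This is the closed-loop characteristic polynomial, which is Hurwitz by the choice of feedback gains.

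\emph{Strict properness.} This degree count is the key step. $\mathbf{\hat{M}}(s)+\mathbf{B}\mathbf{K_{fb}}(s)$ is a $2\times 2$ polynomial matrix with entries of degree at most $2$, and its leading part is $s^2v_x^2\mathbf{M}$ with $\mathbf{M}$ invertible. Hence its determinant has degree exactly $4$. The adjugate entries have degree at most $2$, and $\mathbf{B}\mathbf{K_{LP}}$ is constant. Therefore $G(s)$ has relative degree at least $2$, which gives
\begin{equation*}
\lim_{s\to\infty} sG(s)=0.
\end{equation*}

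\emph{Closed-loop stability of the fictitious loop.} This means $S=1/H$ must be stable, i.e., $H$ has no zeros in the closed RHP. That is exactly the standing assumption that $H(s)$ has no RHP zero. I would note that the boundary case of zeros on the imaginary axis only makes $\ln|H|$ equal to $-\infty$ on a measure-zero set; it can either be excluded by the assumption or handled by a limiting argument. I expect this to be the only delicate point.

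Applying Lemma~\ref{lem: Bode's} then yields
\begin{equation*}
\int_0^\infty \ln|H(j\omega)|\,d\omega=-\int_0^\infty \ln|S(j\omega)|\,d\omega=\frac{\pi}{2}\lim_{s\to\infty}sG(s)=0.
\end{equation*}
Suppose, for contradiction, that the platoon were $\mathcal{L}_2$ lateral string stable. Then $|H(j\omega)|\le\gamma<1$ for all $\omega$, so $\ln|H(j\omega)|\le\ln\gamma<0$ everywhere and the integral would be strictly negative (indeed $-\infty$). This contradicts the zero value just obtained, and Theorem~\ref{prop: LFP DT frequency-domain condition} completes the argument.

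As a closing remark, the same degree count explains the role of $\mathbf{K_{LD}}$. When $\mathbf{K_{LD}}\neq 0$, $G$ has relative degree $1$, so $\lim_{s\to\infty} sG(s)$ is a nonzero constant. A suitable sign of this constant makes the integral of $\ln|H|$ negative, so attenuation at all frequencies is no longer ruled out.
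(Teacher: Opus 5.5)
Your proposal is correct and follows the paper's proof essentially step for step. You write $H=1+G$ and use the degree count (quartic $\det(\mathbf{\hat{M}}(s)+\mathbf{B}\mathbf{K_{fb}}(s))$ against an adjugate numerator of degree at most $2$) to get $\lim_{s\to\infty}sG(s)=0$; you take stability of $G$ from the Hurwitz closed-loop polynomial and stability of $S=1/H$ from the no-RHP-zero hypothesis; and you apply Lemma~\ref{lem: Bode's} to conclude $\int_0^\infty\ln|H(j\omega)|\,d\omega=0$, which is incompatible with $\|H\|_{\mathcal{H}_\infty}<1$. Your ``relative degree at least $2$'' phrasing (which also covers $\mathbf{K_{LP}}=0$) and your caveat about imaginary-axis zeros are minor refinements of the same argument, not a different route.
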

\begin{proof}
    Let 
    \begin{equation}
        G(s)=[1~~0]\left({\mathbf{\hat{M}}(s)}+\mathbf{B}\mathbf{K_{fb}}(s)\right)^{-1}\mathbf{BK_L}(s),
    \end{equation}
 so that $H(s)=1+G(s)$ in Eq. \eqref{eq: LFP DT siso}. In the above expression, by expansion,
\begin{align}
\label{eq: G(s) denominator degree}
&{\mathbf{\hat{M}}(s)}+\mathbf{B}\mathbf{K_{fb}}(s) \notag \\
=
&\begin{bmatrix}
 s^2 m v_x^2 + (C_f k_{\dot e_{lat}} v_x+C_f+C_r)+C_f k_{e_{lat}} & s (a C_f-b C_r+C_f k_{\dot{\tilde{\theta}}} v_x)+C_f
   (k_{\tilde{\theta}}-1)-C_r \\[1.5ex]
 s \left(a (C_f k_{\dot e_{lat}} v_x+C_f)-b C_r\right)+a C_f k_{e_{lat}} & s^2 I_z v_x^2 + s (a^2 C_f+a C_f k_{\dot{\tilde{\theta}}} v_x+b^2 C_r)+a C_f
   (k_{\tilde{\theta}}-1)+b C_r \\
\end{bmatrix}.
\end{align}
Note that if $\mathbf{K_{LD}}=0$, then $\mathbf{K_L}(s)=\mathbf{K_{LP}}$, we have
\begin{align}
\label{eq: LFP DT K_ld=0 G(s)}
G(s)=\frac{[1~~0]\text{adj}\left({\mathbf{\hat{M}}(s)}+\mathbf{B}\mathbf{K_{fb}}(s)\right)\mathbf{BK_{LP}}}{\text{det}\left({\mathbf{\hat{M}}(s)}+\mathbf{B}\mathbf{K_{fb}}(s)\right)} 
=\frac{
    \mathbf{K_{LP}}C_f\left(s^2 I_z v_x^2 + s \left(b^2 C_r+ab C_r\right)+b C_r  +aC_r\right)}
{\text{det}\left({\mathbf{\hat{M}}(s)}+\mathbf{B}\mathbf{K_{fb}}(s)\right)},
\end{align}
From Eq. \eqref{eq: G(s) denominator degree}, it is clear that $\text{det}\left({\mathbf{\hat{M}}(s)}+\mathbf{B}\mathbf{K_{fb}}(s)\right)$ is a quartic polynomial of $s$, which means that $G(s)$ has a relative degree of 2.

Since $H(s)$ does not have any RHP zero, $S(s)=\frac{1}{H(s)}=\frac{1}{1+G(s)}$ is stable. This allows us to construct a "virtual" closed-loop stable system, with $G(s)$ being the open-loop transfer function, as shown in Fig. \ref{fig: closed_loop}. Note that this "virtual" system does not have much physical meaning, but it allows us to invoke Lemma \ref{lem: Bode's}. 
\begin{figure}[htb!]
    \centering
    \includegraphics[width = 0.2\linewidth]{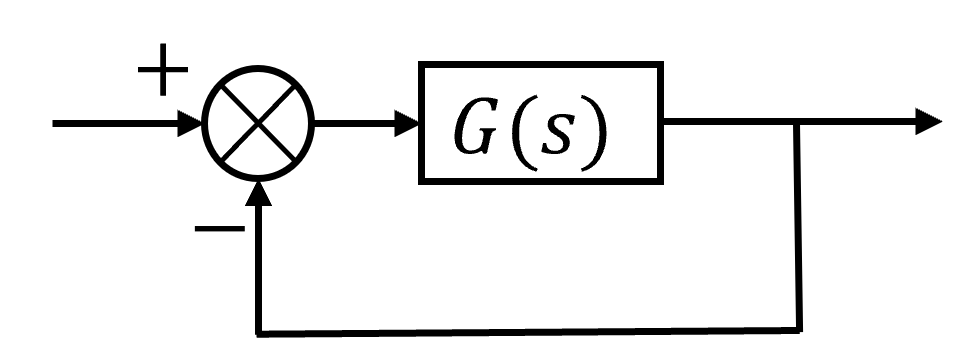}
    \caption{The "virtual" closed-loop system}
    \label{fig: closed_loop}
\end{figure}

Since $\mathbf{K_{fb}}$ is designed to stabilize the error vector's evolution, we have all the roots of $\text{det}\left({\mathbf{\hat{M}}(s)}+\mathbf{B}\mathbf{K_{fb}}(s)\right)$ in the left-half plane and $G(s)$ being stable for any practical controller. It has been shown from Eq. \eqref{eq: LFP DT K_ld=0 G(s)} that $G(s)$ is proper with a relative degree of 2. Hence, $sG(s)$ has a relative degree of 1. By Lemma \ref{lem: Bode's},
\begin{equation}
\label{eq: LFP DT sensitivity}
        \int_{0}^{\infty} \ln |S(j\omega)| \, d\omega =  - \frac{\pi}{2} \lim_{s \to \infty} sG(s)=0.
    \end{equation}
Recall that $S(s)=\frac{1}{H(s)}$, we have
\begin{equation}
\label{eq: LFP FT H(s)}
        \int_{0}^{\infty} \ln |H(j\omega)| \, d\omega = \int_{0}^{\infty} \ln \left|\frac{1}{S(j\omega)}\right| \, d\omega=\int_{0}^{\infty} \ln \frac{1}{\left|S(j\omega)\right|} \, d\omega=\int_{0}^{\infty} -\ln \left|{S(j\omega)}\right| \, d\omega=0.
    \end{equation}
The above equation indicates that if there exist some frequencies where $|H(j\omega)|<1$ (which indicates $\ln|H(j\omega)|<0$), there must be some other frequencies where $|H(j\omega)|>1$ (which indicates $\ln|H(j\omega)|>0$). 

Therefore, the $\mathcal{H}_\infty$ norm of ${H}(s)$ satisfies 
\begin{equation}
    \|{H}(s)\|_{\mathcal{H}_\infty}:=\sup_{\omega\in\mathbb{R}}\left|{H}(j\omega)\right|\geq1.
\end{equation}
which violates the condition in Theorem \ref{prop: LFP DT frequency-domain condition}, and the proof is complete.
\end{proof}

\subsubsection{Controller design of learn-from-predecessor}
Proposition \ref{prop: LFP DT K_ld=0 not stable} highlights the critical role of the derivative learning gain $\mathbf{K_{LD}}$. When $\mathbf{K_{LD}}\neq0$, it becomes possible to suppress $\left\|H(s)\right\|_{\mathcal{H}_\infty}$ to be strictly less than one, and thereby ensure lateral string stability. Moreover, the steps in the proof of Proposition \ref{prop: LFP DT K_ld=0 not stable} can be instructive for the design of $\mathbf{K_{LD}}$. Specifically, for $\mathbf{K_{LD}}\neq0$, Eq. \eqref{eq: LFP DT K_ld=0 G(s)} becomes
\begin{align}
G(s)
=\frac{
    \left(\mathbf{K_{LP}}+\mathbf{K_{LD}}s\right)C_f\left(s^2 I_z v_x^2 + s \left(b^2 C_r+ab C_r\right)+b C_r  +aC_r\right)}
{\text{det}\left({\mathbf{\hat{M}}(s)}+\mathbf{B}\mathbf{K_{fb}}(s)\right)},
\end{align}
which has a relative degree of 1. Expanding the above equation, we can obtain that Eq. \eqref{eq: LFP DT sensitivity} becomes
\begin{equation}
        \int_{0}^{\infty} \ln |S(j\omega)| \, d\omega =  - \frac{\pi}{2} \lim_{s \to \infty} sG(s)=- \frac{\pi}{2}\frac{C_f\mathbf{K_{LD}}}{mv_x^2}.
    \end{equation}
If we design $\mathbf{K_{LD}}<0$, then by Eq. \eqref{eq: LFP FT H(s)}, 
\begin{equation}
        \int_{0}^{\infty} \ln |H(j\omega)| \, d\omega =\int_{0}^{\infty} -\ln \left|{S(j\omega)}\right| \, d\omega<0.
    \end{equation}
Hence, it is possible to achieve $|H(j\omega)|<1$ for every frequency.

Another value that is instructive for controller design is $H(0)$, since $|H(0)|<1$ is essential for $\|H(s)\|_{\mathcal{H}_\infty}<1$. Expanding the expression of $H(s)$, defined in Eq. \eqref{eq: LFP DT siso}, and evaluating it at $s=0$, we obtain
\begin{equation}
    H(0)=\frac{k_{e_{lat}}+\mathbf{K_{LP}}}{k_{e_{lat}}}.
\end{equation}
To make $|H(0)|<1$, we must choose $-2k_{e_{lat}}<\mathbf{K_{LP}}<0$.

Note that although the conditions $\mathbf{K_{LD}}<0$ and $-2k_{e_{lat}}<\mathbf{K_{LP}}<0$ are useful for controller design, they are necessary but not sufficient conditions. To ensure $\mathcal{L}_2$ lateral string stability, the frequency-domain condition $\|H(s)\|_{\mathcal{H}_\infty}<1$ given by Theorem \ref{prop: LFP DT frequency-domain condition} must be satisfied. A more numerically tractable sufficient condition that guarantees the frequency-domain requirement can be derived as follows. Let $N(s)$ and $D(s)$ denote the numerator and denominator of $H(s)$, respectively, then
\begin{align}
    \label{eq: string stable sufficient condition}\|H(j\omega)\|_{\mathcal{H}_\infty}<1\iff
    \frac{\left|N(j\omega)\right|}{\left|D(j\omega)\right|}<1,\forall \omega\in\mathbb{R} \iff
    \left|D(j\omega)\right|^2-\left|N(j\omega)\right|^2>0,\forall \omega\in\mathbb{R}.
\end{align}
Substituting the explicit expression of $H(s)$ defined in Eq.~\eqref{eq: LFP DT siso} and letting $s = j\omega$, we can obtain
\begin{equation}
    |D(j\omega)|^2 - |N(j\omega)|^2 = a_6 \omega^6 + a_4 \omega^4 + a_2 \omega^2 + a_0,
\end{equation}
where the coefficients $a_6$, $a_4$, $a_2$, and $a_0$ depend on the vehicle parameters and controller gains. Noting that $\omega^6$, $\omega^4$, and $\omega^2$ are nonnegative for all $\omega \in \mathbb{R}$, a sufficient condition for ensuring $\|H(j\omega)\|_{\mathcal{H}_\infty} < 1$ is
\begin{equation}
\label{eq: string stable coefficient condition}
    a_6 > 0, \quad a_4 > 0, \quad a_2 > 0, \quad a_0 > 0.
\end{equation}

\begin{rem}
\label{rem: design}
    Based on the above discussion, the design procedure for an LFP controller that ensures $\mathcal{L}_2$ lateral string stability with respect to the output $\mathbf{y}_i = e_{lat,i}^{des}$ can be summarized as the following (possibly iterative) steps:

    (a) Design the feedback gains $\mathbf{K_P} = [k_{e_{lat}} \quad k_{\tilde{\theta}}]$ and $\mathbf{K_D} = [k_{\dot{e}_{lat}} \quad k_{\dot{\tilde{\theta}}}]$ that are stablizing, and design the feedforward gain $k_{ff}$ based on desired steady-state behaviors, as discussed in Subsection \ref{sec3.3};

    (b) Select the learning gains such that $\mathbf{K_{LD}} < 0$ and $-2k_{e_{lat}} < \mathbf{K_{LP}} < 0$;

    (c) Compute $|D(j\omega)|^2 - |N(j\omega)|^2$ as defined in Eq.~\eqref{eq: string stable sufficient condition} and verify if the coefficient condition in Eq.~\eqref{eq: string stable coefficient condition} is satisfied.
\end{rem}

\section{Numerical simulations}
\label{sec6}
To validate the proposed theoretical findings, we perform numerical simulation tests using parameters based on real-world data. A platoon of 6 vehicles is considered, with the dynamics of each vehicle described by a bicycle model. The parameters of the bicycle model are identified from a Lincoln MKZ (\citep{liu2020lateral}), as listed in Table~\ref{table: parameters}. The platoon operates on a test track at $v_x=10m/s$ with the goal of tracking a desired path. The test track and corresponding desired path are shown in Fig.~\ref{fig: track}, where the grey area denotes the track, the blue line indicates the desired path, and the blue box marks the location where $l_d=0$. The desired path consists of four lane changes over the course of a full circuit, with absolute turning radii ranging from $7.4m$ to $2.9\times 10^4m$, thereby capturing road characteristics of both highway and urban scenarios.
\begin{table}[ht]
\centering
\footnotesize
\caption{Parameters of the Lincoln MKZ}
\begin{tabular}{cccc}
\hline
\hline
\textbf{Parameter} & \textbf{Symbol} & \textbf{Unit} & \textbf{Value}  \\
\hline
Mass & $m$ & kg & 1896 \\
Moment of Inertia & $I_z$ & kg$\cdot$m$^2$ & 3803  \\
Front Cornering Stiffness & $C_{f}$ & N/rad & 400000  \\
Rear Cornering Stiffness & $C_{r}$ & N/rad & 381900  \\
Distance from C.G. to front axle & $a$ & m & 1.2682   \\
Distance from C.G. to rear axle & $b$ &m & 1.5818   \\
\hline
\hline
\label{table: parameters}
\end{tabular}
\end{table}
\begin{figure}[htb!]
    \centering
    \includegraphics[width = 0.4\linewidth]{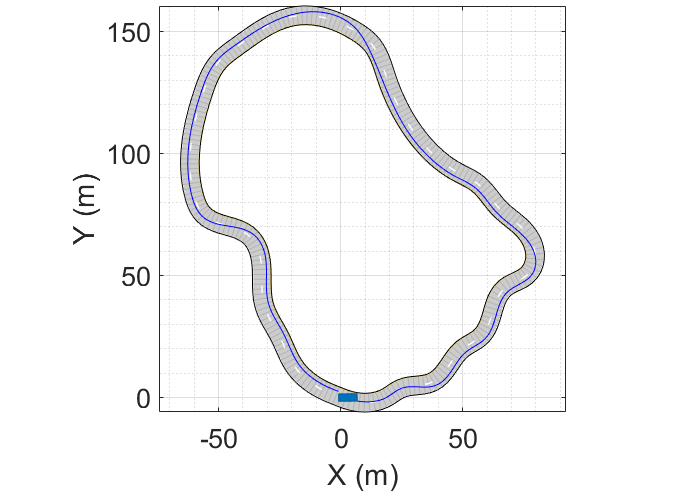}
    \caption{Test track and desired path}
    \label{fig: track}
\end{figure}

Using the setup described in the preceding paragraph, we validate the proposed theoretical findings. Specifically, in Subsection \ref{sec 6.1}, we design the LFP strategy under V2V mode, following the procedure in Remark \ref{rem: design}, and validate that it is $\mathcal{L}_2$ lateral string stable with respect to the output $\mathbf{y}_i=e_{lat,i}^{des}$. In Subsection \ref{sec 6.2}, we set the derivative learning gain $\mathbf{K_{LD}}$ of the LFP strategy to zero, and validate the instability predicted by Proposition \ref{prop: LFP DT K_ld=0 not stable}. Finally, in Subsection \ref{sec 6.3}, we modify the LFP strategy to the FF strategy under onboard sensing mode, and validate the instability results in Propositions \ref{prop: FF PT y=E notstable} and \ref{prop: FF PT y=e notstable}.

\subsection{Learn-from-predecessor strategy under V2V mode}
\label{sec 6.1}
For the LFP strategy under V2V mode, we follow the procedure in Remark~\ref{rem: design} to achieve $\mathcal{L}_2$ lateral string stability with respect to the output $\mathbf{y}_i = e_{lat,i}^{des}$.  
In step~(a) of Remark~\ref{rem: design}, we leverage existing literature by adopting the stabilizing feedback gains $\mathbf{K_P} = [k_{e_{lat}} \quad k_{\tilde{\theta}}]$ and $\mathbf{K_D} = [k_{\dot{e}_{lat}} \quad k_{\dot{\tilde{\theta}}}]$ from~\citep{liu2020lateral}, which were designed for the same Lincoln MKZ vehicle considered in this paper.  
The feedforward gain $k_{ff}$ is adopted from~\citep{rajamani2011vehicle}, which ensures zero steady-state lateral error when tracking a circular arc, and is computed as $k_{ff} = a + b + \frac{m v_x^2}{a+b} \left( \tfrac{b}{C_f} - \tfrac{a}{C_r} + \tfrac{a}{C_r} k_{\tilde{\theta}} \right) - b k_{\tilde{\theta}}$.  
The learning gains $\mathbf{K_{LP}}$ and $\mathbf{K_{LD}}$ (scalars for the output $\mathbf{y}_i = e_{lat,i}^{des}$) are selected within the admissible range specified in step~(b) of Remark~\ref{rem: design}.  
The designed numerical values of all controller gains are listed in Table~\ref{tab: controller gains design}.
\begin{table}[h]
\centering
\footnotesize
\caption{Numerical values of the controller gains for the designed LFP strategy}
\begin{tabular}{c|c|c|c|c|c|c}
\hline \hline
$k_{e_{lat}}$ & $k_{\tilde{\theta}}$ & $k_{\dot e_{lat}}$ & $k_{\dot{\tilde{\theta}}}$ & $k_{ff}$ & $\mathbf{K_{LP}}$ & $\mathbf{K_{LD}}$\\[1.0001ex]
\hline
0.06 & 0.96 & 0  & 0.08 & 1.59 & -0.04  & -0.3 \\
\hline \hline
\end{tabular}
\label{tab: controller gains design}
\end{table}

With the designed gain values, the polynomial $|D(j\omega)|^2 - |N(j\omega)|^2$ in Eq.~\eqref{eq: string stable sufficient condition} has coefficients $a_6 = 2.91\times 10^{22}$, $a_4 = 4.42\times 10^{23}$, $a_2 = 5.70\times 10^{22}$, and $a_0 = 6.07\times 10^{20}$, which satisfy the sufficient condition for $\mathcal{L}_2$ lateral string stability stated in Eq.~\eqref{eq: string stable coefficient condition}.

\begin{figure}[h]
    \centering
    \subfigure[traveled paths in $X-Y$ plane]{\includegraphics[width=0.35\textwidth]{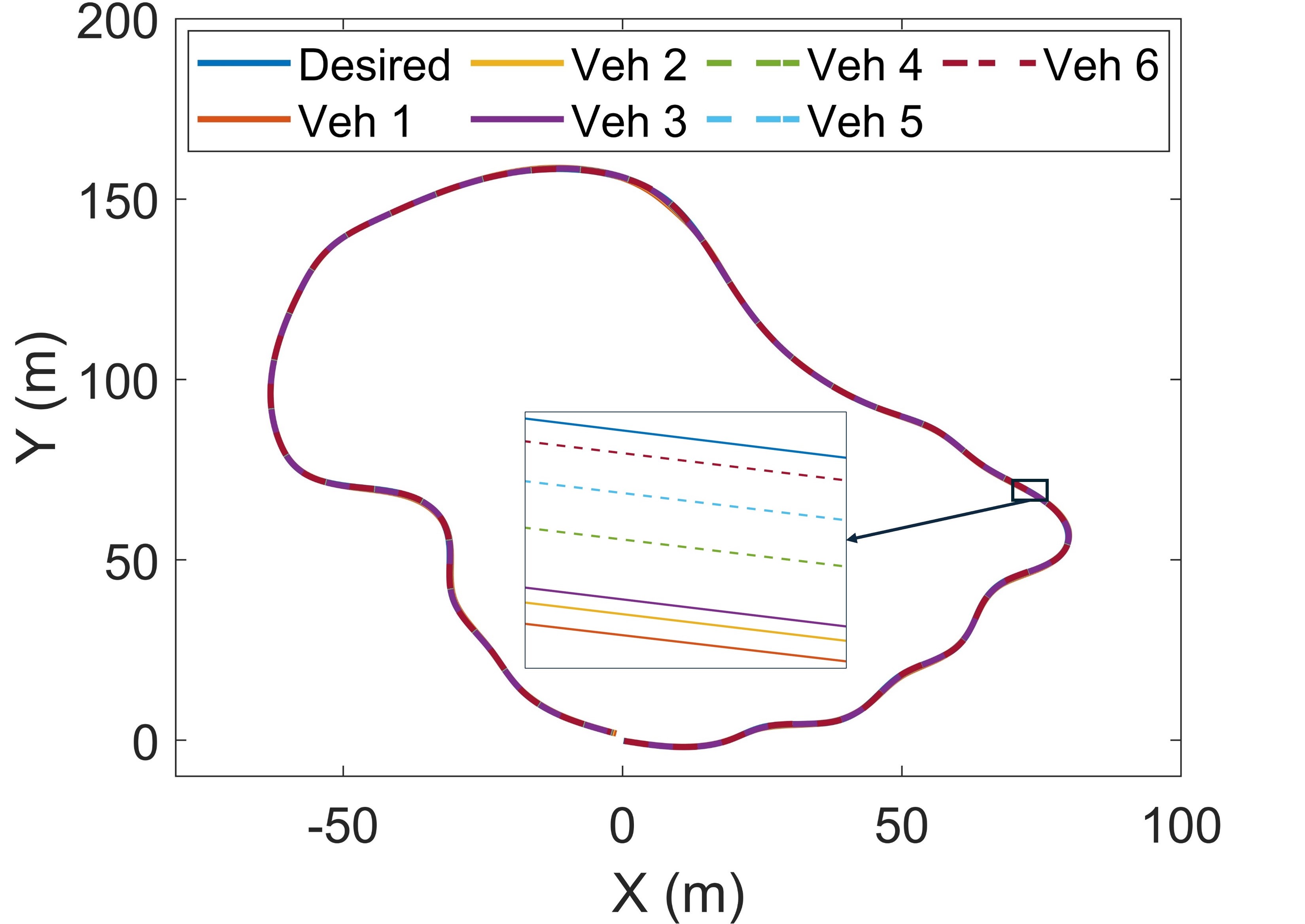}} \\
    \subfigure[lateral error v.s. arc length]{\includegraphics[width=0.35\textwidth]{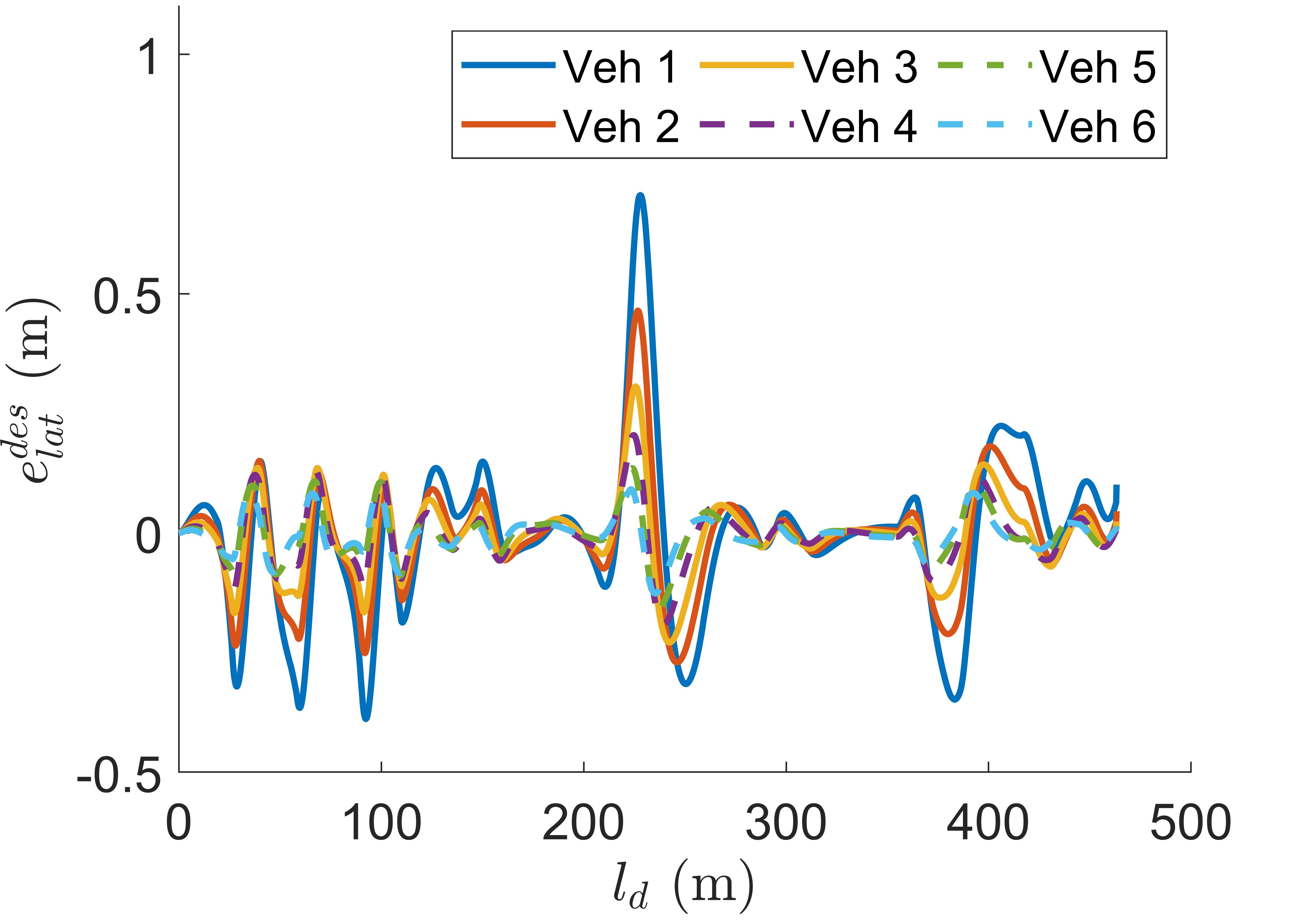}}
    \subfigure[heading error v.s. arc length]
{\includegraphics[width=0.35\textwidth]{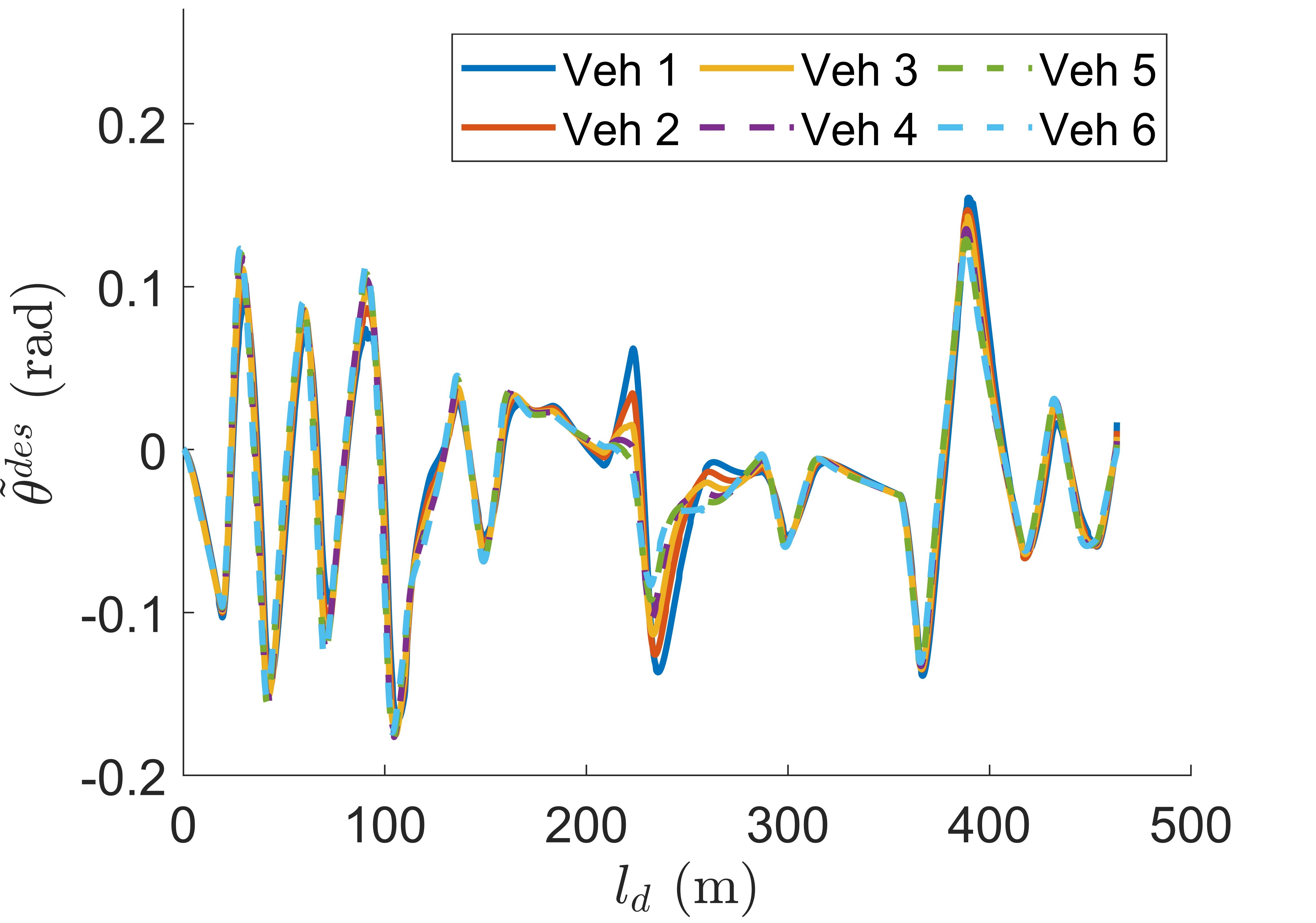}}
    \caption{Simulation results for the LFP strategy under V2V mode}
    \label{fig: LFP DT results}
    \end{figure}
We conduct a simulation experiment using the setup described at the beginning of Section \ref{sec6}. Fig. \ref{fig: LFP DT results} presents (a) the traveled paths in the \(X\)–\(Y\) plane, (b) the lateral error \(e_{lat}^{des}\) versus arc length \(l_d\), and (c) the heading error \(\tilde{\theta}^{des}\) versus \(l_d\) for all vehicles. As seen in Fig. \ref{fig: LFP DT results}(b) and in the zoomed-in view of Fig. \ref{fig: LFP DT results}(a), the lateral error \(e_{lat}^{des}\) consistently diminishes from one vehicle to the next, providing clear visual confirmation of lateral string stability with respect to \(\mathbf{y}_i = e_{lat,i}^{des}\). In contrast, Fig. \ref{fig: LFP DT results}(c) reveals no consistent attenuation pattern in the heading error \(\tilde{\theta}^{des}\).

A more quantitative comparison is given in Fig.~\ref{fig: LFP DT quantitative results}, which plots the $\mathcal{L}_2$ norms of the two outputs of interest: $e_{lat}^{des}$ in Fig.~\ref{fig: LFP DT quantitative results}(a) and $\mathbf{e}^{des}$ in Fig.~\ref{fig: LFP DT quantitative results}(b). The $\mathcal{L}_2$ norms are obtained by approximating the continuous-time integrals with discrete-time summations over the simulation horizon. To better highlight the pattern, the platoon length is extended to 12 vehicles for the quantitative results. As seen in Fig.~\ref{fig: LFP DT quantitative results}(a), the lateral error norm decreases from one vehicle to the next, confirming the $\mathcal{L}_2$ lateral string stability of the LFP strategy with respect to $\mathbf{y}_i = e_{lat,i}^{des}$. Notably, Fig.~\ref{fig: LFP DT quantitative results}(b) indicates that, even though Proposition~\ref{prop: LFP DT y=E not stable} states that $\mathcal{L}_2$ lateral string stability with respect to $\mathbf{y}_i = \mathbf{e}_i^{des}$ is not guaranteed in all cases, it is nevertheless observed to hold empirically in this specific test scenario.
\begin{figure}[h]
    \centering
    \subfigure[$\mathcal{L}_2$ norm of $e_{lat}^{des}$]{\includegraphics[width=0.35\textwidth]{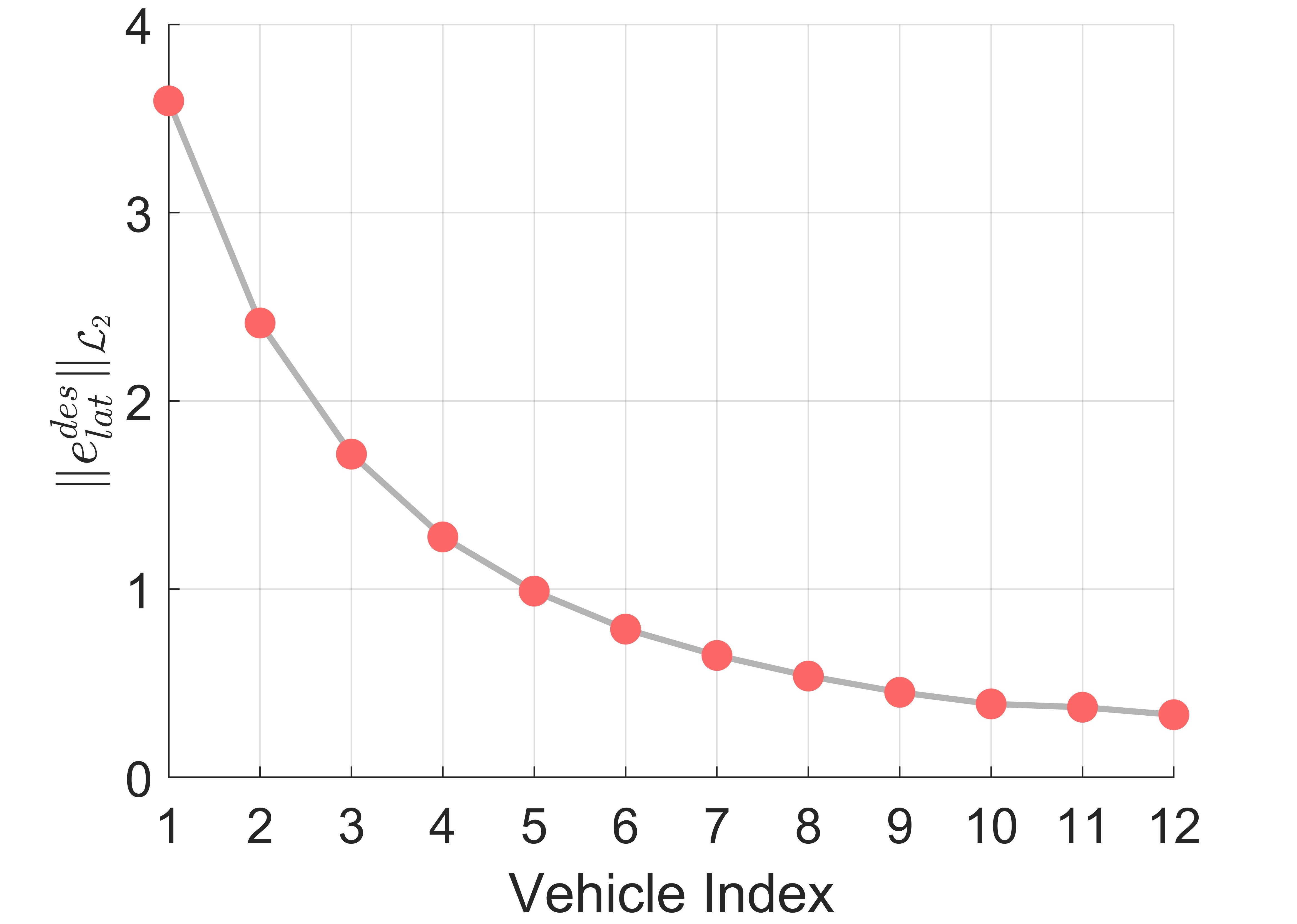}} 
    \subfigure[$\mathcal{L}_2$ norm of $\mathbf{e}^{des}$]{\includegraphics[width=0.35\textwidth]{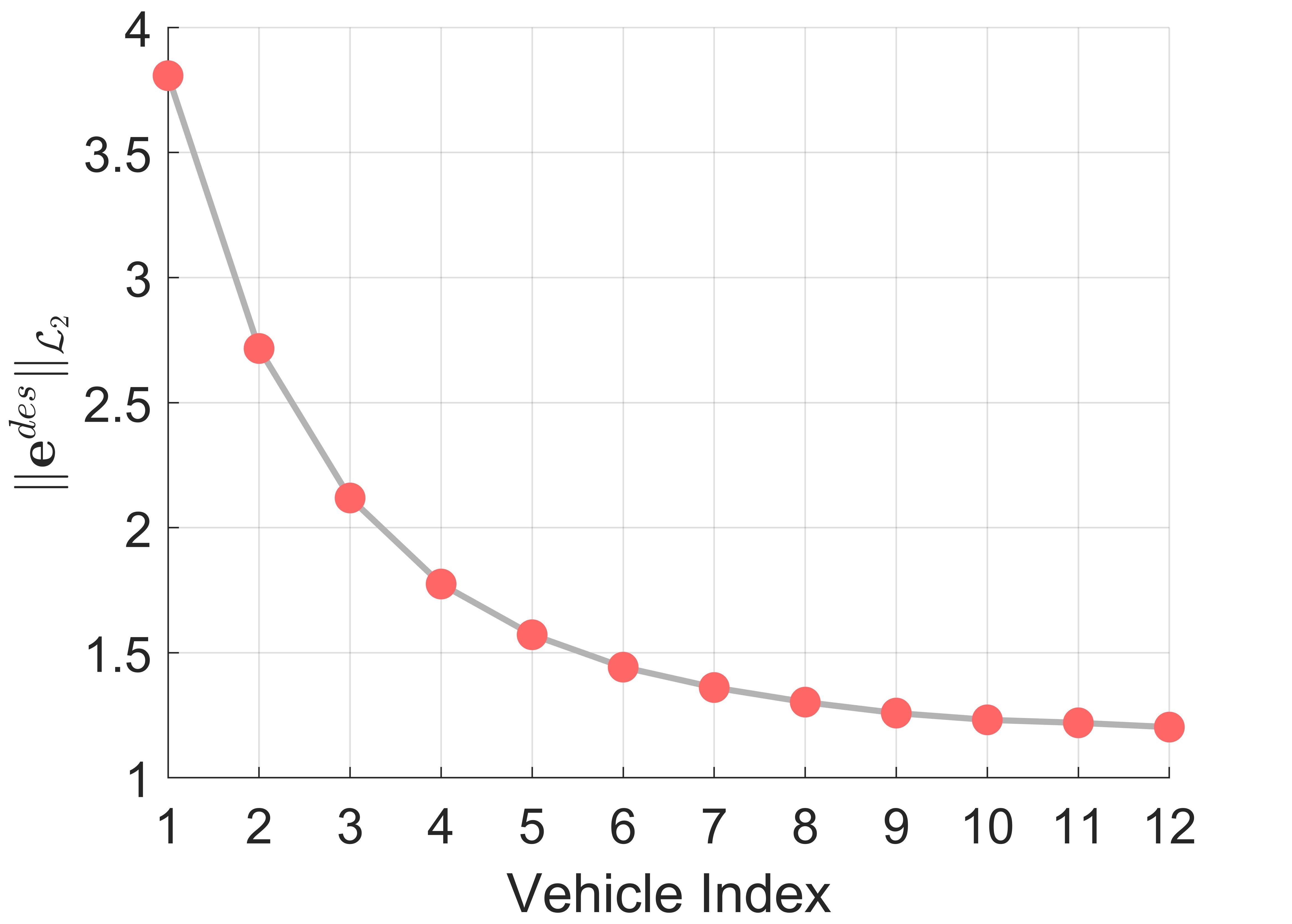}}
    \caption{Quantitative results for the LFP strategy under V2V mode}
    \label{fig: LFP DT quantitative results}
    \end{figure}

\subsection{Learn-from-predecessor strategy under V2V mode with $\mathbf{K_{LD}}=0$}
\label{sec 6.2}
In this subsection, we examine the importance of the derivative learning gain $\mathbf{K_{LD}}$ for $\mathcal{L}_2$ lateral string stability, as established in Proposition \ref{prop: LFP DT K_ld=0 not stable}. Starting from the LFP strategy gains in Subsection~\ref{sec 6.1}, we set $\mathbf{K_{LD}}=0$ while keeping all other gains identical to those in Table~\ref{tab: controller gains design}.

\begin{figure}[h]
    \centering
    \subfigure[traveled paths in $X-Y$ plane]{\includegraphics[width=0.35\textwidth]{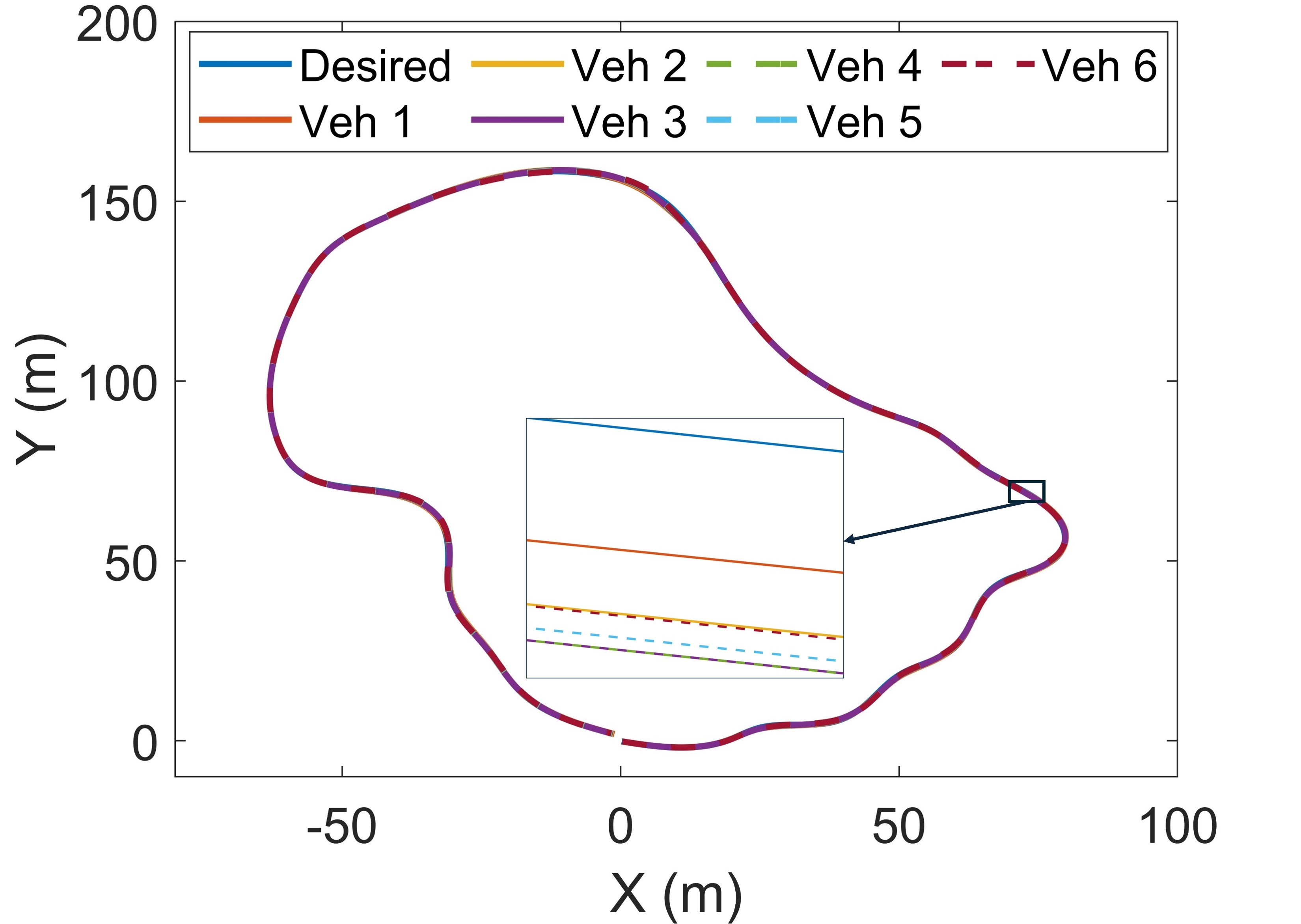}} \\
    \subfigure[lateral error v.s. arc length]{\includegraphics[width=0.35\textwidth]{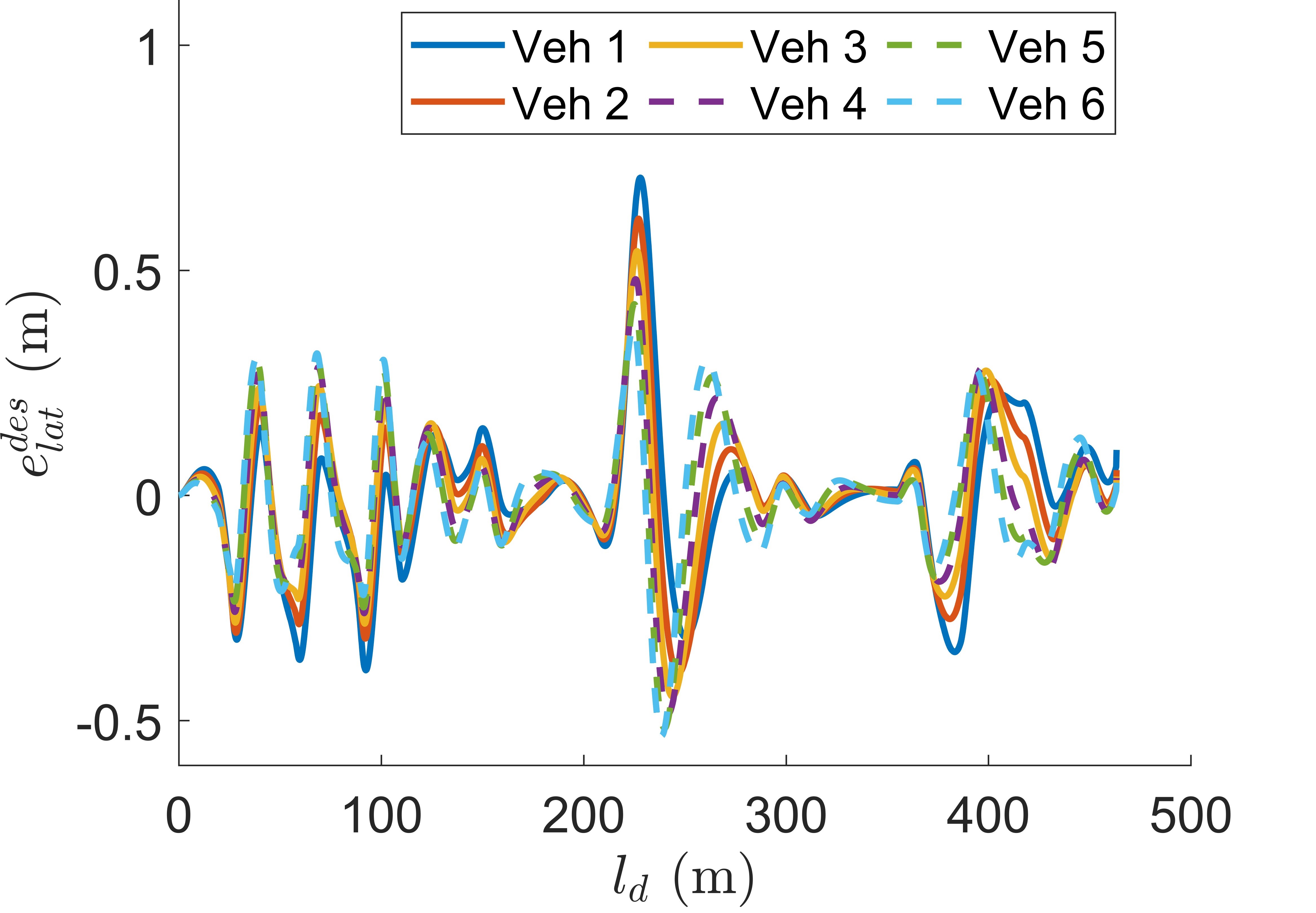}}
    \subfigure[heading error v.s. arc length]
{\includegraphics[width=0.35\textwidth]{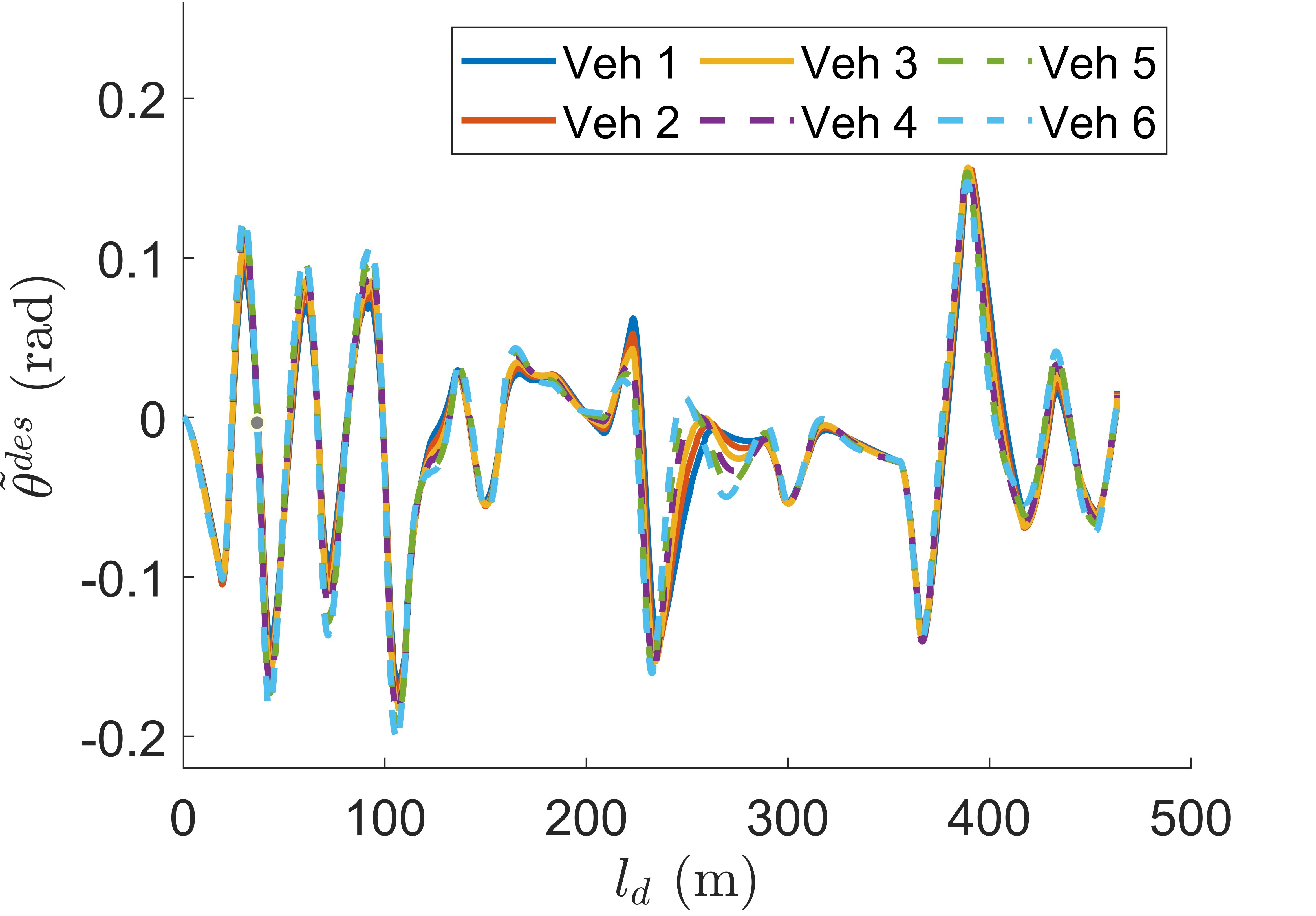}}
    \caption{Simulation results for the LFP strategy under V2V mode with $\mathbf{K_{LD}}=0$}
    \label{fig: LFP DT K_LD results}
    \end{figure}
Using the same simulation setup described at the beginning of Section~\ref{sec6}, Fig.~\ref{fig: LFP DT K_LD results} shows (a) the traveled paths in the $X$–$Y$ plane, (b) the lateral error $e_{lat}^{des}$ versus arc length $l_d$, and (c) the heading error $\tilde{\theta}^{des}$ versus $l_d$ for all vehicles. Compared to Fig.~\ref{fig: LFP DT results}, the error attenuation is notably weaker with $\mathbf{K_{LD}}=0$, and Fig.~\ref{fig: LFP DT K_LD results}(b) even reveals clear amplifications of the lateral error at local peaks.

To highlight this effect more clearly, we extend the platoon size to 12 vehicles and present quantitative results in Fig.~\ref{fig: LFP DT K_LD quantitative results}, which plots the $\mathcal{L}_2$ norms of $e_{lat}^{des}$ and $\mathbf{e}^{des}$ in panels (a) and (b), respectively. The $\mathcal{L}_2$ norms are obtained by approximating the continuous-time integrals with discrete-time summations over the simulation horizon. As shown, the lateral error norm begins to grow beyond the 5$^{\text{th}}$ vehicle, and the error vector norm beyond the 4$^{\text{th}}$, confirming the lack of $\mathcal{L}_2$ lateral string stability for both output definitions $\mathbf{y}_i = e_{lat,i}^{des}$ and $\mathbf{y}_i = \mathbf{e}_i^{des}$. These findings align with the instability predicted in Propositions~\ref{prop: LFP DT K_ld=0 not stable} and~\ref{prop: LFP DT y=E not stable}.
\begin{figure}[h]
    \centering
    \subfigure[$\mathcal{L}_2$ norm of $e_{lat}^{des}$]{\includegraphics[width=0.35\textwidth]{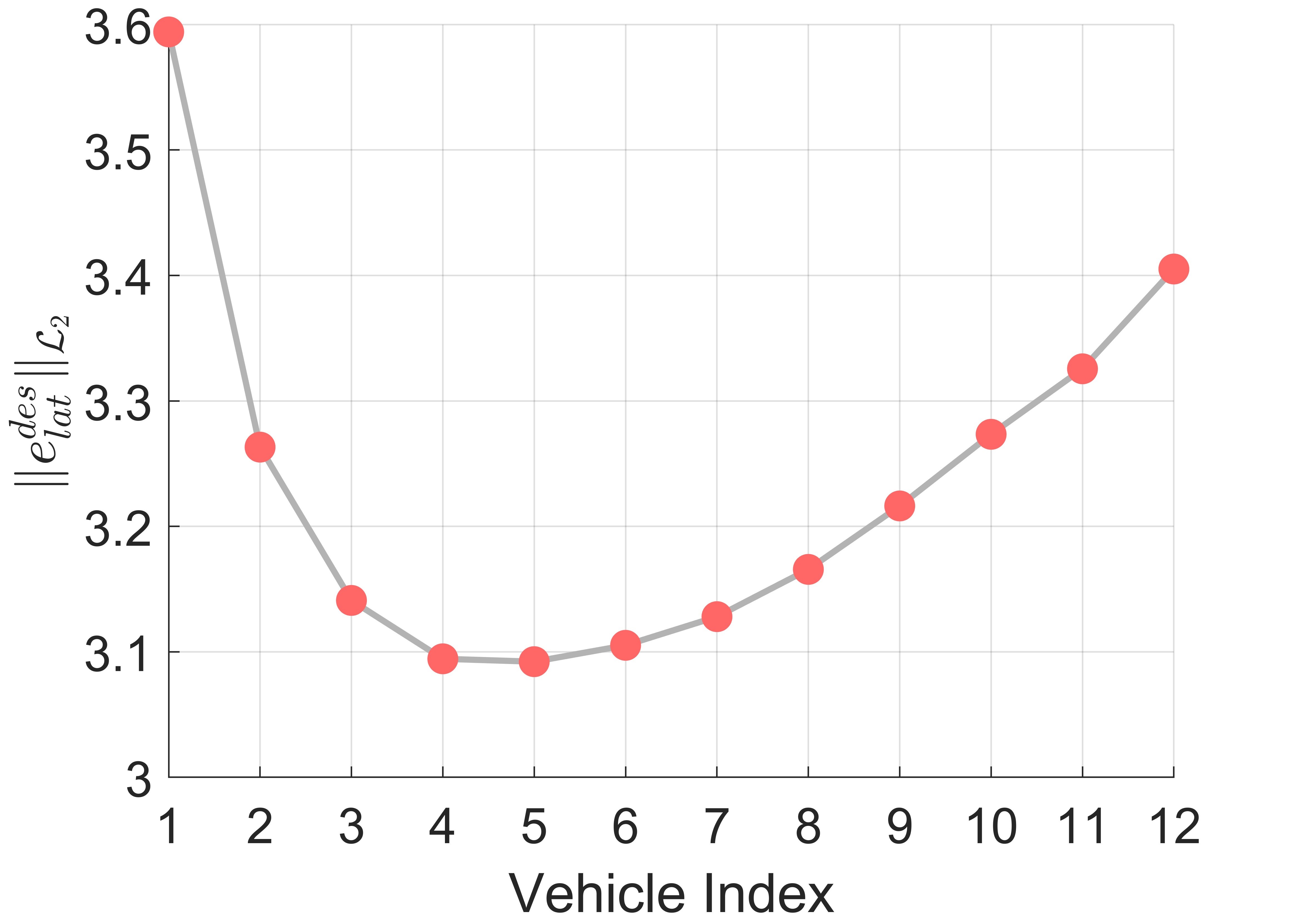}} 
    \subfigure[$\mathcal{L}_2$ norm of $\mathbf{e}^{des}$]{\includegraphics[width=0.35\textwidth]{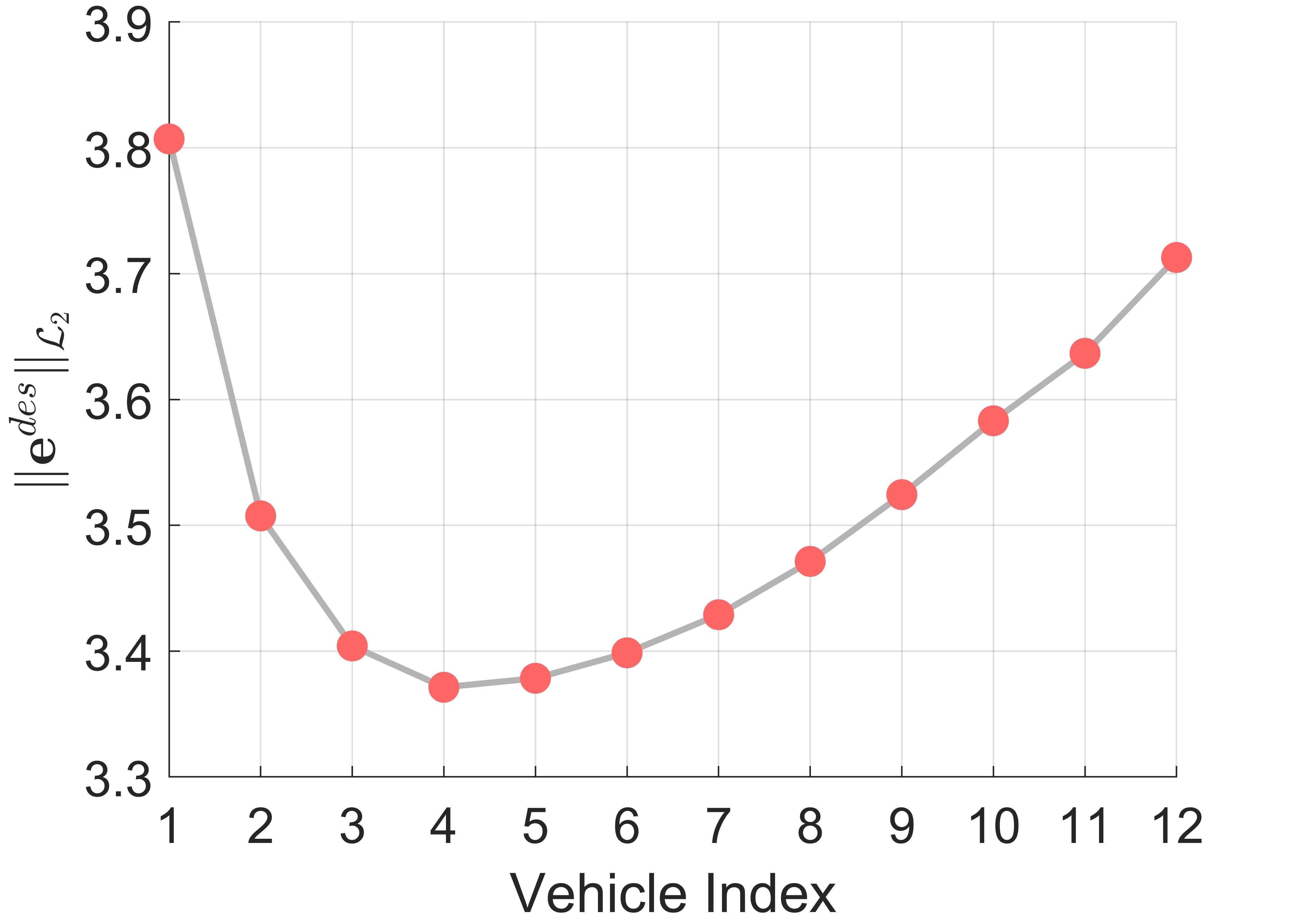}}
    \caption{Quantitative results for the LFP strategy under V2V mode with $\mathbf{K_{LD}}=0$}
    \label{fig: LFP DT K_LD quantitative results}
    \end{figure}

\subsection{Feedback-feedforward strategy under onboard sensing mode}
\label{sec 6.3}
In this subsection, we validate the instability results of the FF strategy under onboard sensing mode, as established in Propositions \ref{prop: FF PT y=E notstable} and \ref{prop: FF PT y=e notstable}. The control strategy switches from LFP to FF, while the values of the common controller gains $k_{e_{lat}}, k_{\tilde{\theta}}, k_{\dot e_{lat}}, k_{\dot{\tilde{\theta}}}$, and $k_{ff}$ are kept identical to those in Table~\ref{tab: controller gains design}. The tracking scheme is changed from DT in V2V mode to PT in onboard sensing mode, meaning that the tracking errors used for control input computation of each following vehicle are calculated relative to the recorded path traveled by its immediate predecessor.

Using the same simulation setup described at the beginning of Section~\ref{sec6}, Fig.~\ref{fig: FF results} shows (a) the traveled paths in the $X$–$Y$ plane, (b) the lateral error $e_{lat}^{des}$ versus arc length $l_d$, and (c) the heading error $\tilde{\theta}^{des}$ versus $l_d$ for all vehicles. Across all subfigures, a pronounced amplification of errors along the platoon is evident, consistent with the theoretical findings in Propositions~\ref{prop: FF PT y=E notstable} and~\ref{prop: FF PT y=e notstable}.
\begin{figure}[h]
    \centering
    \subfigure[traveled paths in $X-Y$ plane]{\includegraphics[width=0.35\textwidth]{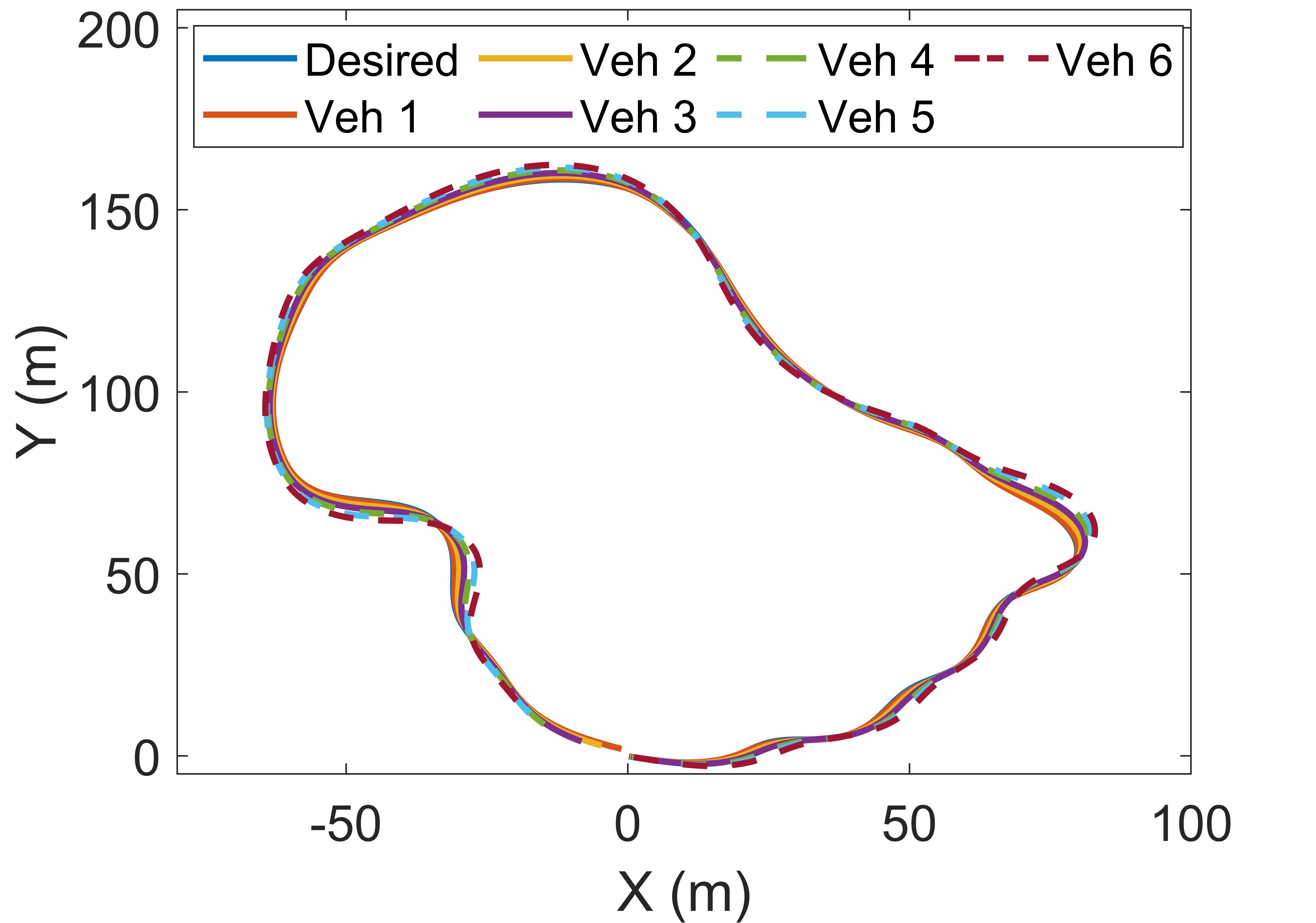}} \\
    \subfigure[lateral error v.s. arc length]{\includegraphics[width=0.35\textwidth]{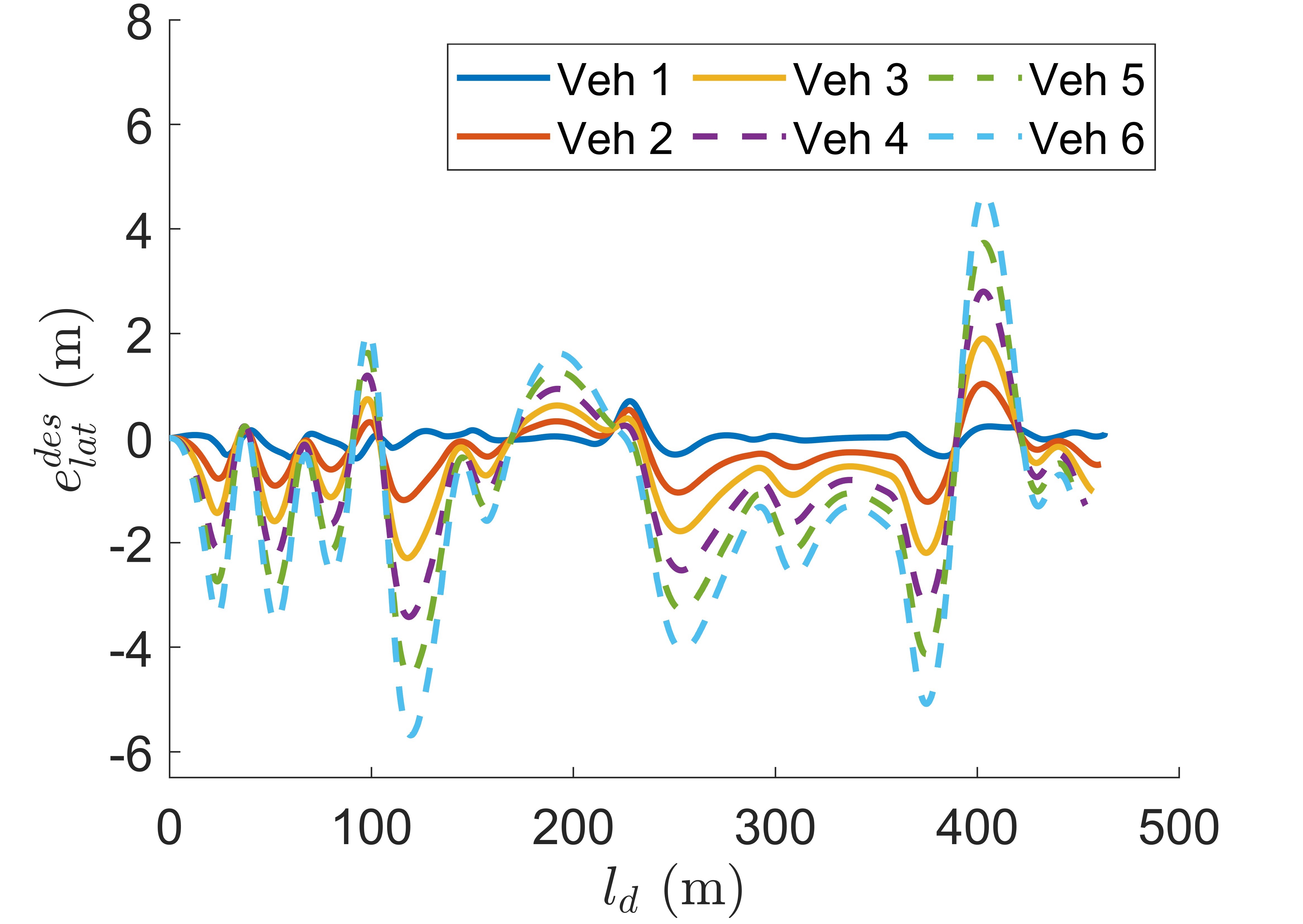}}
    \subfigure[heading error v.s. arc length]
{\includegraphics[width=0.35\textwidth]{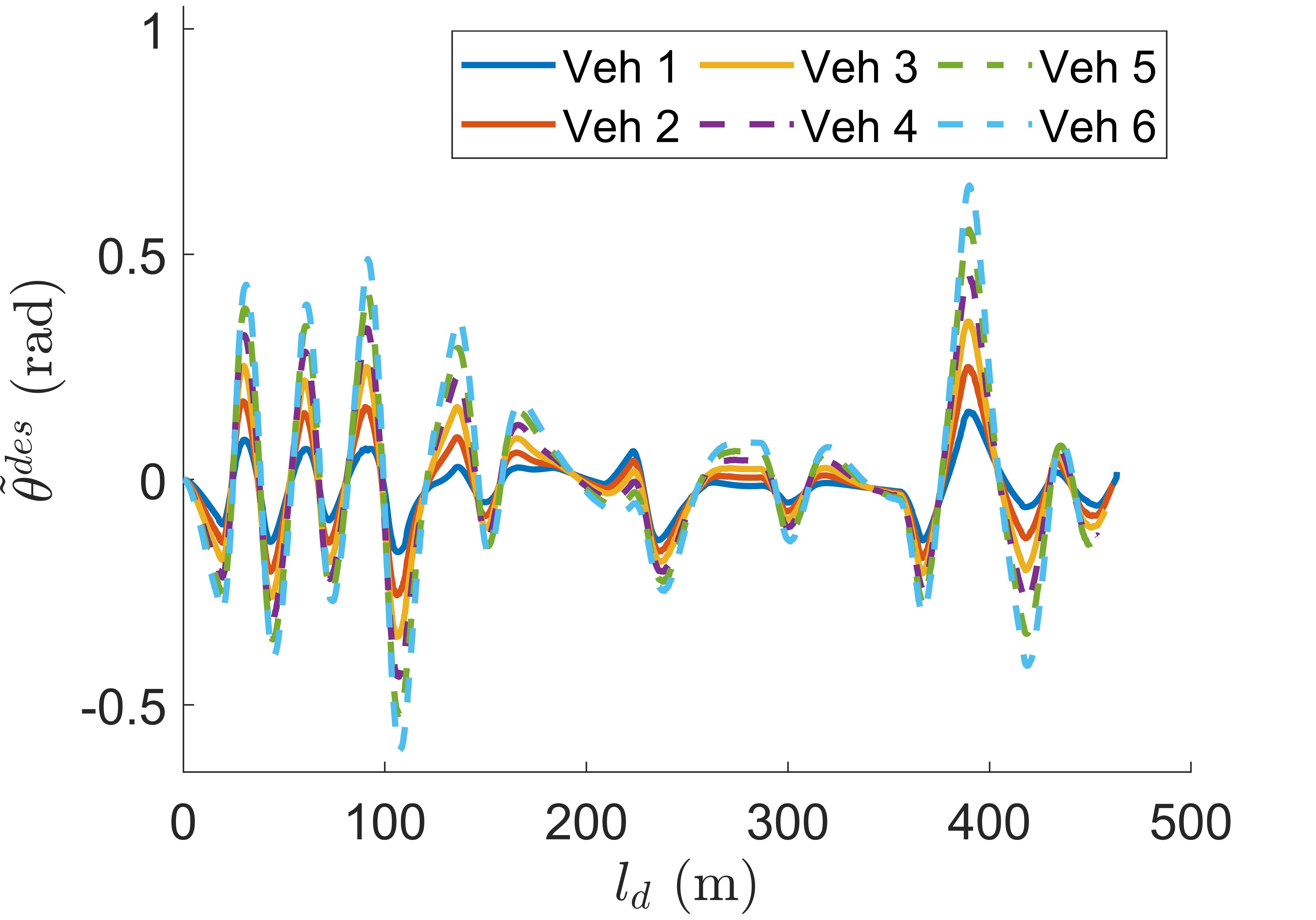}}
    \caption{Simulation results for the FF strategy under onboard sensing mode}
    \label{fig: FF results}
    \end{figure}

Similarly, we extend the platoon size to 12 vehicles and present quantitative results in Fig.~\ref{fig: FF quantitative results}, which plots the $\mathcal{L}_2$ norms of $e_{lat}^{des}$ and $\mathbf{e}^{des}$ in panels (a) and (b), respectively. Both the lateral error norm and the error vector norm exhibit a strict increase from one vehicle to the next, once again confirming the instability results in Propositions~\ref{prop: FF PT y=E notstable} and~\ref{prop: FF PT y=e notstable}.
\begin{figure}[h]
    \centering
    \subfigure[$\mathcal{L}_2$ norm of $e_{lat}^{des}$]{\includegraphics[width=0.35\textwidth]{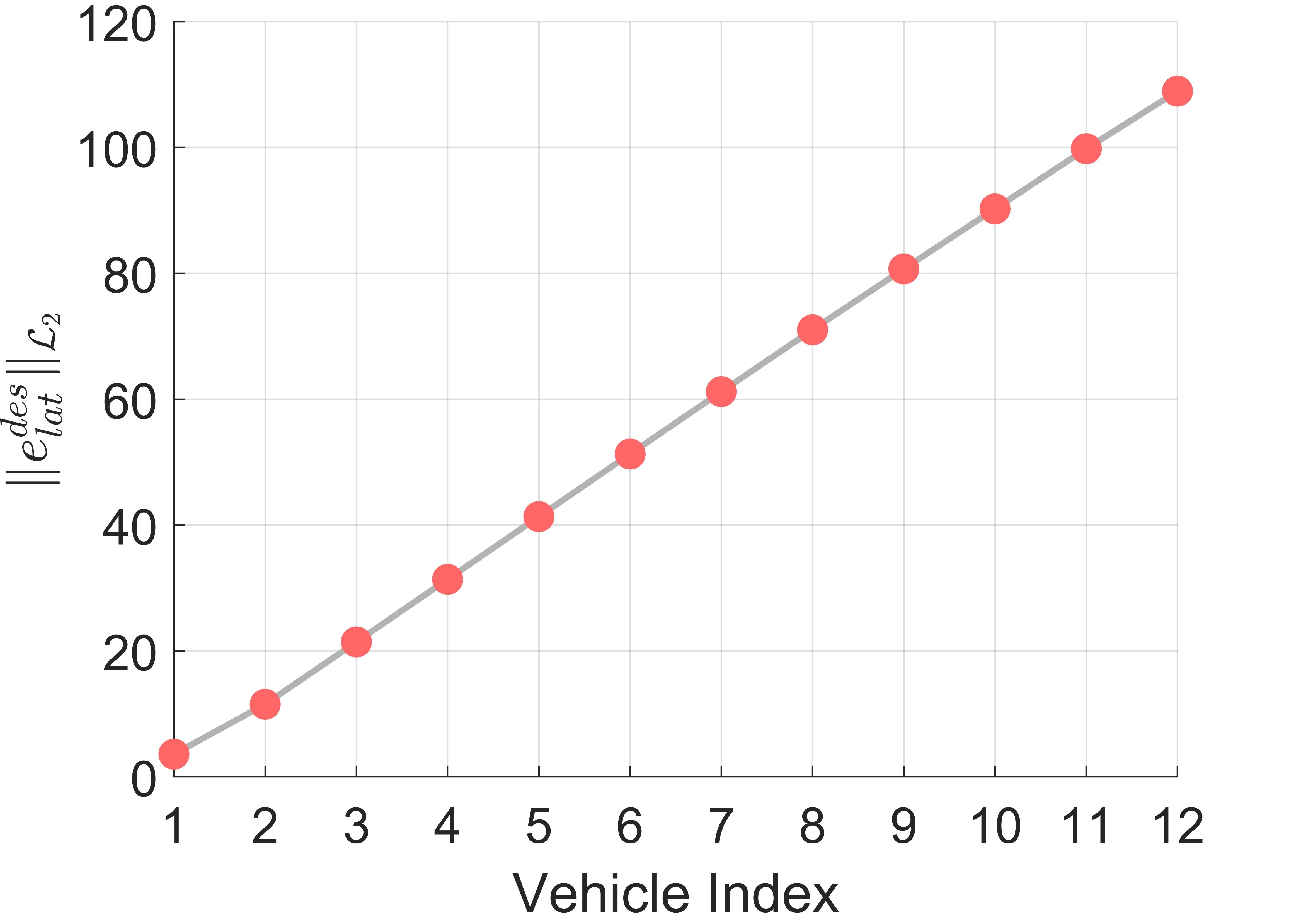}} 
    \subfigure[$\mathcal{L}_2$ norm of $\mathbf{e}^{des}$]{\includegraphics[width=0.35\textwidth]{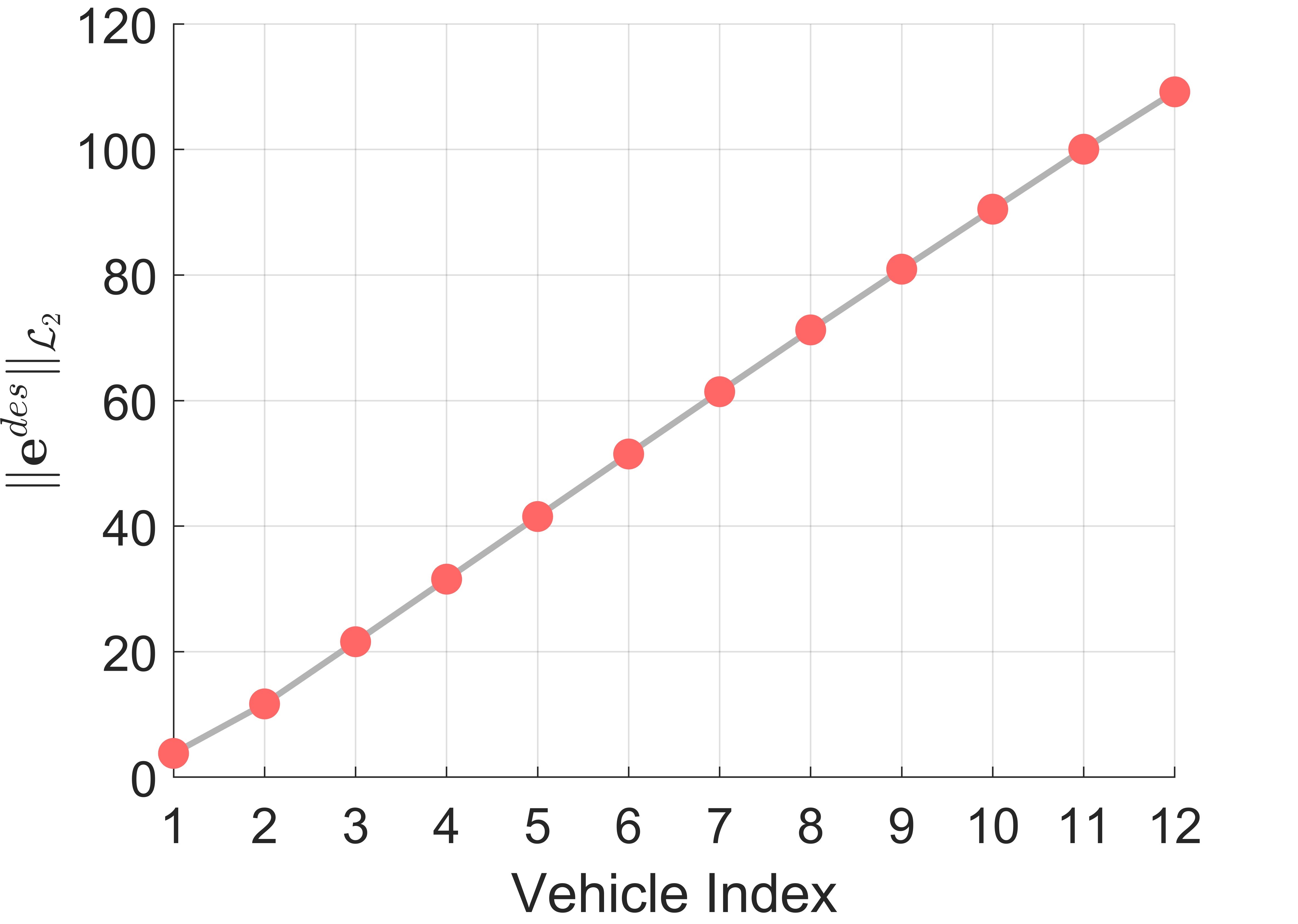}}
    \caption{Quantitative results for the FF strategy under onboard sensing mode}
    \label{fig: FF quantitative results}
    \end{figure}

\section{Conclusion}
\label{sec7}
In summary, this work provided a systematic formulation of the lateral control problem for vehicle platoons using arc-length reparameterization and proposed a formal definition of $\mathcal{L}_2$ lateral string stability. A necessary and sufficient condition of lateral string stability was derived. Within this framework, we examined lateral string stability for two control strategies: a FF strategy using only onboard sensors and an LFP strategy utilizing V2V communication. Both strategies are evaluated with respect to two output selections: the full tracking error vector and the scalar lateral error. Detailed analyses were carried out.

Our analysis revealed that only one of the four cases can achieve $\mathcal{L}_2$ lateral string stability. In particular, the LFP strategy with output $\mathbf{y_i}=e_i^{lat}$ is the sole case that ensures error attenuation rather than amplification along the platoon. These results have important implications for lateral controller design in vehicle platoons, where the onboard sensors of the following vehicles can be occluded. Our findings underscore the critical role of incorporating desired path information and predecessor information into the design. Moreover, our study emphasizes the importance of the derivative learning term in the LFP strategy, with clear guidelines on controller design provided. The findings were validated through comprehensive numerical simulations.


\appendix
\section{Proof of Proposition \ref{prop: FF PT y=e notstable}}
\label{apped A}
From Eq. \eqref{eq: FF PT SISO transfer}, it is evident that the output $e_{lat,2}^{des}$ depends on both inputs $e_{lat,1}^{des}$ and $\tilde{\theta}_1^{des}$, making Theorem \ref{prop: LFP DT frequency-domain condition} inapplicable. However, we can prove that the platoon is not $\mathcal{L}_2$ lateral string stable by finding a counterexample. Consider the case where the desired path is a circular arc, so that $\left(\theta^{des}(l_d)\right)'\equiv c_1$ where $c_1$ is a nonzero constant. Let $e_{lat,1}^{des}(l_d)\equiv0$, meaning that the first vehicle tracks the desired path with no lateral error. In this case, the first vehicle has a non-zero steady-state heading error (see, e.g., Section 3.2 of \citep{rajamani2011vehicle}), that is, $\lim_{l_d\rightarrow\infty}\tilde{\theta}_1^{des}(l_d)=c_2$ where $c_2$ is a non-zero constant. Hence, by the final value theorem, 
\begin{equation}
\label{eq: FF PT theta final value}
\lim_{s\rightarrow0}s\tilde{\theta}_1^{des}(s)=\lim_{l_d\rightarrow\infty}\tilde{\theta}_1^{des}(l_d)=c_2.
\end{equation}
Furthermore, since $e_{lat,1}^{des}(l_d)\equiv0$, we have $e_{lat,1}^{des}(s)=0$. Hence, Eq. \eqref{eq: FF PT SISO transfer} becomes
\begin{equation}
    e_{lat,2}^{des}(s)=H_2(s)\tilde{\theta}_1^{des}(s)
\end{equation}

Here, we recall that $\mathbf{\hat{M}}(s)=s^2v_x^2\mathbf{M} + sv_x\mathbf{C}  + \mathbf{L}$ and $\mathbf{K_{fb}}(s)=\mathbf{K_P} + s v_x \mathbf{K_D}$, defined in Eqs. \eqref{model Laplace} and \eqref{eq: u Laplace FF PT}, respectively. We proceed to evaluate $H_2(s)$ at $s=0$, note that the first equality below holds since there is no pole-zero cancellation at $s=0$:
\begin{align}
H_2(0)&={[1~~0]\left({\mathbf{\hat{M}}(0)}+\mathbf{B}\mathbf{K_{fb}}(0)\right)^{-1}\mathbf{B}\mathbf{K_{fb}}(0)}\begin{bmatrix}
    0 \\ 1
\end{bmatrix}  \notag \\
&= [1~~0] \left(\mathbf{L} + \mathbf{B} \mathbf{K_{P}} \right)^{-1} \mathbf{B} \mathbf{K_{P}}\begin{bmatrix}
    0 \\ 1
\end{bmatrix} \notag \\
&= [1~~0] \begin{bmatrix}
    \tfrac{a  C_f  k_{\tilde{\theta}} - a  C_f + b  C_r}{a  C_f  C_r  k_{e_{lat}} + b  C_f  C_r  k_{e_{lat}}} & \tfrac{ -C_f  k_{\tilde{\theta}}+ C_f+ C_r}{a  C_f  C_r  k_{e_{lat}} + b  C_f  C_r  k_{e_{lat}}} \\[1.5ex]
    -\tfrac{a}{a  C_r+b  C_r} & \tfrac{1}{a  C_r+b C_r} 
\end{bmatrix}
\begin{bmatrix}
        C_f k_{e_{lat}} & C_f k_{\tilde{\theta}} \\[1.5ex]
        a C_f k{e_{lat}} & a C_f k_{\tilde{\theta}}
    \end{bmatrix}
    \begin{bmatrix}
    0 \\ 1
\end{bmatrix}  \notag \\[1.5ex]
&= [1~~0] \begin{bmatrix}
    1 & \tfrac{k_{\tilde{\theta}}}{k_{e_{lat}}} \\[2ex]
    0 & 0
    \end{bmatrix}
    \begin{bmatrix}
    0 \\ 1
\end{bmatrix} \notag \\
&=\frac{k_{\tilde{\theta}}}{k_{e_{lat}}}.
\end{align}
Therefore, by the final value theorem and using Eq. \eqref{eq: FF PT theta final value},
\begin{equation}
    \lim_{l_d\rightarrow\infty}e_{lat,2}^{des}(l_d)=\lim_{s\rightarrow0}se_{lat,2}^{des}(s)=\lim_{s\rightarrow0}sH_2(s)\tilde{\theta}_1^{des}(s)=H_2(0)\lim_{s\rightarrow0}s\tilde{\theta}_1^{des}(s)=\frac{c_2k_{\tilde{\theta}}}{k_{e_{lat}}}\neq0.
\end{equation}
Hence, $e_{lat,2}^{des}(l_d)\nequiv0$ for this counterexample, then in the arc length domain, with zero initial states, we have
\begin{equation}
    \left\|e_{lat,2}^{des}(l_d)\right\|_{\mathcal{L}_2}>\left\|e_{lat,1}^{des}(l_d)\right\|_{\mathcal{L}_2}=0,
\end{equation}
and the condition in Eq. \eqref{eq:strict_L2_string_stability} in Definition \ref{def:l2_lateral_string_stability} is violated.

\printcredits

\bibliographystyle{cas-model2-names}
\bibliography{root}

\end{document}